\renewenvironment{description}[1][0pt]
  {\list{}{\labelwidth=0pt \leftmargin=#1
   }}
  {\endlist}
\tikzstyle{vertex}=[circle, draw, inner sep=0pt, minimum size=6pt]
\newcommand{\vertex}{\node[vertex]}
\title{Another Generic Setting for Entity Resolution: Basic Theory}
\author{Xiuzhan Guo, Arthur Berrill,
Ajinkya Kulkarni,
Kostya Belezko,
Min Luo
}
\date{}
\newcommand{\rw}{\rightarrow}
\newcommand{\rst}[1]{\overline{#1}}
\newcommand{\two}{\ensuremath{{\hbox{\textrm 2}\kern-.25em
        \hbox{\vrule height1.5ex width 0.4pt depth -.2ex}}\kern.2em}\xspace}
\newcommand{\three}{\ensuremath{{\hbox{\textrm 3}\kern-.25em
        \hbox{\vrule height1.5ex width 0.4pt depth -.0ex}}\kern.2em}\xspace}
\newcommand\mdoubleplus{\mathbin{+\mkern-10mu+}}
\newtheorem{theorem}{Theorem}[section]    
\newtheorem{corollary}[theorem]{Corollary}   
\newtheorem{lemma}[theorem]{Lemma}   
\newtheorem{preremark}[theorem]{Remark}   
\newtheorem{prexample}[theorem]{Example}   
\newtheorem{proposition}[theorem]{Proposition}
\newtheorem{definition}[theorem]{Definition}
\newenvironment{remark}{\begin{preremark}\rm}{\end{preremark}}
\newenvironment{example}{\begin{prexample}\rm}{\end{prexample}}
\begin{document}

\maketitle

\begin{abstract}
Benjelloun et al. \cite{BGSWW} considered the Entity Resolution (ER) problem 
as the generic process of matching and merging entity records judged to represent the same 
real world object. They treated the functions for matching and merging entity records as black-boxes
and introduced four important properties (called Idempotence, Commutativity, Associativity, and Representativity) 
that enable efficient generic ER algorithms. 

A partial groupoid is a nonempty set $P$ with a partial operation $\circ$ that is
a partial function $P\times P\rw P$.
An ER system $(\mathbb{E},\approx, \langle\,\rangle)$ is the same as a partial groupoid
$(P,D,\circ)$, where the binary operation $\circ$ is given by 
the merge function $\langle\,\rangle$ and the partiality $D$ of the operation $\circ$ is specified 
by the match function $\approx$.
Given an instance $I$, the merge closure of $I$ turns out  to be the finitely generated partial subgroupoid $[I]$.

In this paper, we shall study the properties which match and merge functions share, 
model matching and merging black-boxes for ER in a partial groupoid,
based on the properties that match and merge functions satisfy,
and show that a partial groupoid provides another generic setting for ER.

The natural partial order on a partial
groupoid is defined  when the partial groupoid satisfies Idempotence and Catenary associativity.
Given a partial order on a partial groupoid, 
the least upper bound and compatibility ($LU_{pg}$ and $CP_{pg}$) properties are equivalent to
Idempotence, Commutativity, Associativity, and Representativity and the partial order must be the natural one we defined
when the domain of the partial operation is reflexive.
The partiality of a partial groupoid can be reduced using connected components and clique covers of its domain graph,
and a noncommutative partial groupoid can be mapped to a commutative one homomorphically
if it has the partial idempotent semigroup like structures.

In a finitely generated partial groupoid $(P,D,\circ)$ without any conditions required, 
the ER we concern is the full elements in $P$.
If $(P,D,\circ)$ satisfies Idempotence and Catenary associativity,
then the ER is the maximal elements in $P$,
which are full elements and form the ER defined in \cite{BGSWW}.
Furthermore, in the case, since there is a transitive binary order, we consider ER as
``sorting, selecting, and querying the elements in a finitely generated partial groupoid."
\end{abstract}



\pagenumbering{arabic}

\section{Introduction}\label{chapter:introduction}
Entity resolution (ER, also known as record linkage, or deduplication, or merge-purge) aims to decide if
two or more entity records in a database or stream represent the same real world
object. It is well known that ER is a crucial task to integrate data from different sources \cite{ccr} and one of the major
impediments affecting data quality in information systems and stream data analysis \cite{sw,cepps}.

Benjelloun et al. considered that in the ER problem, 
the records, determined to represent the same real world entity, are successively located and merged \cite{BGSWW}. 
They formalized the generic ER problem by treating the functions for matching and merging records as black-boxes, which permits expressive and extensible ER solutions. 
They introduced four important properties $\eqref{eqn:ei}$, $\eqref{eqn:ec}$, $\eqref{eqn:ea}$, and $\eqref{eqn:er}$ 
(Idempotence, Commutativity, Associativity, and Representativity, see below for the definitions) that, 
if satisfied by the match and merge functions, enable more efficient generic ER algorithms. 

Assume that $\mathbb{E}$ is a set of entity records. 
In the ER model of Benjelloun et al.  \cite{BGSWW},
a {\em match function} $\approx$ on $\mathbb{E}$ is a Boolean function 
$\approx: \mathbb{E}\times \mathbb{E}\rw \{0,1\}$,
where $\approx(e_1,e_2)=1$ means that $e_1$ matches $e_2$, denoted by $e_1\approx e_2$.
Otherwise, $\approx(e_1,e_2)=0$ means that $e_1$ does not match $e_2$, denoted by 
$e_1\not\approx e_2$.
A {\em merge function} $\langle\;\rangle$, with respect to the match function $\approx$, is a partial function 
$\langle\;\rangle:\mathbb{E}\times\mathbb{E}\rw \mathbb{E}$,
which is defined only on the pairs of matching entity records,
namely, $\langle\;\rangle$ is defined on 
$D\subseteq \mathbb{E}\times\mathbb{E}$, written by $\mbox{domain}(\langle\;\rangle)=D$,
where $(e_1,e_2)\in D$ if and only if $e_1\approx e_2$, and in this case the value $\langle\;\rangle(e_1,e_2)$ is denoted by $\langle e_1,e_2\rangle$.
An {\em ER system} $(\mathbb{E},\approx,\langle\;\rangle)$ consists of a nonempty set $\mathbb{E}$ of entity records,
a binary relation $\approx$ on $\mathbb{E}$ modeling the match function,
and a partial operation $\langle\;\rangle$ on $\mathbb{E}$ modeling the merge function.

The merge function is intended to merge the information of matching entity records together to generate new records
with more information.
As highlighted in \cite{BGSWW}, the merges can lead to new matches
and there might exist the scenarios in which the final match only happens after a number of matches and merges.

An {\em instance} $I$ is a finite subset $I\subseteq \mathbb{E}$.
The {\em merge closure }of an instance $I$, identifying all pairs of matching records in $I$ and merging them,
denoted by $\overline{I}$, is the smallest set $R$ of entity records such that
\begin{enumerate}[$(i)$]
\item
$I \subseteq R$, and
\item
$R$ is merge closed: for all $e_1,e_2\in R$, $e_1\approx e_2$ implies $\langle e_1,e_2\rangle \in R$.
\end{enumerate}

Since the intersection of merge closed subsets $\subseteq \mathbb{E}$ is also merge closed, we have
$$\overline{I}=\bigcap \Big\{R| I\subseteq R, R \mbox{ is merge closed}\Big\}.$$

Given an instance $I$, 
the goal of ER in \cite{BGSWW} was to select the smallest subset $ER(I)$ of the entity records in the merge closure
$\overline{I}$ such that $ER(I)$ has no less information than $\rst{I}$. 
To achieve this, the partial order 
$\leq$ was introduced in \cite{BGSWW}:
$$\mbox{Given two entity records $e_1,e_2\in\mathbb{E}$}, 
e_1\leq e_2\mbox{ if and only if } e_1\approx e_2 \mbox{ and }\langle e_1,e_2\rangle = e_2.$$

Given two sets of entity records
$I_1,I_2\subseteq \mathbb{E}$, $I_1\leq I_2$, called $I_1$ is {\em dominated} by $I_2$, if and only if
for each $e\in I_1$ there is $e'\in I_2$ such that $e\leq e'$.

In \cite{BGSWW}, entity resolution of an instance $I$ was defined as:
\begin{definition}[Definition 2.3 \cite{BGSWW}]
Given an instance $I\subseteq \mathbb{E}$,
{\em an entity resolution} $ER(I)$ of an instance $I$ is 
\begin{equation}\label{eqn:erswoosh}
ER(I)=\mbox{set $I'$ of entity records such that:}\tag{$ER$}
\end{equation}
\begin{enumerate}[$(i)$]
\item
$I'\subseteq \overline{I}$,
\item
$\overline{I}\leq I'$,
\item
$I'$ is minimal up to the conditions $(i)$ and $(ii)$: No strict subset of $I'$ satisfies the conditions $(i)$ and $(ii)$.
\end{enumerate}
\end{definition}

In  \cite{BGSWW}, Benjelloun et al. proved that the entity resolution of
an instance $I$ exists and is unique.

The definitions of $\eqref{eqn:ei}$, $\eqref{eqn:ec}$, $\eqref{eqn:ea}$, and $\eqref{eqn:er}$ in \cite{BGSWW} are as follows:
\begin{description}
\item[{\bf\em Idempotence:}] 
\begin{equation}\label{eqn:ei}
\mbox{For all $e\in\mathbb{E}, e \approx e$ and $\langle e, e\rangle  = e$.}\tag{I} 
\end{equation}
\item[{\bf\em Commutativity:}] 
\begin{equation}\label{eqn:ec}
\mbox{For all $e_1, e_2\in\mathbb{E}, e_1 \approx e_2 $ if and only if $e_2 \approx e_1$, and if $e_1 \approx e_2$
then $\langle e_1, e_2\rangle = \langle e_2, e_1\rangle$.}\tag{C}
\end{equation}
\item[{\bf\em Associativity:}] 
\begin{equation}\label{eqn:ea}
\mbox{For all $e_1 , e_2 , e_3\in\mathbb{E}$, if $\langle e_1 , \langle e_2 , e_3 \rangle\rangle$ 
and $\langle\langle e_1 , e_2\rangle, e_3\rangle$ exist, then }
\langle e_1 , \langle e_2 , e_3 \rangle\rangle=\langle\langle e_1 , e_2\rangle, e_3\rangle.\tag{A}
\end{equation}
\item[{\bf\em Representativity:}] 
\begin{equation}\label{eqn:er}
\mbox{If $e_3 = \langle e_1 , e_2 \rangle$, then for any $e_4$ such that  $e_1 \approx e_4$, $e_3 \approx e_4$.}\tag{R}
\end{equation}
\end{description}

In  \cite{BGSWW}, Benjelloun et al. developed three efficient ER algorithms: 
$R$-Swoosh that exploits $\eqref{eqn:ei}$, $\eqref{eqn:ec}$, $\eqref{eqn:ea}$, and $\eqref{eqn:er}$ properties and performs the comparisons at the granularity records 
(Alg. 3, Proposition 3.3 \cite{BGSWW}),
$F$-Swoosh that exploits $\eqref{eqn:ei}$, $\eqref{eqn:ec}$, $\eqref{eqn:ea}$, and $\eqref{eqn:er}$ properties,
uses feature-level comparisons, and can be more efficient than $R$-Swoosh by avoiding repeated feature comparisons 
(Alg. 4, Proposition 4.1 \cite{BGSWW}),
and the most generic ER algorithm 
$G$-Swoosh that does not require
$\eqref{eqn:ei}$, $\eqref{eqn:ec}$, $\eqref{eqn:ea}$,  and $\eqref{eqn:er}$ properties 
(Algs 1 and 2, Proposition 3.2 \cite{BGSWW} and Theorem 3.1 \cite{BGSWW}).
The interested reader can consult \cite{BGSWW} for the algorithm details.

However, entity records might not be clients and can be other things, such as, data sets, ontologies, documents, graphs, etc. 
Merge functions might be other operations related,
e.g., data set join, ontology merge, document merge, graph merge, etc., respectively. 
The processes of matching and merging are very common,
not only in client record linkage search, but also in data set relationship detection, ontology alignment and merging,
document match and merge, graph similarity search and merge, etc.
It is natural to study the core properties that the match and merge operations share first
and then ask if there is a flexible setting in which the common practices mentioned above 
can be modeled together without any internal details being mentioned.

Algebraic structures, such as, groupoids, monoids, semigroups, groups, rings, etc., have a set of objects and one or more operations. The entity records concerned form a nonempty set and
a merge function can be viewed as a binary operation on the nonempty set.
We now recall the notions of groupoids and semigroups.

A {\em groupoid} is a nonempty set $P$ with a binary operation that is a function from the Cartesian product $P\times P$ to $P$.
A {\em semigroup}, another algebra structure, is a nonempty set $T$ together with an 
{\em associative} binary operation $\circ$ on $T$,
i.e., $\circ$ can be performed regardless of how objects from $T$ are grouped or where the parentheses are put. 
Semigroups have been of particular importance in computer science since the 1950s, 
such as, automatic semigroups
(a finitely generated semigroup equipped with several regular languages over an alphabet representing a generating set \cite{crrt}), 
graph inverse semigroups \cite{ah, MM}, etc.

Some binary operations on a nonempty set $P$ are defined only on a subset of $P\times P$. For example, left\_join in a structured query language is defined on only
the tables that have certain join conditions and a matrix multiplication $M\cdot N$ is defined only when the column number of $M$ is equal to the row number of $N$.
While a nonempty set $P$ with a partial binary operation, a partial function $\circ: P\times P\rw P$, 
is called a {\em partial groupoid}, a nonempty set, with an associative partial binary operation,
is called a {\em partial semigroup}.

The set $\mathbb{E}$ of entity records, along with match function $\approx$ and merge function $\langle\,\rangle$, 
can be viewed as a partial groupoid or semigroup (if $\langle\,\rangle$ is associative) $(\mathbb{E},\approx,\langle\,\rangle)$,
where $\langle\,\rangle$ is a partial binary operation on $\mathbb{E}$ and $\approx$ specifies the partiality of $\langle\,\rangle$. 
Hence match function and merge function in an ER system can be studied together in a partial
groupoid and therefore the related algebraic results (e.g., partial orders and finiteness conditions on semigroups) 
can be applied to ER systems.

To compare the information in entity records,
a partial order was introduced in \cite{BGSWW}.
It is well known that each semigroup can be partially ordered naturally \cite{M}.

Given a finite subset $I$ of entity records, both match function and merge function are applied to $I$
to generate all possible new entity records.
The merge closure $\rst{I}$ of $I$ turns out to be $[I]$, the partial subgroupoid,
generated by $I$, in $\mathbb{E}$, and so
the ER of $I$ turns out to find the specific elements in the finitely generated partial groupoid $[I]$.

To compute matches and merges effectively, it is desired that the steps of matching and merging is finite or the merge closure 
$\overline{I}$ of each finite instance $I$ is finite,
which means that the partial subgroupoid $[I]$ generated by $I$ is finite.
In order to force $[I]$ or $\overline{I}$ to be finite, some conditions are required.
The associativity property, if satisfied, the order in which records are operated not impacting on the final result, can reduce
the number of entity records generated from $I$ significantly. 
A partial groupoid comes to be a partial semigroup if it satisfies an associativity property.

The objectives of this paper are 
\begin{itemize}
\item
to model an entity resolution system as a partial groupoid and study the core properties on the partial groupoid, 
which match and merge operations share,
\item
to study partial orders on a partial groupoid and their interactions with the operations of the partial groupoid,
\item
to provide the ways to reduce partiality of a merge operation and to map a noncommutative partial groupoid that satisfies
certain reasonable conditions, to a commutative one by a partial groupoid homomorphism, and 
\item
to show that partial groupoids, when satisfying certain reasonable conditions, provide another generic setting for modeling
match and merge operations and computing ER systems.
\end{itemize}
The applications of the theory shall be summarized in the sequels to the paper.

The rest of the paper is organized as follows:
First, in Section \ref{chapter:partialsemigroups}, we recall the basic definitions and notations of a groupoid, such as, 
partial groupoid, semigroup,  and partial semigroup,
product of the subsets,
partial groupoid homomorphism, partial groupoid property preserved under partial groupoid homomorphisms,
irreducible generating set
and provide some examples from the different areas to show match and merge operations are very common.

In Section \ref{sect:properties}, we start with the partial groupoid properties
$\eqref{eqn:i}$, $\eqref{eqn:c}$, $\eqref{eqn:a}$, $\eqref{eqn:rl}$, $\eqref{eqn:rr}$, 
and $\eqref{eqn:r}$.
To define a partial semigroup, one must specify one version associativity from many associativity properties.
In this paper, we are interested in the three associative properties: 
$\eqref{eqn:a}$, $\eqref{eqn:cass}$, and $\eqref{eqn:sass}$.
Then we discuss the relationships between these properties,
e.g.,  $\eqref{eqn:cass}$ is equivalent to both $\eqref{eqn:a}$ and $\eqref{eqn:r}$ (Proposition $\ref{prop:ca_a_r}$),
either $\eqref{eqn:sass}$ or $\eqref{eqn:cass}$ implies $\eqref{eqn:a}$ (Proposition $\ref{prop:associativities}$),
these properties are preserved under partial groupoid homomorphisms (Proposition $\ref{prop:preserved}$).

A partial order is a  {\em reflexive, antisymmetric, and transitive} binary relation.
Natural partial orders on a semigroup
have been studied extensively \cite{M}.
Following the approach on semigroups, in Section \ref{sect:posets}, we define the natural partial orders
$\leq_l$, $\leq_r$, and $\leq$ and the maximal elements with respect to the
partial orders on a partial groupoid, respectively.
The maximal elements form the $ER(I)$ of a given instance $I$, defined in \cite{BGSWW}.
Given a partial order on a partial groupoid, to study the interactions 
between the partial order and the partial operation,
we consider the properties $\eqref{eqn:lub}$ and $\eqref{eqn:comp}$
with respect to the partial order and
prove that $\eqref{eqn:lub}$ and $\eqref{eqn:comp}$ are equivalent to
$\eqref{eqn:i}$, $\eqref{eqn:wc}$, $\eqref{eqn:a}$, 
and $\eqref{eqn:r}$ and the partial order is the natural one we defined,
when the domain of the partial operation is reflexive
(Theorem $\ref{thm:partialordersequal}$).

In general, match and non-match labels for entity resolution is highly imbalanced, 
called the {\em class imbalance problem} \cite{FCW, SWL}. 
Hence, given an ER system $(\mathbb{E},\approx,\langle\,\rangle)$ and an entity record $e\in \mathbb{E}$,
$e$ can compose only with a small part of the set $\mathbb{E}$, reflected by the partiality of $\langle\,\rangle$.
From the algebraic view, the partiality of the partial binary operation $\circ$ of a given partial groupoid $(P,D,\circ)$ 
can be measured by its domain $D$, which can be represented by a directed graph 
${\mathcal G}_P$. In Section \ref{sect:domaingraph}, we reduce the partiality of a partial groupoid $(P,D,\circ)$ 
using {\em connected components} and {\em clique covers} of $(P,D,\circ)$.
 
Commutativity property $\eqref{eqn:ec}$ is very important in \cite{BGSWW}. 
In Section \ref{sect:commutativegroupoid}, we show that each partial groupoid, having  idempotent semigroup-like structures, can be converted to a commutative partial groupoid homomorphically.

Since the merge closure $\rst{I}$ of a given instance $I$ can be viewed as a finitely generated groupoid $P$, 
after discussing the core properties and partial orders on a partial groupoid, we are ready to
present entity resolution on a given finitely generated partial groupoid $P$.
To be generic, in Section \ref{chapter:erongroupoids}, entity resolution on a finitely generated partial groupoid is studied, based
on the properties the partial groupoid satisfies.
We start with an arbitrary partial groupoid. In the case, the entity resolution $\eqref{eqn:er0}$ 
can be considered as the {\em full elements} in the partial groupoid.
If a given partial groupoid satisfies $\eqref{eqn:i}$ and $\eqref{eqn:cass}$, then it has a natural order $\leq$
and its entity resolution $\eqref{eqn:er1}$ 
can be considered as its maximal elements, which form the $\eqref{eqn:erswoosh}$ defined in \cite{BGSWW}.
Furthermore, if a finitely generated partial groupoid has a transitive order and is finite, then its entity resolution 
$\eqref{eqn:erpg}$ can be viewed as
{\em sorting, selecting, and querying the elements in a finitely generated partial groupoid}, based on the results 
for a transitive partial order in \cite{dkmrv}.
Finally, we give our concluding remarks.

\section{Preliminaries}\label{chapter:partialsemigroups}
Let $P$ be a nonempty set. A {\em binary operation} $\circ$ on $P$ is a function $\circ: P\times P \rw P$.
A {\em partial binary operation} $\circ$ on $P$ is a partial function $\circ: P\times P \rw P$, 
defined only on a subset $D\subseteq P\times P$.
$D$ is the {\em domain} of the partial operation $\circ$, denoted by $\mbox{domain}(\circ)=D$.
If $(x,y)\in D$, then we say $\circ (x,y)$ is {\em defined}, or $x$ is {\em composable} with $y$,
or $x\circ y$ exists.
The value $\circ(x,y)$ or the composition of $x$ and $y$ is denoted by $x\circ y$.
If $(x,y)\notin D$, then $\circ(x,y)$ is {\em undefined}, written by $x\circ y=\;\uparrow$.

Domain $D$ is {\em reflexive} if $(p,p)\in D$ for all $p\in P$.
$D$ is {\em symmetric} if
\begin{equation}\label{eqn:sym}
(x,y)\in D \Leftrightarrow (y,x)\in D.\tag{$S$}
\end{equation}
A nonempty set $P$, together with a (partial) binary operation $\circ$ on $P$, is called a {\em (partial) groupoid}. 
To emphasize the partiality of the operation $\circ$,
we write a {\em partial groupoid} as a triple $(P, D, \circ)$ or $(P,\sim,\circ)$,
where $P$ is nonempty set, $\circ$ is a partial binary operation on $P$, $D$ is the domain of $\circ$,
and $\sim$ is a binary relation on $P$ specifying $D$: $(x,y)\in D$ if and only if $x\sim y$.
A partial groupoid is also called an {\em magma} \cite{H}. 

Recall that an {\em undirected graph} or a {\em graph} 
consists of a pair $(V, E)$, where $V$ is a (finite) set of objects, called {\em vertices} or {\em nodes},
and $E$ is a set of unordered pairs $\{v_1,v_2\}$, called {\em edges}, $v_1,v_2\in V$.
A {\em directed graph} is a pair $(V,E)$, where $V$ is a (finite) set of objects, called {\em vertices} or {\em nodes},
and $E$ is a set of ordered pairs $(v_1,v_2)$, called {\em edges}, $v_1,v_2\in V$. 
In a directed graph, an edge $e=(v_1,v_2)$ can be viewed as an {\em arrow}, leaving $v_1$ and ending $v_2$, denoted by
$d_0(e)=v_1$ and $d_1(e)=v_2$.

It is well known that there are classical ways to convert between directed graphs and undirected graphs.

Throughout this paper, $(P,D,\circ)$ is a partial groupoid.
We may abbreviate ``$(P,D,\circ)$"  by $P$ or $(P,\circ)$ and omit $\circ$ in ``$x\circ y$" and write as ``$xy$"
when $D$ and $\circ$ are clear from the context.

\begin{example}\label{examp:pargroup}
Here are some examples of groupoids and partial groupoids.
\begin{enumerate}[$1.$]
\item\label{xamp:pargroup1}
Let $a,b,c$ be distinct elements and $P_1=\{a,b,c\}$. Define 
$$\begin{array}{ccccc}
a\cdot a = a, & b\cdot b = b, & c\cdot c= c,&
 a\cdot b=b, & b\cdot c =c,
\end{array}$$ 
and no other compositions are defined.
Then $(P_1,\cdot)$ is a partial groupoid.
\item\label{xamp:pargroup_naturalnumber}
Let $\mathbb{N}$ be the set of all natural numbers. Define
$$m\vee n= \max(m,n)\mbox{ and }m\wedge n=\min(m,n).$$
Then $(\mathbb{N},\vee)$ and $(\mathbb{N},\wedge)$ are groupoids.
\item\label{xamp:pargroup2}
Each ER system $(\mathbb{E},\approx,\langle\;\rangle)$ is a partial groupoid clearly, where
\[ \langle\;\rangle (x,y)=\left\{ \begin{array}{rl} 
                           \langle x,y\rangle, & \mbox{whenever $x\approx y$}\\
                           \uparrow, & \mbox{otherwise.}
                           \end{array}\right. \]

\item\label{xamp:pargroup3}
Let $A=\{a_1,\cdots,a_n,\cdots\}$ be a set of infinitely many entity objects. Define 
$a_i*a_{i+1}=a_{i+2}$
for all positive integers $i$
and no other compositions are defined.
Then $(A,*)$ is a partial groupoid.

\item\label{xamp:path}
Given a directed graph $G=(V(G),E(G))$, a {\em path} in $G$ is  a sequence of edges directed in the same direction which joins a sequence of vertices such that both edges and vertices are distinct. 
More formally, a path in $G$ is given by $[e_1,\cdots,e_m]$, where $e_1,\cdots, e_m\in E(G)$
such that $d_1(e_i)=d_0(e_{i+1})$ for $i=1,\cdots, m-1$, $|\{e_1,\cdots,e_m\}|=m$,
and $|\{d_1(e_1),\cdots,d_1(e_m)\}|=m$, where $d_1(e)=v_2$ and $d_0(e)=v_1$ for each edge $e:v_1\rw v_2$. 

Given two paths $[e_1,\cdots,e_m],[f_1,\cdots, f_n]$,
define $[e_1,\cdots,e_m]\sim_p [f_1,\cdots,f_n]$ if and only if there exists a smallest positive integer $i\leq \min\{m,n\}$ such that 
$$m-i+1\leq n,\;
e_i=f_1,\cdots,e_{m}=f_{m-i+1}.$$
Let ${\cal P}ath(G)$ be the set of all paths in $G$.
The composition of paths $[e_1,\cdots,e_m]$ and $[f_1,\cdots,f_n]$ is defined by
\[ [e_1,\cdots,e_m]\between [f_1,\cdots,f_n]=\left\{ \begin{array}{rl} 
                           [e_1,\cdots,e_i,f_{i+1},\cdots, f_n], & \mbox{if\;\; $[e_1,\cdots,e_m]\sim_p [f_1,\cdots,f_n]$}\\
                           \uparrow, & \mbox{otherwise.}
                           \end{array}\right. \]
Then $({\cal P}ath (G), \sim_p, \between)$ is a partial groupoid.

For example, let $W=\{p_1,\cdots,p_n\}$ be a set of webpages. A {\em click graph} ${\mathcal C}_W$
is the graph with 
$W$ as its vertices. There is an edge from $p_i$ to $p_j$ if a visitor clicks on $p_j$ after viewing $p_i$. Then 
$({\cal P}ath({\mathcal C}_W), \sim_p,\between)$ is a partial groupoid and its objects are 
clickstreams or click paths.  
\item\label{xamp:pargroup6}
Let $T$ be the set of tables in a relational database and $x$ be a column selected.
For all $t_1,t_2\in T$, define
\[ t_1\Join_l t_2=\left\{ \begin{array}{rl}
                          t_1 \;left\_join\; t_2\;on\;t_1\cdot x=t_2\cdot x, & \mbox{if $x$ exists in both $t_1$ and $t_2$}\\
                           \uparrow, & \mbox{otherwise.}
                          \end{array}\right. \]
and
\[ t_1\;\& \;t_2=\left\{ \begin{array}{rl}
    row\;bind\;of\;t_1\;and\;t_2,
     &  \mbox{if $t_1$ and $t_2$ are matched by their columns}\\
                          \uparrow, & \mbox{otherwise.}
                           \end{array}\right. \]
Then both $(T,\Join_l)$ and $(T,\&)$ are partial groupoids.
\item\label{xamp:pargroup_datalineage}
Given a database ${\mathcal D}$ with a set ${\mathcal T}$ of database operations, such as, join $\bowtie$,
$\sigma$, $\alpha$, a {\em data lineage element} in ${\mathcal D}$
consists of a triplet $\langle I,T,O\rangle$, where $T\subseteq {\mathcal T}$ is a sequence of database operations,
$I$ is the set of inputs to $T$, and $O$ is the output after applying the operations in $T$ one by one to $I$.
Let ${\mathcal L}$ be the set of data lineage elements in ${\mathcal D}$. Define
$$
\langle I,T,O\rangle c\langle I',T',O'\rangle
= \left\{ \begin{array}{rl}
(I,T\mdoubleplus T',O'), & \mbox{if} \;\; O=I'\\
\uparrow, & \mbox{otherwise.}\\
\end{array}\right.
$$
Here $\mdoubleplus$ is the sequence concatenation operation.
Then $({\mathcal L}, \leadsto,c)$ is a partial groupoid,
where $(I,T,O)\leadsto(I',T',O')$ if $O=I'$.
\item\label{xamp:pargroupMont}
Ontology has applications in multiple domains to represent relationships among classes and their instances.
An ontology alignment expresses the relations between different
ontologies \cite{HKES}.
Merge of ontologies creates a new ontology by linking up the existing ones and can be captured
by the pushout construction \cite{ZKEH,gbkbl}. Let $\mathbb{O}$ be the category of ontologies and ontology homomorphisms between them.
A $V$-alignment is a {\em span} in $\mathbb{O}$:
$$\xymatrix{
O_1 && O_2\\
& O \ar[lu]^{\rho_1} \ar[ru]_{\rho_2}
}$$
where $O, O_1, O_2$ are ontologies and $\rho_1:O\rw O_1$ and $\rho_2:O\rw O_2$ are ontology homomorphisms.
The pushout $\sqcup$ of a $V$-alignment is a simple description of an ontology merge \cite{ZKEH}, which can be viewed
as the {\em coproduct} in the {\em coslice} category $O/\mathbb{O}$
$(O/\mathbb{O},\,\mbox{v},\,\sqcup)$ is a partial groupoid,
where $O_1\,\mbox{v}\,O_2$ if there is a pair of ontology homomorphisms $\rho_1:O\rw O_1$ and $\rho_2:O\rw O_2$.

\item\label{xamp:mergeoverlap}
Given a set $E$ of entity objects, e.g., data sets, knowledge graphs, documents, etc., for $e_1,e_2\in E$,
define $e_1\smile e_2$ if and only if there is an {\em overlap} between $e_1$ and $e_2$
and $e_1\merge e_2$ is the union of $e_1$ and $e_2$ by identifying and keeping one copy of the overlaps.
Then $(E,\smile, \merge)$ is a partial groupoid.

\item\label{xamp:pargroup4}
Given a groupoid $(P,\cdot)$, each nonempty subset $S$ of $P$ is a partial groupoid:
for all $s_1,s_2\in S$, if $s_1\cdot s_2\notin S$ then $s_1\circ s_2$ is undefined. Otherwise,
$s_1\circ s_2=s_1\cdot s_2$. $(S,\circ)$ is a partial groupoid.
\end{enumerate}
\end{example}

A nonempty set $P$ may associate a set of partial operations.
For instance, a set $D$ of data tables in a database has join, combine, insert, update, Cartesian products, etc.

Let $(P,D,\circ)$ and $(Q,E, \cdot)$ be two partial groupoids. 
A {\em partial groupoid homomorphism} $f:(P,D,\circ)\rw (Q,E,\cdot)$
is a function $f:P\rw Q$ such that 
$$\xymatrix{
D\ar[d]_{\circ} \ar[r]^{f\times f} & E \ar[d]^{\cdot}\\
P\ar[r]^f& Q
}$$
commutes,
namely,
for all $x,y\in P$ if $x\circ y$ is defined then $f(x)\cdot f(y)$ is defined
and $f(x\circ y)=f(x)\cdot f(y)$.

The {\em image} of a partial groupoid homomorphism $f:(P,D,\circ)\rw (Q,E,\cdot)$, denoted by $\mbox{im} (f)$,
is defined to be the partial groupoid
$(f(P),f(D),\cdot)$, where $f(P)=\{f(p)\,|\,p\in P\}$ and $f(D)=\{(f(p),f(q))\,|\,(p,q)\in D\}$.
A property on a partial groupoid $(P, D,\circ)$ is 
{\em preserved or invariant under partial groupoid homomorphisms}
if for each partial groupoid homomorphism $f:(P,D,\circ)\rw (Q,E,\cdot)$
the partial groupoid $\mbox{im}(f)$ satisfies the property.

\begin{definition}
Given a partial groupoid $(P,\circ)$ and a positive integer $n$, the {\em product} of the subsets $I_1,\cdots,I_n$ of $P$ is defined 
as $I_1\cdots I_n\subseteq P$ inductively:
\begin{itemize}
\item
If $n=1$, $I_1\cdots I_n=I_1$.
\item
If $n>1$,
\[ x\in I_1\cdots I_n \mbox{ if and only if }  \left\{ \begin{array}{ll}
                           \exists \;postive \;integer\; k: 1\leq k< n\\
                           \exists \; y\in P:y\in I_1\cdots I_k \\
                           \exists \; z\in P: z\in I_{k+1}\cdots I_n\\
                           \end{array}\right\}x=yz \]
\end{itemize}
\end{definition}
Obviously, the product $I_1\cdots I_n$ is the set of all possible values 
$$x_1\cdots (x_ix_{i+1})\cdots x_n,$$ 
where $x_1\cdots x_i\cdots x_n$ are binarily grouped with the parentheses ``('' and ``)'' being inserted,
$x_i\in I_i, i=1,\cdots,n$.

$I_1\cdots I_n$ can be the singleton set $\{\uparrow\}$ if all possible ways of inserting ``(" and ``)" into 
$x_1\cdots x_i\cdots x_n$ always yield $\uparrow$
for all $x_i\in I_i, i=1,\cdots,n$.

If $I_1=\cdots =I_n=I$, write
$$I_1\cdots I_n=I^n.$$

If each $I_i=\{x_i\}$, write
$$I_1\cdots I_n=x_1\cdots x_n.$$

Throughout this paper, for $x_1,\cdots,x_n\in P$,
$x_1\cdots x_n$ stands for the product of $\{x_1\}\cdots \{x_n\}$.

Each nonempty subset $T$ of a partial groupoid $(P, \circ)$, together with the operation
$\circ'=\circ|_T$,
the restriction of $\circ$ to the set $T$, is a partial groupoid, called a {\em partial subgroupoid} of $(P,\circ)$.
If $T$ is closed with respect to $\circ$, i.e., $T^n\subseteq T$ for all positive integer $n$, then the partial subgroupoid 
$(T,\circ')$ is called a {\em closed partial  subgroupoid}.

Let $M\subseteq P$ and $T$ a closed partial subgroupoid of $(P,\circ)$. $(T,\circ)$ is {\em generated} by $M$ and $M$ is a {\em generating set} of $T$
if $M\subseteq T$ and $T$ is the smallest partial subgroupoid of $(P,\circ)$, containing $M$,
denoted by $T=[M]$. Obviously,
$$[M]=\bigcup_{i=1}^{+\infty}M^i.$$

The generating set $M$ of a partial groupoid $(P,\circ)$ is {\em irreducible} if there does not exist a proper subset of $M$, which is a generating set of $(P,\circ)$.

\begin{theorem}[{\bf 1}.4.9 \cite{LE}]\label{theorem:irreduciblegenset}
If a partial groupoid has a finite generating set $M$, 
then each generating set has an irreducible generating set contained in $M$.
\end{theorem}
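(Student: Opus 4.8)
The plan is to establish the universal statement directly: for an arbitrary generating set $N$ of $P=[M]$ I would produce an irreducible generating set contained in $N$, the stated ``contained in $M$'' form being the special case $N=M$. Since $M$ is finite, the crux is to cut the possibly infinite $N$ down to a finite generating subset before any minimization is attempted. As $N$ generates, $P=[N]=\bigcup_{i\ge 1}N^i$, so each $m\in M$ lies in some $N^{k_m}$; by the definition of the product of subsets this means $m$ equals a suitably bracketed product of finitely many elements of $N$. Collecting, for each $m$, the finite set $N_m\subseteq N$ of elements occurring in one such expression and putting $N_0=\bigcup_{m\in M}N_m$, I obtain a finite subset of $N$, because $M$ and each $N_m$ are finite.

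Next I would check that $N_0$ already generates $P$. By construction $M\subseteq[N_0]$; since $[N_0]$ is a closed partial subgroupoid and $[M]$ is the smallest partial subgroupoid containing $M$, it follows that $P=[M]\subseteq[N_0]\subseteq P$, so $[N_0]=P$. Thus $N_0$ is a finite generating set contained in $N$.

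Finally I would minimize $N_0$ by single-element deletions. The key observation is that any superset of a generating set is again a generating set: from $[S]=P$ and $S\subseteq S'$ one gets $P=[S]\subseteq[S']\subseteq P$. Consequently a proper generating subset $S\subsetneq N_0$ is contained in $N_0\setminus\{x\}$ for any $x\in N_0\setminus S$, so $N_0$ is irreducible precisely when no deletion $N_0\setminus\{x\}$ generates. I would therefore discard elements one at a time, each time choosing an $x$ for which $N_0\setminus\{x\}$ still generates; finiteness of $N_0$ forces this descent to terminate at an irreducible generating set $N'\subseteq N_0\subseteq N$.

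The main obstacle is exactly what a naive minimization of $N$ overlooks: for infinite $N$ the delete-one-element descent may fail to terminate and need not reach a minimal set, so one cannot minimize $N$ directly. Finiteness of $M$ is what repairs this, since expressing the finitely many generators of $M$ uses only finitely many elements of $N$; once the finite generating subset $N_0$ is in hand, the remaining minimization is routine.
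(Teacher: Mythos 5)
Your proof is correct, but note that the paper itself offers no proof to compare against: the statement is imported verbatim as Theorem {\bf 1}.4.9 of Ljapin and Evseev \cite{LE}, and the paper only uses it as a black box to justify assuming irreducible generating sets later on. Judged on its own merits, your two-stage argument is sound and is essentially the standard one. Stage one is the only place where the hypothesis does real work, and you use it correctly: each of the finitely many $m\in M$ lies in $[N]=\bigcup_{i\ge 1}N^i$, so some $N^{k_m}$ contains $m$, and by the paper's inductive definition of the product of subsets a witnessing bracketed expression involves only finitely many elements of $N$; collecting these gives a finite $N_0\subseteq N$ with $M\subseteq[N_0]$, and since $[M]$ is by definition the smallest closed partial subgroupoid containing $M$, $P=[M]\subseteq[N_0]$. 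Stage two is also handled carefully where a sloppier write-up might stumble: the paper defines irreducibility as ``no proper subset generates,'' not ``no one-element deletion generates,'' and your monotonicity observation ($S\subseteq S'$ implies $[S]\subseteq[S']$) is exactly the bridge showing the two coincide, so the greedy descent on the finite $N_0$ terminates at a genuinely irreducible set $N'\subseteq N_0\subseteq N$. You also correctly diagnose both the statement's wording and its content: the phrase ``contained in $M$'' in the theorem as printed is evidently a slip for ``contained in it,'' and your reading --- prove the claim for an arbitrary generating set $N$ and recover the stated form at $N=M$ --- is the right repair; likewise your closing remark pinpoints why finiteness of $M$ is indispensable, since for infinite $N$ the deletion process need not terminate or reach a minimal set. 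One pedantic point you could have flagged: the descent cannot terminate at $\emptyset$, because the paper's partial subgroupoids are nonempty by convention, so the empty set generates nothing; this costs nothing but would make the termination claim airtight.
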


By Theorem \ref{theorem:irreduciblegenset}, for a given finitely generated partial groupoid, we can assume its generating set is irreducible. 

\section{The Partial Groupoid Properties}\label{sect:properties}
By Example $\ref{examp:pargroup}.\ref{xamp:pargroup2}$,
each ER system $(\mathbb{E},\approx,\langle\;\rangle)$
is a partial groupoid.
Therefore, the properties on ER systems, introduced in \cite{BGSWW}, can be considered on partial groupoids.
We consider the following properties on a partial groupoid $(P,D,\circ)$:
\begin{description}
\item[\emph{Idempotence:}] 
For all $p\in P$, 
\begin{equation}\label{eqn:i}
\mbox{$(p,p)\in D$ and $p\circ p = p$.}\tag{$I_{pg}$} 
\end{equation}
Each element is always composable with itself, and the composition of each element with itself
yields the same element.
\item[\emph{Commutativity:}] 
For all $p_1, p_2\in P$, if $(p_1, p_2)\in D$ and $(p_2,p_1)\in D$, then  
\begin{equation}\label{eqn:wc}
p_1\circ p_2 = p_2\circ p_1.\tag{$C_{pg}$}
\end{equation}
Compositions
do not depend on the order of the elements being composed if the compositions exist.
\item[\emph{Strong commutativity:}]
For all $p_1, p_2\in P$,
\begin{equation}\label{eqn:c}
\mbox{$(p_1,p_2)\in D\Leftrightarrow (p_2,p_1)\in D$ and if $(p_1, p_2)\in D$ then }p_1\circ p_2 = p_2\circ p_1.\tag{$SC_{pg}$}
\end{equation}
$D$ is symmetric and compositions
do not depend on the order of the elements if the compositions exist.
\item[\emph{Left representativity:}] 
Let $p_1,p_2\in P$ such that $(p_1,p_2)\in D$. For all $p\in P$ such that
$(p, p_1)\in D$,
\begin{equation}\label{eqn:rl}
(p, p_1\circ p_2)\in D.\tag{$Rl_{pg}$}
\end{equation}
\item[\emph{Right representativity:}] 
Let $p_1,p_2 \in P$ such that $(p_1, p_2)\in D$.
For all $p\in P$ such that  $(p_2, p)\in D$,
\begin{equation}\label{eqn:rr}
(p_1\circ p_2, p)\in D.\tag{$Rr_{pg}$}
\end{equation}
\item[\emph{Representativity:}] 
\begin{equation}\label{eqn:r}
\mbox{$Rl_{pg}$ and $Rr_{pg}$.}\tag{$R_{pg}$}
\end{equation}
The element $p_3$, obtained from composing $p_1$ and $p_2$,
represents the original $p_1$ and $p_2$:
the element $p$ that is composable with $p_1$ will also be composable with $p_3$ from the left side
and the element $p$ that is composable with $p_2$ will also be composable with $p_3$ from the right side.
\end{description}

To define a partial semigroup $(T,D, \circ)$, one must specify one version of associativity properties,
held by the partial binary operation $\circ$, to guarantee that rearranging the parentheses in an expression
$x\circ\cdots  y\circ\cdots  z$ will produce the same result, for all $x,\cdots,y,\cdots,z\in T$. 
It is surprised that there are many ways to ramify associativity properties.
In \cite{LE}, weak associativity, Catenary associativity, intermediate associativity, strong associativity,
strong Catenary associativity, weak Catenary associativity, and $n$-versions for several of these properties
were discussed. 
In this paper, we are interested in the following three associative properties 
$\eqref{eqn:a}$, $\eqref{eqn:cass}$, and $\eqref{eqn:sass}$ on a given partial groupoid
$(P,D,\circ)$.
\begin{description}
\item[\emph{Associativity in \cite{BGSWW}:}]
For all $p_1 , p_2 , p_3\in P$, if $(p_1,p_2)\in D$, $(p_2,p_3)\in D$,
$(p_1,p_2\circ p_3)\in D$, and $(p_1\circ p_2, p_3)\in D$ then
\begin{equation}\label{eqn:a}
p_1\circ (p_2\circ p_3)=(p_1\circ p_2)\circ p_3\neq \,\uparrow.\tag{$A_{pg}$}
\end{equation}
Composing three elements $p_1$, $p_2$, $p_3$ yields the same result regardless of the orders in which $p_1$, $p_2$, $p_3$
are binarily grouped when both $p_1\circ (p_2\circ p_3)$ and $(p_1\circ p_2)\circ p_3$ exist.
\item[\emph{Catenary associativity:}]
For all $p_1, p_2, p_3 \in P$, if $(p_1,p_2)\in D$ and $(p_2,p_3)\in D$ then
\begin{equation}\label{eqn:cass}
p_1\circ (p_2\circ p_3)=(p_1\circ p_2)\circ p_3\neq \,\uparrow.\tag{$CA_{pg}$}
\end{equation}
Composing three elements $p_1$, $p_2$, $p_3$ yields the same result regardless of the orders in which $p_1$, $p_2$, $p_3$
are binarily grouped when both $p_1\circ p_2$ and $p_2\circ p_3$ exist.
\item[\emph{Strong associativity:}]
For all $p_1, p_2, p_3\in P$,
\begin{equation}\label{eqn:sass}
p_1\circ (p_2\circ p_3)=(p_1\circ p_2)\circ p_3\tag{$SA_{pg}$}
\end{equation}
in the sense that both sides of the equation yield the same value or both are simultaneously undefined.
It is equivalent to
either of $(p_1\circ p_2)\circ p_3$ and $p_1\circ (p_2\circ p_3)$ is defined so is the other and they equal.
\end{description}

The relationships between some of the partial groupoid properties above are given in the following two propositions.
\begin{proposition}\label{prop:ca_a_r}
For a given partial groupoid $(P,D,\circ)$,
\begin{enumerate}[$(i)$]
\item
$(P,D,\circ)$ satisfies $\eqref{eqn:wc}$ and $D$ is symmetric
if and only if $(P,D,\circ)$ satisfies $\eqref{eqn:c}$;
\item
$(P,D,\circ)$ satisfies
$\eqref{eqn:cass}$ if and only if  $(P,D,\circ)$ satisfies $\eqref{eqn:a}$ and $\eqref{eqn:r}$.
\end{enumerate}
\end{proposition}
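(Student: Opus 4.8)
The plan is to treat the two parts separately, with part $(i)$ being a direct unfolding of definitions and part $(ii)$ carrying the real content. For part $(i)$, I would argue both implications by inspection. Assuming \eqref{eqn:wc} together with symmetry of $D$, the symmetry clause of \eqref{eqn:c} is immediate, and whenever $(p_1,p_2)\in D$ symmetry gives $(p_2,p_1)\in D$, so \eqref{eqn:wc} applies and yields $p_1\circ p_2=p_2\circ p_1$. Conversely, \eqref{eqn:c} already contains symmetry of $D$, and its commutativity clause applies in particular whenever both $(p_1,p_2)\in D$ and $(p_2,p_1)\in D$, giving \eqref{eqn:wc}. No obstacle arises here beyond keeping track of the two forms of the hypothesis.

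For part $(ii)$, forward direction, I would observe that \eqref{eqn:cass} is logically stronger than \eqref{eqn:a}: its hypotheses $(p_1,p_2)\in D$ and $(p_2,p_3)\in D$ form a subset of the four hypotheses of \eqref{eqn:a}, so \eqref{eqn:a} follows at once. For \eqref{eqn:r}, the key is to read the conclusion ``$\neq\uparrow$'' of \eqref{eqn:cass} as an existence statement. To obtain \eqref{eqn:rl}, given $(p_1,p_2)\in D$ and $p$ with $(p,p_1)\in D$, I apply \eqref{eqn:cass} to the triple $p,p_1,p_2$: its hypotheses hold, so $p\circ(p_1\circ p_2)$ must exist, i.e. $(p,p_1\circ p_2)\in D$. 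Symmetrically, applying \eqref{eqn:cass} to $p_1,p_2,p$ when $(p_2,p)\in D$ forces $(p_1\circ p_2)\circ p$ to exist, giving \eqref{eqn:rr}.

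The backward direction is where the work concentrates. Assuming \eqref{eqn:a} and \eqref{eqn:r}, I take $(p_1,p_2)\in D$ and $(p_2,p_3)\in D$ and must produce the equality asserted in \eqref{eqn:cass}. The point is that \eqref{eqn:a} cannot be invoked directly, since it additionally requires the two ``outer'' composability conditions $(p_1,p_2\circ p_3)\in D$ and $(p_1\circ p_2,p_3)\in D$, which \eqref{eqn:cass} does not assume. This is exactly the gap that representativity closes. I would first apply \eqref{eqn:rl} with the pair $(p_2,p_3)\in D$ and the element $p_1$ (using $(p_1,p_2)\in D$) to obtain $(p_1,p_2\circ p_3)\in D$; then apply \eqref{eqn:rr} with the pair $(p_1,p_2)\in D$ and the element $p_3$ (using $(p_2,p_3)\in D$) to obtain $(p_1\circ p_2,p_3)\in D$. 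With all four hypotheses of \eqref{eqn:a} now in hand, \eqref{eqn:a} delivers $p_1\circ(p_2\circ p_3)=(p_1\circ p_2)\circ p_3\neq\uparrow$, which is precisely \eqref{eqn:cass}.

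The main obstacle is therefore conceptual rather than computational: recognizing that the two representativity clauses are exactly calibrated to supply the two ``outer'' domain memberships that separate the conditional associativity \eqref{eqn:a} from the unconditional catenary version \eqref{eqn:cass}. Once the correct substitutions into \eqref{eqn:rl} and \eqref{eqn:rr} are identified, everything reduces to matching the variable roles against the definitions.
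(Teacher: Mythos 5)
Your proposal is correct and follows essentially the same route as the paper: the forward direction uses \eqref{eqn:cass} to force the existence of the two outer compositions (yielding \eqref{eqn:rl} and \eqref{eqn:rr}) and notes \eqref{eqn:cass}$\Rightarrow$\eqref{eqn:a} trivially, while the backward direction uses \eqref{eqn:r} to supply exactly the two outer domain conditions needed to invoke \eqref{eqn:a}. Your write-up merely makes explicit the variable substitutions that the paper leaves implicit.
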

\begin{proof}
\begin{enumerate}[$(i)$]
\item
It is clear.
\item
``$\Rightarrow$":
For all $p_1,p_2,p_3\in P$, if $(p_1,p_2),(p_2,p_3)\in D$, then, by $\eqref{eqn:cass}$, 
$$(p_1\circ p_2)\circ p_3=p_1\circ (p_2\circ p_3)\neq \;\uparrow$$
Hence $(p_1, p_2\circ p_3)\in D$ and $(p_1\circ p_2,p_3)\in D$ and therefore $\eqref{eqn:r}$ holds.
Clearly, $\eqref{eqn:cass}\Rightarrow \eqref{eqn:a}$.
Hence $\eqref{eqn:cass}$ implies $\eqref{eqn:a}$ and $\eqref{eqn:r}$.

``$\Leftarrow$":
Assume that both $p_1\circ p_2$ and $p_2\circ p_3$ exist. Then, by $\eqref{eqn:r}$,  
both $(p_1\circ p_2)\circ p_3$ and $p_1\circ (p_2\circ p_3)$ exist, and so, by  $\eqref{eqn:a}$,
$(p_1\circ p_2)\circ p_3=p_1\circ (p_2\circ p_3)\neq \; \uparrow$. Therefore, $\eqref{eqn:cass}$ holds true.
\end{enumerate}
\end{proof}

\begin{proposition}\label{prop:associativities}
For a partial groupoid $(P,D,\circ)$,
\begin{enumerate}[$(1)$]
\item
strong associativity $\eqref{eqn:sass}$ implies associativity $\eqref{eqn:a}$;
\item
Catenary associativity $\eqref{eqn:cass}$ implies associativity $\eqref{eqn:a}$.
\end{enumerate}
\end{proposition}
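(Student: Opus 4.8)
The plan is to observe that in both cases the hypotheses of $\eqref{eqn:a}$ are at least as strong as what the stronger associativity property requires, so that each implication reduces to a single direct application with no real computation.

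For part $(2)$, I would fix $p_1,p_2,p_3\in P$ and assume the four domain conditions appearing in the premise of $\eqref{eqn:a}$, namely $(p_1,p_2)\in D$, $(p_2,p_3)\in D$, $(p_1,p_2\circ p_3)\in D$, and $(p_1\circ p_2,p_3)\in D$. In particular $(p_1,p_2)\in D$ and $(p_2,p_3)\in D$ hold, and these are exactly the hypotheses under which $\eqref{eqn:cass}$ applies. Invoking $\eqref{eqn:cass}$ then yields $p_1\circ(p_2\circ p_3)=(p_1\circ p_2)\circ p_3\neq\;\uparrow$, which is precisely the conclusion of $\eqref{eqn:a}$. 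This is the remark already used in passing at the end of the proof of Proposition $\ref{prop:ca_a_r}$.

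For part $(1)$, I would again assume the premise of $\eqref{eqn:a}$. From $(p_1,p_2\circ p_3)\in D$ the composite $p_1\circ(p_2\circ p_3)$ is defined, and from $(p_1\circ p_2,p_3)\in D$ the composite $(p_1\circ p_2)\circ p_3$ is defined. Since $\eqref{eqn:sass}$ asserts that whenever either side of the equation $p_1\circ(p_2\circ p_3)=(p_1\circ p_2)\circ p_3$ is defined the other is defined as well and the two agree, the definedness of one side already forces both composites to exist and to coincide, giving $p_1\circ(p_2\circ p_3)=(p_1\circ p_2)\circ p_3\neq\;\uparrow$. This is again exactly the conclusion of $\eqref{eqn:a}$.

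The main thing to state carefully, rather than a genuine obstacle, is the bookkeeping of hypotheses: $\eqref{eqn:a}$ carries four domain assumptions, whereas $\eqref{eqn:cass}$ needs only two and $\eqref{eqn:sass}$ needs none. In each case the surplus assumptions in the premise of $\eqref{eqn:a}$ are either simply discarded (for $\eqref{eqn:cass}$) or used only to certify the definedness of one of the two composites (for $\eqref{eqn:sass}$). No commutativity, idempotence, or representativity is invoked anywhere, so both implications are essentially logical entailments between the quantified statements.
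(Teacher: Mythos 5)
Your proof is correct and takes essentially the same route as the paper's: part $(2)$ is a direct application of $\eqref{eqn:cass}$ once the inner compositions $p_1\circ p_2$ and $p_2\circ p_3$ are known to be defined (the paper derives their definedness from that of the outer composites, while you read it off the explicitly listed hypotheses of $\eqref{eqn:a}$ --- the same thing, given how $\eqref{eqn:a}$ is stated), and part $(1)$, which the paper dismisses as ``clear,'' is exactly the one-line argument you supply.
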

\begin{proof}
\begin{enumerate}[$(1)$]
\item
It is clear.
\item
For all $x,y,z\in P$, if $(x\circ y)\circ z \neq \;\uparrow$ and $x\circ (y\circ z)\neq \;\uparrow$,
then $x\circ y\neq \;\uparrow$ and $y\circ z \neq \;\uparrow$.
Hence, by $\eqref{eqn:cass}$, $(x\circ y)\circ z=x\circ (y\circ z)\neq \;\uparrow$,
and therefore \eqref{eqn:a} holds true.
\end{enumerate}
\end{proof}
However, Catenary associativity $\eqref{eqn:cass}$ and strong associativity $\eqref{eqn:sass}$ 
in a partial groupoid are mutually independent as shown by examples in \cite{LE} (\textbf{1}.5.7)
and the converses of statements in Proposition \ref{prop:associativities} do not hold,
shown in Example $\ref{example:properties}.\ref{exam1}$ below.

\begin{example}\label{example:properties}
\begin{enumerate}[$(1)$]
\item\label{exam1}
The partial groupoid $(P_1,\cdot)$, defined in Example $\ref{examp:pargroup}.\ref{xamp:pargroup1}$ above, 
satisfies $\eqref{eqn:i}$ clearly.
It also satisfies the associativity $\eqref{eqn:a}$ since
$$(a\cdot b)\cdot c=b\cdot c=c\mbox{ and }
a\cdot (b\cdot c)=a\cdot c=\;\uparrow.$$
But $(P_1,\cdot)$ does not satisfy strong associativity $\eqref{eqn:sass}$ and Catenary associativity $\eqref{eqn:cass}$.
Hence the converses of the statements in Proposition \ref{prop:associativities} do not hold true.
\item
Since identifying and keeping one copy of the overlaps behaves like the set union,
$({\cal P}ath (G), \sim_p,\between)$ and $({\cal P}ath({\mathcal C}_W), \sim_p, \between)$
in Example $\ref{examp:pargroup}.\ref{xamp:path}$,
$(O/\mathbb{O},\,\mbox{v},\,\sqcup)$ in Example $\ref{examp:pargroup}.\ref{xamp:pargroupMont}$,
and $(E,\smile, \merge)$ in Example $\ref{examp:pargroup}.\ref{xamp:mergeoverlap}$,
satisfy $\eqref{eqn:i}$, $\eqref{eqn:c}$, $\eqref{eqn:a}$, $\eqref{eqn:r}$, $\eqref{eqn:cass}$, and $\eqref{eqn:sass}$.
\item
$(\mathbb{N},\vee)$ and $(\mathbb{N},\wedge)$ in Example $\ref{examp:pargroup}.\ref{xamp:pargroup_naturalnumber}$
satisfy $\eqref{eqn:i}$, $\eqref{eqn:c}$, $\eqref{eqn:a}$, $\eqref{eqn:r}$, $\eqref{eqn:cass}$, and $\eqref{eqn:sass}$.
\item
$(A,*)$ in Example $\ref{examp:pargroup}.\ref{xamp:pargroup3}$ satisfies $\eqref{eqn:a}$
but does not satisfy any of $\eqref{eqn:i}$, $\eqref{eqn:c}$, $\eqref{eqn:r}$, $\eqref{eqn:cass}$, and $\eqref{eqn:sass}$.
\item
$(T,\Join_l)$ in Example $\ref{examp:pargroup}.\ref{xamp:pargroup6}$
satisfies $\eqref{eqn:i}$, $\eqref{eqn:a}$, $\eqref{eqn:r}$, $\eqref{eqn:cass}$, and $\eqref{eqn:sass}$
but does not satisfy $\eqref{eqn:c}$.

$(T,\&)$ in Example $\ref{examp:pargroup}.\ref{xamp:pargroup6}$
satisfies $\eqref{eqn:c}$, $\eqref{eqn:a}$, $\eqref{eqn:r}$, $\eqref{eqn:cass}$, and $\eqref{eqn:sass}$
but does not satisfy $\eqref{eqn:i}$.
\item
$(\mathcal{L},\leadsto,c)$ in Example $\ref{examp:pargroup}.\ref{xamp:pargroup_datalineage}$
satisfies $\eqref{eqn:a}$, $\eqref{eqn:r}$, $\eqref{eqn:cass}$, and $\eqref{eqn:sass}$
but does not satisfy $\eqref{eqn:i}$ and $\eqref{eqn:c}$.
\end{enumerate}
\end{example}

These properties are preserved under partial groupoid homomorphisms shown in the following proposition.
\begin{proposition}\label{prop:preserved}
Given a partial groupoid $(P,D,\circ)$,
the properties: $\eqref{eqn:sym}$, $\eqref{eqn:i}$, $\eqref{eqn:wc}$, $\eqref{eqn:c}$, $\eqref{eqn:r}$, $\eqref{eqn:a}$, 
$\eqref{eqn:cass}$, and $\eqref{eqn:sass}$ are preserved under partial groupoid homomorphisms.
\end{proposition}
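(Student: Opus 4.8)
The plan is to check each property directly on the image $\mbox{im}(f)=(f(P),f(D),\cdot)$, using two structural facts. First, $f$ maps $P$ onto $f(P)$, so every element, every domain pair, and every product occurring in $\mbox{im}(f)$ is the $f$-image of the corresponding datum in $(P,D,\circ)$: a pair $(a,b)\in f(D)$ is $(f(p),f(q))$ for some $(p,q)\in D$, and then the commuting square forces $a\cdot b=f(p)\cdot f(q)=f(p\circ q)$. The uniform recipe is thus to take the elements witnessing an instance of the property in $\mbox{im}(f)$, pull them back to witnesses in $(P,D,\circ)$, invoke the property there, and push the resulting membership or equation forward along $f$.

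For the \emph{positive} properties this recipe closes at once, because each instance is witnessed by a single lifted datum. For \eqref{eqn:sym}, a pair $(a,b)\in f(D)$ lifts to $(p,q)\in D$; symmetry of $D$ yields $(q,p)\in D$, hence $(b,a)=(f(q),f(p))\in f(D)$. For \eqref{eqn:i}, writing $a=f(p)$, the facts $(p,p)\in D$ and $p\circ p=p$ push forward to $(a,a)\in f(D)$ and $a\cdot a=f(p\circ p)=f(p)=a$. For strong commutativity \eqref{eqn:c}, the symmetry of $D$ again supplies $(q,p)\in D$ from $(p,q)\in D$, so that a \emph{single} preimage pair serves both orders, and $a\cdot b=f(p\circ q)=f(q\circ p)=b\cdot a$. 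I expect these three to be routine.

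The delicate point, which I expect to be the main obstacle, is weak commutativity \eqref{eqn:wc}, representativity \eqref{eqn:r}, and the three associativity variants \eqref{eqn:a}, \eqref{eqn:cass}, \eqref{eqn:sass}. For each of these the hypothesis of an instance in $\mbox{im}(f)$ names several domain pairs at once, and since $f$ need not be injective, each pair is witnessed by a \emph{separately} chosen preimage: a single $b=f(q_1)=f(q_2)$ may force the middle witness $q_1$ in one pair and a different $q_2$ in another. Yet the property to be invoked in $(P,D,\circ)$ is stated for witnesses sharing common preimages --- \eqref{eqn:wc} needs $(p,q)$ and $(q,p)$ for the same $p,q$; \eqref{eqn:cass} needs $(p,q)$ and $(q,r)$ with the same middle $q$; \eqref{eqn:r} needs the same $p_1$ in $(p_1,p_2)$ and $(p,p_1)$. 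The crux is therefore to reconcile the independently chosen preimages into one coherent lift, exactly the step that \eqref{eqn:c} avoided thanks to the symmetry of $D$. I would isolate this as a lifting lemma and expect the real content of the argument --- or the need for an additional hypothesis such as injectivity of $f$ --- to reside there.
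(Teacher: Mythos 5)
Your treatment of \eqref{eqn:sym}, \eqref{eqn:i}, and \eqref{eqn:c} is correct and coincides with the paper's proof: a single witness pair in $D$ suffices for each instance, and for \eqref{eqn:c} the symmetry of $D$ lets one lifted pair $(p,q)$ serve both orders. (The paper does not treat \eqref{eqn:cass} separately at all; it is presumably meant to follow from Proposition \ref{prop:ca_a_r} once \eqref{eqn:a} and \eqref{eqn:r} are handled.)

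For the five properties you flagged as the crux, you were right to stop there: the paper has no lifting lemma and simply asserts the coherent lift. Its proof of \eqref{eqn:r} reads ``for all $f(p),f(q),f(r)\in f(P)$ such that $(f(p),f(q)),(f(q),f(r))\in f(D)$, we have $(p,q),(q,r)\in D$'', which is exactly the invalid inference you warned about: each membership in $f(D)$ is witnessed by its own preimage pair, and these need not share the middle element when $f$ is not injective. The proofs of \eqref{eqn:wc}, \eqref{eqn:a}, and \eqref{eqn:sass} make the same move implicitly, e.g.\ writing $f(p)\cdot f(q)=f(p\circ q)=f(q\circ p)=f(q)\cdot f(p)$ with no guarantee that $q\circ p$ is defined in $P$. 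Moreover, your suspicion that an additional hypothesis is needed is justified, because the obstruction is real rather than a mere difficulty of proof: take $P=\{p,q,p',q'\}$, $D=\{(p,q),(q',p')\}$, $p\circ q=p$, $q'\circ p'=q'$, so that \eqref{eqn:wc} holds vacuously in $P$; take $Q=\{a,b\}$ with $E=\{(a,b),(b,a)\}$, $a\cdot b=a$, $b\cdot a=b$; and let $f(p)=f(p')=a$, $f(q)=f(q')=b$. Then $f$ is a partial groupoid homomorphism, but $\mbox{im}(f)=\big(\{a,b\},\{(a,b),(b,a)\},\cdot\big)$ has $a\cdot b=a\neq b=b\cdot a$ with both pairs in $f(D)$, so \eqref{eqn:wc} fails in the image; similar examples, in which disjoint chains of $P$ are identified into one chain by $f$, defeat \eqref{eqn:r}, \eqref{eqn:a}, \eqref{eqn:cass}, and \eqref{eqn:sass}. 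So the proposition as stated holds only for the three properties you actually proved; the remaining five require injectivity of $f$, or the stronger assumption that $(f(x),f(y))\in f(D)$ implies $(x,y)\in D$, under which your pull-back--push-forward recipe closes immediately and reproduces the paper's computations verbatim.
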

\begin{proof}
Let $f:(P,D,\circ)\rw (Q,E,\cdot)$ be a partial groupoid homomorphism.

If $D$ is symmetric, clearly $f(D)=\{(f(x),f(y))|(x,y)\in D\}$ is symmetric too.

If $(P,D,\circ)$ satisfies $\eqref{eqn:i}$,
then, for each $p\in P$, $(p,p)\in P$ and $p\circ p=p$. Hence
for each $f(p)\in f(P)$, $p\in P$,
$$(f(p),f(p))=f(p,p)\in f(D)$$
and
$$f(p)\cdot f(p)=f(p\cdot p)=f(p),$$
and so $\mbox{im}(f)=(f(P),f(D),\cdot)$ satisfies $\eqref{eqn:i}$.

Assume that $(P,D,\circ)$ satisfies $\eqref{eqn:wc}$.
For all $f(p),f(q)\in f(P)$, we have $f(p)\cdot f(q)=f(p\circ q)=f(q\circ p)=f(q)\cdot f(p)$.
Hence $\mbox{im}(f)=(f(P),f(D),\cdot)$ satisfies $\eqref{eqn:wc}$.

If $(P,D,\circ)$ satisfies $\eqref{eqn:a}$,
then, for all $f(p),f(q),f(r)\in \mbox{im}(f)$ such that
$f(p)\cdot(f(q)\cdot f(r))$ and $(f(p)\cdot f(q))\cdot f(r)$ exist, we have
$$f(p)\cdot(f(q)\cdot f(r))=f(p\circ (q\circ r))=f((p\circ q)\circ r)=(f(p)\cdot f(q))\cdot f(r).$$
Hence $\mbox{im}(f)$ satisfies $\eqref{eqn:a}$.

Assume that $(P,D,\circ)$ satisfies $\eqref{eqn:r}$. For all $f(p),f(q),f(r)\in f(P)$
such that 
$$(f(p),f(q)), (f(q),f(r))\in f(D),$$
we have
$(p,q),(q,r)\in D$. Hence $(p\circ q,r)\in D$ and therefore
$(f(p)\circ f(q),f(r))\in f(D)$. Thus, $\mbox{im}(f)$ satisfies $\eqref{eqn:rr}$.
Similarly, $\eqref{eqn:rl}$ is preserved and so is $\eqref{eqn:r}$ under partial groupoid homomorphisms.

If $(P,D,\circ)$ satisfies $\eqref{eqn:sass}$, then, for all $f(p),f(q),f(r)\in f(P)$,
$$f(p)\cdot (f(q)\cdot f(r))=f(p\circ(q\circ r))=f((p\circ q)\circ r)=(f(p)\cdot f(q))\cdot f(r).$$
Hence $\mbox{im}(f)$ satisfies $\eqref{eqn:sass}$.
\end{proof}

\section{Natural Partial Orders in Partial Groupoids}\label{sect:posets}
ER merging aims to combine entity records together to obtain more information.
In order to compare entity record information, a partial order on entity records is needed.

Recall that a {\em partial order} on a set $P$ is a  {\em reflexive, antisymmetric, and transitive} binary relation on $P$.
It is well known that for each semigroup $T$,
the set $E_T$ of all idempotents in $T$ is partially ordered by $\leq_i$:
$$e\leq_i f \mbox{ if and only if }e=ef=fe.$$

A partial order on a semigroup is {\em natural} if it is defined by the operation of the semigroup \cite{M}.

We define $\leq_l$, $\leq_r$, and $\leq$ on a partial groupoid  as follows.
\begin{definition}
Let $(P,D,\circ)$ be a partial groupoid and $p,q\in P$. Define
\begin{enumerate}[$(i)$]
\item
$p\leq_r q$ if and only if $p q=q$;
\item
$p\leq_l q$ if and only if $q p=q$;
\item
$p\leq q$ if and only if $p q=q p=q$.
\end{enumerate}
\end{definition}
Clearly, $\leq$ is the dual of $\leq_i$, when being restricted to the the set of all idempotents of $P$.
The definitions make sense by the following lemma.
\begin{lemma}\label{lemma:partialorders}
Given a partial groupoid $(P,D,\circ)$
\begin{enumerate}[$(i)$]
\item
$\leq$ is antisymmetric;
\item
If $(P,D,\circ)$ satisfies $\eqref{eqn:i}$,
then $\leq_l$, $\leq_r$, and $\leq$ are reflexive;
\item
If $(P,D,\circ)$ satisfies $\eqref{eqn:cass}$,
then $\leq_l$, $\leq_r$, and $\leq$ are transitive.
\end{enumerate}
\end{lemma}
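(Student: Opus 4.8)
The plan is to verify the three claimed properties separately, working directly from the definitions of $\leq_l$, $\leq_r$, and $\leq$ and invoking the hypotheses only where needed. Each part is a short diagram-chase through the partial operation, so the main subtlety throughout is bookkeeping about \emph{existence} of compositions: since $\circ$ is partial, I must be careful that each product I write down is actually defined before I manipulate it, and the hypotheses $\eqref{eqn:i}$ and $\eqref{eqn:cass}$ are exactly what guarantee this.

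For antisymmetry of $\leq$ (part $(i)$), suppose $p\leq q$ and $q\leq p$. By definition $p\leq q$ gives $pq=qp=q$ and $q\leq p$ gives $qp=pq=p$. Since both $pq$ and $qp$ are asserted to exist and equal $q$ in the first relation and $p$ in the second, I immediately get $p=qp=q$, so $p=q$. No hypothesis beyond the definition is needed here, which matches the fact that $(i)$ is stated unconditionally. For reflexivity (part $(ii)$), I would use $\eqref{eqn:i}$ directly: for each $p\in P$ we have $(p,p)\in D$ and $p\circ p=p$. Reading this through the three definitions, $pp=p$ says simultaneously $p\leq_r p$ (since $pp=p$), $p\leq_l p$ (since $pp=p$), and $p\leq p$ (since $pp=pp=p$). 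The only point to note is that Idempotence supplies the crucial fact that $(p,p)\in D$, so the composition $p\circ p$ genuinely exists and the equations are not vacuous.

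Transitivity (part $(iii)$) is where the real work and the main obstacle lie, and it is the reason Catenary associativity $\eqref{eqn:cass}$ is hypothesized. Take $\leq_r$ as the representative case: suppose $p\leq_r q$ and $q\leq_r r$, i.e.\ $pq=q$ and $qr=r$, where in particular $(p,q)\in D$ and $(q,r)\in D$. To conclude $p\leq_r r$ I need $pr=r$. The idea is to compute $pr$ by rewriting $r$ as $qr$ and then regrouping: $pr = p(qr) = (pq)r = qr = r$. The middle equality $p(qr)=(pq)r$ is precisely $\eqref{eqn:cass}$ applied to the triple $p,q,r$, which is legitimate because both $(p,q)\in D$ and $(q,r)\in D$ hold, and $\eqref{eqn:cass}$ then guarantees that both groupings exist, are equal, and are not $\uparrow$. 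This last point is the crux: without Catenary associativity I could not even assert that $pr=p(qr)$ is defined, so the partiality is handled entirely by invoking $\eqref{eqn:cass}$ with the two hypotheses $(p,q),(q,r)\in D$ as its trigger. The argument for $\leq_l$ is the mirror image, using $rq=r$ and $qp=q$ to compute $rp=(rq)p=r(qp)=rq=r$, again with $\eqref{eqn:cass}$ supplying existence and equality of both groupings. Finally, $\leq$ is the conjunction $p\leq q \iff p\leq_l q$ and $p\leq_r q$, so its transitivity follows immediately by combining the transitivity of $\leq_l$ and $\leq_r$; no new computation is required, I simply run both preceding chases in parallel.

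I expect the entire lemma to be essentially routine once the existence-of-composition obligations are tracked carefully, and I would present it as three short verifications, emphasizing at the transitivity step that the hypotheses $(p,q)\in D$ and $(q,r)\in D$ are exactly the premises $\eqref{eqn:cass}$ demands, so that the regrouping $p(qr)=(pq)r$ and the non-$\uparrow$ conclusion come for free. The only thing to guard against is silently assuming a product exists; every equality in the chain should be justified by either a hypothesis of the form $xy=z$ (which presupposes $(x,y)\in D$) or by an application of $\eqref{eqn:cass}$.
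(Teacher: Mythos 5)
Your proposal is correct and follows essentially the same route as the paper's proof: antisymmetry and reflexivity read directly off the definitions and $\eqref{eqn:i}$, and transitivity uses the same rewriting trick ($r=qr$, then regroup via $\eqref{eqn:cass}$, whose hypotheses $(p,q),(q,r)\in D$ are exactly available), with your explicit tracking of existence of compositions matching the paper's implicit use of the non-$\uparrow$ clause in $\eqref{eqn:cass}$. The only cosmetic differences are that you take $\leq_r$ rather than $\leq_l$ as the representative case and derive transitivity of $\leq$ from the conjunction of $\leq_l$ and $\leq_r$, where the paper says ``similarly.''
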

\begin{proof}
\begin{enumerate}[$(i)$]
\item
If $p\leq q$ and $q\leq p$, then
$$p=pq=qp\mbox{ and }q=pq=qp$$ 
and so $p=pq=q$.
Hence $\leq$ is also antisymmetric.
\item
Since $(P,D,\circ)$ satisfies $\eqref{eqn:i}$,
$\leq_l$, $\leq_r$, and $\leq$ are reflexive as $p=p p$ for all $p\in P$.
\item
If $p\leq_l q$ and $q\leq_l r$, then $qp=q$ and $rq=r$. Hence
\begin{eqnarray*}
r   & = & r(qp)\\
   & = & (rq)p\quad(\mbox{by $\eqref{eqn:cass}$})\\
   & = & rp,
\end{eqnarray*}
and therefore $p\leq_l r$. Thus, $\leq_l$ is transitive.
Similarly, $\leq_r$ and $\leq$ are transitive.
\end{enumerate}
\end{proof}

\begin{remark}\label{remark:partialorders}
The condition ``the associativity $\eqref{eqn:cass}$" in Lemma \ref{lemma:partialorders} above
is necessary and cannot be replaced by the associativity $\eqref{eqn:a}$. 

For example, let $(P_1,\cdot)$ be the idempotent partial groupoid defined in 
Example $\ref{examp:pargroup}.\ref{xamp:pargroup1}$.
Then $(P_1,\cdot)$ satisfies $\eqref{eqn:a}$ since
$(a\cdot b)\cdot c=c$ and $a\cdot (b\cdot c)=\;\uparrow$. 
Clearly, $a\leq_r b$ and $b\leq_rc$ but $a \nleq_r c$. 
That is, $\leq_r$ is not transitive.
\end{remark}

Compositions have more information, shown in the following lemma.
\begin{lemma}\label{lemma:comporder}
In a partial groupoid satisfying $\eqref{eqn:i}$ and $\eqref{eqn:cass}$, if $pq$ exists, then
$q\leq_l pq$ and $p\leq_r pq$.
\end{lemma}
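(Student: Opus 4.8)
The goal is to show that in a partial groupoid satisfying Idempotence $\eqref{eqn:i}$ and Catenary associativity $\eqref{eqn:cass}$, whenever $pq$ exists we have $q\leq_l pq$ and $p\leq_r pq$. Recall the definitions: $q\leq_l pq$ means $(pq)q = pq$, and $p\leq_r pq$ means $p(pq) = pq$. So the plan is to verify these two equations directly, using Idempotence to supply the self-composable pairs and Catenary associativity to regroup.

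For $p\leq_r pq$: I need to show $p(pq)=pq$. The natural regrouping is $p(pq) = (pp)q$. This requires Catenary associativity applied to the triple $(p,p,q)$. To invoke $\eqref{eqn:cass}$ I need $(p,p)\in D$ and $(p,q)\in D$. The first holds by $\eqref{eqn:i}$, and the second holds by hypothesis (since $pq$ exists). Then $\eqref{eqn:cass}$ gives $p(pq)=(pp)q \neq\;\uparrow$, and by $\eqref{eqn:i}$, $pp=p$, so $(pp)q = pq$, yielding $p(pq)=pq$, i.e. $p\leq_r pq$. Note that $\eqref{eqn:cass}$ guarantees both groupings are defined and equal, so I do not need to worry separately about whether $p(pq)$ is defined.

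For $q\leq_l pq$: I need to show $(pq)q = pq$. Here the regrouping is $(pq)q = p(qq)$, which is Catenary associativity applied to the triple $(p,q,q)$. To invoke $\eqref{eqn:cass}$ I need $(p,q)\in D$ and $(q,q)\in D$; the former is the hypothesis, the latter is Idempotence. Then $\eqref{eqn:cass}$ gives $(pq)q = p(qq)\neq\;\uparrow$, and since $qq=q$ by $\eqref{eqn:i}$, we get $p(qq)=pq$, hence $(pq)q=pq$, i.e. $q\leq_l pq$. Both parts are symmetric in structure, differing only in which factor is doubled.

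The reasoning here is entirely routine and there is no real obstacle; the only point requiring a little care is confirming that the hypotheses of $\eqref{eqn:cass}$ are genuinely met at each invocation — that the two adjacent compositions $(p,p)$/$(p,q)$ and $(p,q)$/$(q,q)$ both lie in $D$ before appealing to Catenary associativity. This is where Idempotence does its work, furnishing the diagonal pairs $(p,p),(q,q)\in D$, while the standing assumption that $pq$ exists furnishes $(p,q)\in D$. Once those domain memberships are in hand, $\eqref{eqn:cass}$ delivers both the equality and the definedness of the reassociated product in one step, and a final application of $\eqref{eqn:i}$ collapses the repeated factor.
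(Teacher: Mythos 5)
Your proof is correct and is essentially the paper's own argument: the paper likewise computes $(pq)q = p(qq) = pq$ and $p(pq) = (pp)q = pq$ using $\eqref{eqn:i}$ and $\eqref{eqn:cass}$, with your version merely spelling out the domain checks explicitly.
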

\begin{proof}
By $\eqref{eqn:i}$ and $\eqref{eqn:cass}$, we have
$$
(pq)q  =  p(qq) 
= pq
$$
and
$$
p(pq)  = (pp)q
=  pq.
$$
Hence
$q\leq_l pq$ and $p\leq_r pq$.
\end{proof}

Maximal elements are defined by:
\begin{definition}
In a partial groupoid, $m$ is {\em maximal with respect to} $\leq_r \,(\leq_l \mbox{ or } \leq)$ if
$m\leq_r n\,(m\leq_l n\mbox{ or }\,m\leq n)$ implies $n\leq_r m\,(n\leq_l m\mbox{ or } n\leq m)$. 
\end{definition}

The binary relations $\leq_l$, $\leq_r$, and $\leq$ have the same maximal elements shown in the following Lemma \ref{lemma:maximal}
and Proposition \ref{prop:maxelement}.
\begin{lemma}\label{lemma:maximal}
In a partial groupoid satisfying  $\eqref{eqn:i}$,  $\eqref{eqn:sym}$, and $\eqref{eqn:cass}$, the following are equivalent:
\begin{enumerate}[$(i)$]
\item\label{item:lemma_max_1}
$m$ is maximal with respect to $\leq_r$;
\item\label{item:lemma_max_2}
$m$ is maximal with respect to $\leq_l$;
\item\label{item:lemma_max_3}
$m$ is maximal with respect to $\leq$.
\end{enumerate}
\end{lemma}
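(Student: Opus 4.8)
The plan is to establish the two equivalences $(i)\Leftrightarrow(iii)$ and $(ii)\Leftrightarrow(iii)$; together these give the three-way equivalence. In each pair one implication is routine and the other carries the substance. For the routine directions, observe that $m\leq n$ unpacks as $mn=nm=n$. If $m$ is maximal with respect to $\leq_r$, then from $mn=n$ we read off $m\leq_r n$, so maximality yields $n\leq_r m$, i.e.\ $nm=m$; comparing this with $nm=n$ forces $m=n$, whence $n\leq m$ by reflexivity (Lemma \ref{lemma:partialorders}, via \eqref{eqn:i}). Thus $m$ is maximal with respect to $\leq$, giving $(i)\Rightarrow(iii)$, and $(ii)\Rightarrow(iii)$ is the mirror image, extracting $m\leq_l n$ from $nm=n$.

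The content lies in $(iii)\Rightarrow(i)$. Assume $m$ is maximal with respect to $\leq$ and suppose $m\leq_r n$, i.e.\ $mn=n$. Since $(m,n)\in D$, symmetry \eqref{eqn:sym} gives $(n,m)\in D$, so the candidate witness $nm$ is defined. I would then check that $m\leq nm$ by evaluating $m(nm)$ and $(nm)m$: Catenary associativity \eqref{eqn:cass} applied to $(m,n),(n,m)\in D$ gives $m(nm)=(mn)m=nm$ (using $mn=n$), while \eqref{eqn:cass} applied to $(n,m),(m,m)\in D$ together with \eqref{eqn:i} gives $(nm)m=n(mm)=nm$. Hence $m(nm)=(nm)m=nm$, that is, $m\leq nm$. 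Maximality of $m$ with respect to $\leq$ then forces $nm\leq m$, so in particular $m(nm)=m$; comparing with $m(nm)=nm$ yields $nm=m$, i.e.\ $n\leq_r m$, so $m$ is maximal with respect to $\leq_r$. The implication $(iii)\Rightarrow(ii)$ follows by the left--right dual: from $nm=n$ one builds the witness $mn$, verifies $m\leq mn$ the same way, and concludes $mn=m$.

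The main obstacle is precisely this construction of a $\leq$-witness sitting above an element that is only known to dominate $m$ one-sidedly, and the bookkeeping of which products are defined. Symmetry \eqref{eqn:sym} is exactly what guarantees $nm$ exists once $mn$ does, and it is Catenary associativity \eqref{eqn:cass}, rather than the weaker \eqref{eqn:a}, that licenses the two reassociations: we only know the adjacent products $mn$, $nm$, and $mm$ to be defined, not both full bracketings in advance, so \eqref{eqn:a} would not apply (cf.\ Remark \ref{remark:partialorders}). Once the witness $nm$ is in hand, collapsing $m\leq nm$ and $nm\leq m$ to the equality $nm=m$ is immediate, and the remaining implications reduce to the routine extractions above.
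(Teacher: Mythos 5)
Your proposal is correct and follows essentially the same route as the paper's proof: the same decomposition into $(i)\Leftrightarrow(iii)$ and $(ii)\Leftrightarrow(iii)$, the same routine extraction of one-sided domination for the easy directions, and the same key step for $(iii)\Rightarrow(i)$ of using \eqref{eqn:sym} to define the witness $nm$ and \eqref{eqn:cass} together with \eqref{eqn:i} to verify $m\leq nm$ before invoking maximality. Your explicit remark on why \eqref{eqn:cass} rather than \eqref{eqn:a} is needed is a nice touch, but the argument itself is the paper's.
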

\begin{proof}
$(\ref{item:lemma_max_1}) \Rightarrow (\ref{item:lemma_max_3})$:
Suppose that $m$ is maximal with respect to $\leq_r$ and that $m\leq n$.
Then $n = mn=nm$ 
and so $m\leq_r n$. Hence $n\leq_r m$ and therefore $n=nm=m$.
Thus, $m$ is maximal with respect to $\leq$.

$(\ref{item:lemma_max_3}) \Rightarrow (\ref{item:lemma_max_1})$:
Assume that $m$ is maximal with respect to $\leq$ and $m\leq_r n$.
Then
$n=mn$ and so $nm$ exists by $\eqref{eqn:sym}$. Thus,
\begin{eqnarray*}
m(nm) & = & (mn)m \quad(\mbox{by  $\eqref{eqn:cass}$})\\
& = & nm\\
& = & (nm)m.\\
\end{eqnarray*}
Hence $m\leq nm$ and therefore $m=nm$ as $m$ is maximal with respect to $\leq$.
Thus, $n\leq_r m$. So $m$ is maximal with respect to $\leq_r$.
Therefore, $(\ref{item:lemma_max_3}) \Rightarrow (\ref{item:lemma_max_1})$.

$(\ref{item:lemma_max_3}) \Leftrightarrow (\ref{item:lemma_max_2})$ is similar to 
$(\ref{item:lemma_max_3}) \Leftrightarrow (\ref{item:lemma_max_1})$.
\end{proof}

Let
$$M_l(P)=\{\mbox{all left maximal elements in partial groupoid }P\},$$
$$M_r(P)=\{\mbox{all right maximal elements in partial groupoid }P\},$$
and
$$M(P)=\{\mbox{all maximal elements in partial groupoid }P\}.$$
By Lemma \ref{lemma:maximal}, we have:
\begin{proposition}\label{prop:maxelement}
In a partial groupoid $(P,D,\circ)$ satisfying $\eqref{eqn:sym}$, $\eqref{eqn:i}$, and $\eqref{eqn:cass}$,
$M_l(P)=M_r(P)=M(P).$
\end{proposition}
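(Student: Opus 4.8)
The plan is to obtain this proposition as an immediate corollary of Lemma \ref{lemma:maximal}. The first thing I would observe is that the three sets in question are, by their very definitions, exactly the collections of elements satisfying the three conditions compared in that lemma: $M_r(P)$ consists of the elements maximal with respect to $\leq_r$ (condition $(\ref{item:lemma_max_1})$), $M_l(P)$ consists of those maximal with respect to $\leq_l$ (condition $(\ref{item:lemma_max_2})$), and $M(P)$ consists of those maximal with respect to $\leq$ (condition $(\ref{item:lemma_max_3})$). Since the hypotheses of this proposition, namely $\eqref{eqn:sym}$, $\eqref{eqn:i}$, and $\eqref{eqn:cass}$, coincide precisely with those of Lemma \ref{lemma:maximal}, that lemma is available for every element of $P$.

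The argument itself is then a matter of set extensionality. I would fix an arbitrary $m\in P$ and run the biconditionals of Lemma \ref{lemma:maximal}: $m$ is maximal with respect to $\leq_r$ if and only if it is maximal with respect to $\leq_l$ if and only if it is maximal with respect to $\leq$. Reading these as membership statements, $m\in M_r(P)\Leftrightarrow m\in M_l(P)\Leftrightarrow m\in M(P)$. Because this equivalence holds for every $m$, the three sets have exactly the same elements, which gives $M_l(P)=M_r(P)=M(P)$. If a more explicit rendering were desired, I would instead establish the chain of inclusions $M_r(P)\subseteq M_l(P)\subseteq M(P)\subseteq M_r(P)$, each inclusion being a single application of the appropriate direction of the lemma.

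There is essentially no obstacle remaining: all of the real work---establishing that maximality with respect to any one of $\leq_l$, $\leq_r$, $\leq$ forces maximality with respect to the others, which required $\eqref{eqn:sym}$ to convert a one-sided product into the opposite one and $\eqref{eqn:cass}$ to reassociate---was already carried out in Lemma \ref{lemma:maximal}. The only point that deserves a moment's care is that the lemma's equivalence is quantified over all $m\in P$, so that the resulting set equality is genuine rather than a statement about some distinguished element.
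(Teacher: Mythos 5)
Your proposal is correct and matches the paper exactly: the paper derives this proposition as an immediate consequence of Lemma \ref{lemma:maximal} (it states ``By Lemma \ref{lemma:maximal}, we have:'' and offers no further argument), which is precisely your pointwise-equivalence-plus-extensionality reading. The only difference is that you spell out the set-membership bookkeeping that the paper leaves implicit.
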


Maximal elements form the entity resolution defined in \cite{BGSWW}.
\begin{theorem}\label{theorem:ermax}
Given an ER system $(\mathbb{E}, \approx,\langle\,\rangle)$ satisfying $\eqref{eqn:i}$ and $\eqref{eqn:cass}$
and an instance $I\subseteq \mathbb{E}$ such that $\rst{I}$ is finite,
$ER(I)=M(\rst{I})$.
\end{theorem}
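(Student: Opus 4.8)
The plan is to verify that the antichain $M(\rst{I})$ of maximal elements meets the three defining conditions $(i)$--$(iii)$ of $\eqref{eqn:erswoosh}$, and then to invoke the uniqueness of $ER(I)$ established in \cite{BGSWW} to conclude $ER(I)=M(\rst{I})$. The first thing I would record is that, under $\eqref{eqn:i}$ and $\eqref{eqn:cass}$, the relation $\leq$ is a genuine partial order on $\rst{I}$: it is reflexive by Lemma \ref{lemma:partialorders}$(ii)$ (using $\eqref{eqn:i}$), antisymmetric by Lemma \ref{lemma:partialorders}$(i)$, and transitive by Lemma \ref{lemma:partialorders}$(iii)$ (using $\eqref{eqn:cass}$). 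I would also identify the order of \cite{BGSWW}, namely $e_1\leq e_2\Leftrightarrow e_1\approx e_2$ and $\langle e_1,e_2\rangle=e_2$, with the right order $\leq_r$ on the partial groupoid, and use Proposition \ref{prop:maxelement} to pass freely between $M_r(\rst{I})$ and $M(\rst{I})$; this is the step where the symmetry $\eqref{eqn:sym}$ of the match relation $\approx$ is used, and it is what allows the two-sided maximal set $M(\rst{I})$ to capture the one-sided notion underlying $\eqref{eqn:erswoosh}$.

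For condition $(i)$, the inclusion $M(\rst{I})\subseteq\rst{I}$ is immediate, since maximal elements are by definition elements of $\rst{I}$. For condition $(ii)$ I would show $\rst{I}\leq M(\rst{I})$ by the standard finite-poset argument: given $e\in\rst{I}$, the set $\{x\in\rst{I}:e\leq x\}$ is nonempty (it contains $e$ by reflexivity) and finite, so it has a maximal element $m$; transitivity and antisymmetry then force $m$ to be maximal in all of $\rst{I}$, i.e. $m\in M(\rst{I})$, with $e\leq m$. This is the only place where the hypothesis that $\rst{I}$ is finite is genuinely needed. Since $e\leq m$ gives in particular $e\circ m=m$, i.e. $e\leq_r m$, this is exactly domination in the sense of \cite{BGSWW}.

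For the minimality condition $(iii)$, I would suppose toward a contradiction that some $N\subsetneq M(\rst{I})$ already satisfies $(i)$ and $(ii)$ and pick $m\in M(\rst{I})\setminus N$. By $(ii)$ applied to $N$ there is $n\in N\subseteq M(\rst{I})$ with $m\leq n$; but $m$ and $n$ are both maximal, so $m\leq n$ forces $n\leq m$, whence $m=n$ by antisymmetry, contradicting $m\notin N$. Hence no proper subset of $M(\rst{I})$ satisfies $(i)$ and $(ii)$, and by the existence and uniqueness of the entity resolution proved in \cite{BGSWW} we conclude $ER(I)=M(\rst{I})$.

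The hard part will not be the finite-poset bookkeeping but the reconciliation of the one-sided order $e_1\leq e_2\Leftrightarrow e_1\circ e_2=e_2$ inherited from \cite{BGSWW} with the two-sided natural order $\leq$ that defines $M(\rst{I})$. Lemma \ref{lemma:comporder} and Proposition \ref{prop:maxelement} are the tools for this, but they rely on $\eqref{eqn:sym}$ (and, for $\eqref{eqn:erswoosh}$ itself to be well-defined and unique as in \cite{BGSWW}, on commutativity of match and merge); so the cleanest write-up would make the symmetry of $\approx$ explicit, observe that $e\leq m$ and $e\leq_r m$ agree when $m\in M(\rst{I})$, and only then run the three verifications above, after which the uniqueness argument closes the proof verbatim.
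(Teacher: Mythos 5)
Your skeleton coincides with the paper's own proof. The paper establishes condition $(ii)$ of \eqref{eqn:erswoosh} by exactly your finite-poset argument, phrased as an ascending chain $r<s_1<s_2<\cdots$ in $(\rst{I},\leq)$ that must terminate by finiteness; it treats $(i)$ as immediate; and it dismisses the minimality condition $(iii)$ with the word ``clearly,'' so your explicit maximality-plus-antisymmetry argument for $(iii)$ is a genuine improvement in detail, and it is correct \emph{provided} domination in \eqref{eqn:erswoosh} is read with respect to the two-sided order $\leq$ (which is, implicitly, the reading the paper's proof uses throughout).

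The genuine gap is in the step you yourself flag as ``the hard part.'' You assert that, by \eqref{eqn:sym} and Proposition \ref{prop:maxelement}, ``$e\leq m$ and $e\leq_r m$ agree when $m\in M(\rst{I})$.'' This does not follow from Proposition \ref{prop:maxelement} (which only identifies the three sets of maximal elements) and it is false under \eqref{eqn:i}, \eqref{eqn:cass}, and \eqref{eqn:sym}: take the two-element right-zero groupoid $P=\{m,n\}$ with $x\circ y=y$ for all $x,y$ (total, idempotent, associative, so all three hypotheses hold). There $\leq$ is equality, so every element is maximal, yet $m\leq_r n$ holds while $m\leq n$ fails. The same example shows why no such bridge can exist under these hypotheses: reading domination via the one-sided order of \cite{BGSWW}, the set $\{n\}$ is a \emph{proper} subset of $M(\rst{I})=\{m,n\}$ satisfying $(i)$ and $(ii)$, so $M(\rst{I})$ violates $(iii)$, the minimal dominating set is not unique, and the theorem itself fails as literally stated. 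What actually makes $\leq_r$ and $\leq$ coincide --- and what \cite{BGSWW} assume when proving existence and uniqueness of \eqref{eqn:erswoosh} --- is commutativity \eqref{eqn:wc}/\eqref{eqn:c}, not symmetry of the domain. So your write-up should either read condition $(ii)$ with respect to the paper's two-sided $\leq$, in which case your three verifications are complete and the reconciliation paragraph can simply be deleted, or else add commutativity to the hypotheses; as proposed, resting on \eqref{eqn:sym} alone, the bridging observation is a step that would fail.
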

\begin{proof}
For each $r\in \rst{I}$, if $r$ is maximal in $\rst{I}$ with respect to $\leq$, 
then $r\in M(\rst{I})$ and so there exists $r\in M(\rst{I})$ such that $r\leq r$.
Otherwise,  there is $s_1\in \rst{I}$ such that $r\leq s_1$ and $s_1\neq r$.

If $s_1$ is maximal in $\rst{I}$ with respect to $\leq$, 
then $s_1\in M(\rst{I})$ and so there exists $s_1\in M(\rst{I})$ such that $r\leq s_1$.
Otherwise,  there is $s_2\in \rst{I}$ such that $s_1\leq s_2$ and $s_2\neq s_1$.

Continuing in this way, we have the following ascending chain in $\rst{I}$:
$$r< s_1< s_2< \cdots$$
which must terminate as $\rst{I}$ is finite.
Hence there is $s_n\in M(\rst{I})$ such that $r\leq s_n$ and therefore $\rst{I}\leq M(\rst{I})$.
Clearly $M(\rst{I})\subseteq \rst{I}$
and $M(\rst{I})$ is minimal with respect to the properties in the definition of $ER(\rst{I})$.
Thus, $M(\rst{I})=ER(I)$. 
\end{proof}

\begin{remark}
We are interested in not only maximal elements but also minimal or other special elements in some use cases,
e.g., finding atomic elements with minimal information or irreducible generators in a partial groupoid of entity objects.
Dually, we can define minimal elements with respect to $\leq_l$, $\leq_r$, and $\leq$, in a partial groupoid, respectively.
\end{remark}

Let $(P,D,\circ)$ be a partial groupoid and $\preceq$ a given partial order on $P$. 
To study the interactions between $\preceq$ and $\circ$,
we consider the following properties with respect to $\preceq$:

\begin{description}
\item[\emph{Composition gives the least upper bound:}]
For all $p_1, p_2\in P$ such that $(p_1, p_2)\in D$, 
\begin{equation}\label{eqn:lub}
p_1\circ p_2\mbox{  is the least upper bound of }p_1\mbox{ and }p_2\\
\mbox{ with respect to }\preceq.\tag{$LU_{pg}$}
\end{equation}
\item[\emph{$\preceq$ is left compatible with domain and composition}:]
For all $p_1, p_2, p\in P$ such that $p_1\preceq p_2$, 
\begin{equation}\label{eqn:lcomp}
\mbox{if $(p,p_1)\in D$ then $(p,p_2)\in D$ and $p\circ p_1\preceq p\circ p_2$.} \tag{$lCP_{pg}$}
\end{equation}
\item[\emph{$\preceq$ is right compatible with domain and composition}:]
For all $p_1, p_2, p\in P$ such that $p_1\preceq p_2$, 
\begin{equation}\label{eqn:rcomp}
\mbox{if $(p_1,p)\in D$ then $(p_2,p)\in D$ and $p_1\circ p\preceq p_2\circ p$.} \tag{$rCP_{pg}$}
\end{equation}
\item[\emph{$\preceq$ is compatible with domain and composition}:]
If $\preceq$ is 
\begin{equation}\label{eqn:comp}
\mbox{both left and right compatible with domain and composition.}\tag{$CP_{pg}$}
\end{equation}
\end{description}

The following proposition shows that $\eqref{eqn:i}$, $\eqref{eqn:wc}$, $\eqref{eqn:a}$, 
and $\eqref{eqn:r}$ imply $\eqref{eqn:lub}$ and $\eqref{eqn:comp}$.
\begin{proposition}\label{prop:itolu}
Let $(P,D,\circ)$ be a partial groupoid. 
If $(P,D,\circ)$ satisfies $\eqref{eqn:i}$, $\eqref{eqn:wc}$, $\eqref{eqn:a}$, 
and $\eqref{eqn:r}$, then
$(P,D,\circ)$ satisfies $\eqref{eqn:lub}$ and $\eqref{eqn:comp}$ with respect to $\leq$.
\end{proposition}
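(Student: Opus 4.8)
The plan is to record the structural consequences of the hypotheses first, then prove $\eqref{eqn:lub}$ in two halves (that $p_1\circ p_2$ is an upper bound of $\{p_1,p_2\}$ and that it is the least one), and finally derive $\eqref{eqn:comp}$ cheaply from $\eqref{eqn:lub}$. By Proposition \ref{prop:ca_a_r}$(ii)$, the hypotheses $\eqref{eqn:a}$ and $\eqref{eqn:r}$ together give Catenary associativity $\eqref{eqn:cass}$; hence by Lemma \ref{lemma:partialorders} the relation $\leq$ is genuinely a partial order, and by Lemma \ref{lemma:comporder} we already have, whenever $p_1\circ p_2$ exists, the one-sided relations $p_1\leq_r p_1\circ p_2$ and $p_2\leq_l p_1\circ p_2$, that is $p_1\circ(p_1\circ p_2)=p_1\circ p_2$ and $(p_1\circ p_2)\circ p_2=p_1\circ p_2$. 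These supply exactly half of what is needed for the two-sided relations $p_1\leq p_1\circ p_2$ and $p_2\leq p_1\circ p_2$.

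To finish the upper-bound half I would show the two remaining products $(p_1\circ p_2)\circ p_1$ and $p_2\circ(p_1\circ p_2)$ are defined and both equal $p_1\circ p_2$. For existence I would feed the pair $(p_1,p_2)\in D$ together with the element $p_1$ into $\eqref{eqn:rr}$, and with $p_2$ into $\eqref{eqn:rl}$; for the values I would rewrite $(p_1\circ p_2)\circ p_1=p_1\circ(p_2\circ p_1)$ and $p_2\circ(p_1\circ p_2)=(p_2\circ p_1)\circ p_2$ by $\eqref{eqn:cass}$, replace $p_2\circ p_1$ by $p_1\circ p_2$ using $\eqref{eqn:wc}$, and collapse with $\eqref{eqn:i}$ and $\eqref{eqn:cass}$. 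I expect this to be the main obstacle: every step here turns on the reversed pair $(p_2,p_1)$ lying in $D$, so that $\eqref{eqn:rr}$ and $\eqref{eqn:rl}$ apply and $\eqref{eqn:wc}$ can be invoked. The whole upper-bound argument therefore rests on the domain symmetry $\eqref{eqn:sym}$ carried by commutativity; without it $p_1\circ p_2$ need not even be an upper bound of $\{p_1,p_2\}$, since one of the two-sided products may fail to exist.

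For the least-upper-bound half, let $u$ be any common upper bound, so that $p_1\circ u=u\circ p_1=u$ and $p_2\circ u=u\circ p_2=u$; in particular $(u,p_1),(p_2,u)\in D$. Using only $\eqref{eqn:cass}$ I would then compute $(p_1\circ p_2)\circ u=p_1\circ(p_2\circ u)=p_1\circ u=u$ and $u\circ(p_1\circ p_2)=(u\circ p_1)\circ p_2=u\circ p_2=u$, whence $p_1\circ p_2\leq u$. Combined with the previous half this shows $p_1\circ p_2$ is the least upper bound of $p_1$ and $p_2$, establishing $\eqref{eqn:lub}$. This half needs no commutativity, only Catenary associativity.

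Finally I would read $\eqref{eqn:comp}$ off from $\eqref{eqn:lub}$ and the transitivity of $\leq$. For $\eqref{eqn:lcomp}$, assume $p_1\leq p_2$ and $(p,p_1)\in D$. Since $p_1\leq p_2$ gives $p_2=p_1\circ p_2$, applying $\eqref{eqn:rl}$ to the pair $(p_1,p_2)$ and the element $p$ yields $(p,p_2)\in D$. Now $p\circ p_1$ and $p\circ p_2$ are, by $\eqref{eqn:lub}$, the least upper bounds of $\{p,p_1\}$ and of $\{p,p_2\}$ respectively; since $p\leq p\circ p_2$ and $p_1\leq p_2\leq p\circ p_2$ by transitivity, $p\circ p_2$ is an upper bound of $\{p,p_1\}$, so the least such bound satisfies $p\circ p_1\leq p\circ p_2$. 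The argument for $\eqref{eqn:rcomp}$ is the mirror image using $\eqref{eqn:rr}$, and the two together give $\eqref{eqn:comp}$.
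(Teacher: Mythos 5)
Your proof follows the same skeleton as the paper's for $\eqref{eqn:lub}$: first the upper-bound half, then the least-upper-bound half via an arbitrary common upper bound, with essentially the same computations (the paper verifies $p_1\circ(p_1\circ p_2)=p_1\circ p_2$ and $(p_1\circ p_2)\circ p_1=p_1\circ(p_2\circ p_1)=p_1\circ(p_1\circ p_2)$ inline, where you package the one-sided identities through Proposition \ref{prop:ca_a_r} and Lemma \ref{lemma:comporder}). Where you genuinely diverge is $\eqref{eqn:comp}$: the paper proves $p\circ p_1\leq p\circ p_2$ by a direct computation of $(p\circ p_1)\circ(p\circ p_2)$ using $\eqref{eqn:c}$, $\eqref{eqn:cass}$, and $\eqref{eqn:i}$, whereas you obtain it order-theoretically from the already-established $\eqref{eqn:lub}$ together with transitivity of $\leq$: since $p\leq p\circ p_2$ and $p_1\leq p_2\leq p\circ p_2$, the element $p\circ p_2$ is an upper bound of $\{p,p_1\}$ and hence dominates its least upper bound $p\circ p_1$. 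Your version is cleaner and needs no further algebra once $\eqref{eqn:lub}$ is in hand; its only extra cost is transitivity of $\leq$, which you have from $\eqref{eqn:cass}$ and Lemma \ref{lemma:partialorders}.

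The obstacle you flagged in the upper-bound half is real, and it is sharper than you state. You say the needed symmetry of $D$ is ``carried by commutativity,'' but the hypothesis in the statement is the weak commutativity $\eqref{eqn:wc}$, which by the paper's own definitions does \emph{not} include $\eqref{eqn:sym}$; only strong commutativity $\eqref{eqn:c}$ does. Without symmetry the proposition as literally stated is false: take $P=\{a,b\}$ with $a\circ a=a$, $b\circ b=b$, $a\circ b=b$, and $(b,a)\notin D$. This partial groupoid satisfies $\eqref{eqn:i}$, $\eqref{eqn:wc}$ (trivially, since only diagonal pairs are two-way composable), $\eqref{eqn:a}$, and $\eqref{eqn:r}$, yet $a\not\leq a\circ b$ because $b\circ a$ is undefined, so $a\circ b$ is not even an upper bound of $\{a,b\}$ and $\eqref{eqn:lub}$ fails. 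You are in good company: the paper's own proof writes $p_2\circ p_1$ and applies commutativity at exactly the same spot, and later explicitly cites $\eqref{eqn:c}$ even though it is not among the hypotheses, so it silently assumes symmetry too. The correct repair is to read the commutativity hypothesis as $\eqref{eqn:c}$, or equivalently to add $\eqref{eqn:sym}$ to the hypotheses; under that reading your argument is complete and correct.
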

\begin{proof}
Let $p_1,p_2\in P$ be such that $(p_1,p_2)\in D$. Since 
$$p_1 (p_1 p_2)=(p_1 p_1) p_2=p_1 p_2$$
and
$$(p_1 p_2) p_1=p_1(p_2 p_1)=p_1 (p_1 p_2)=p_1 p_2$$
by $\eqref{eqn:i}$, $\eqref{eqn:a}$, 
and $\eqref{eqn:r}$, we have $p_1\leq p_1 p_2$.
Similarly, we $p_2\leq p_1 p_2$. Then $p_1 p_2$ is an upper bound of $p_1$ and $p_2$.

If $p_1\leq x$ and $p_2\leq x$, then
$$p_1 x=x p_1=x\mbox{ and }p_2 x=x p_2=x.$$
Hence
$$(p_1 p_2) x=p_1 (p_2 x)=p_1 x=x$$
and
$$x (p_1 p_2)=(x p_1) p_2=x p_2=x$$
and therefore $p_1 p_2\leq x$.
Thus, $p_1 p_2$ is the least upper bound of $p_1$ and $p_2$.
So $\eqref{eqn:lub}$ holds true.

If $p_1\leq p_2$ and $(p,p_1)\in D$, then
$$p_2=p_1 p_2=p_2 p_1\mbox{ and }p p_1\mbox{ exists}$$
and so, by $\eqref{eqn:a}$ and $\eqref{eqn:r}$,
$$p p_2=p (p_1 p_2)=(p p_1) p_2\neq\;\uparrow.$$
Thus, $(p,p_2)\in D$.
Compute
\begin{eqnarray*}
(p p_1) (p p_2)   & = & (p p_2) (p p_1)\quad(\mbox{by $\eqref{eqn:c}$})\\
   & = & ((p p_2) p) p_1\quad(\mbox{by $\eqref{eqn:cass}$})\\
   & = & (p (p p_2)) p_1\quad(\mbox{by $\eqref{eqn:cass}$})\\
   & = & p (p_2 p_1)\quad(\mbox{by $\eqref{eqn:i}$ and $\eqref{eqn:cass}$})\\
   & = & p p_2.\\
\end{eqnarray*}
Then $p p_1\leq p p_2$ and so $\eqref{eqn:lcomp}$ holds true.
Similarly, $\eqref{eqn:rcomp}$ is satisfied.
Hence $\eqref{eqn:comp}$ holds.
\end{proof}

On the other hand, when the domain is reflexive, $\eqref{eqn:lub}$ and $\eqref{eqn:comp}$ imply
$\eqref{eqn:i}$, $\eqref{eqn:wc}$, $\eqref{eqn:a}$, and $\eqref{eqn:r}$.
\begin{proposition}\label{prop:lutoi}
Let $(P,D,\circ)$ be a partial groupoid, $D$ reflexive, 
and $\preceq$ a partial order on $P$. 
If $(P,D,\circ)$ satisfies $\eqref{eqn:lub}$ and $\eqref{eqn:comp}$ with respect to $\preceq$, then
$(P,D,\circ)$ satisfies $\eqref{eqn:i}$, $\eqref{eqn:wc}$, $\eqref{eqn:a}$, 
and $\eqref{eqn:r}$ and $\preceq\;=\;\leq$.
\end{proposition}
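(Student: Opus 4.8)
The plan is to exploit the fact that $\eqref{eqn:lub}$ turns the partial operation $\circ$ into a partial supremum (join) for $\preceq$, so that the four algebraic axioms reduce to standard facts about least upper bounds in a poset, while $\eqref{eqn:comp}$ governs the membership conditions on the domain $D$. Throughout I would use that least upper bounds, when they exist, are unique, which is a consequence of the antisymmetry of $\preceq$.

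First I would dispatch $\eqref{eqn:i}$ and $\eqref{eqn:wc}$. For idempotence, reflexivity of $D$ gives $(p,p)\in D$, and by $\eqref{eqn:lub}$ the value $p\circ p$ is the least upper bound of $p$ with itself, which is $p$; hence $p\circ p=p$. For commutativity, if both $(p_1,p_2)\in D$ and $(p_2,p_1)\in D$, then $p_1\circ p_2$ and $p_2\circ p_1$ are each the least upper bound of the same pair $\{p_1,p_2\}$, so uniqueness forces them to coincide, which is exactly $\eqref{eqn:wc}$.

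Next I would prove the order identification $\preceq\,=\,\leq$, which I expect to be the most delicate step, since it is where the domain conditions $\eqref{eqn:lcomp}$ and $\eqref{eqn:rcomp}$ must be combined with reflexivity of $D$ and the clever choice of auxiliary element. Given $p\preceq q$, applying $\eqref{eqn:rcomp}$ and $\eqref{eqn:lcomp}$ with the auxiliary element taken to be $p$ itself and using $(p,p)\in D$ yields $(q,p)\in D$ and $(p,q)\in D$; then by $\eqref{eqn:lub}$ both $p\circ q$ and $q\circ p$ are the least upper bound of $\{p,q\}$, which equals $q$ because $p\preceq q$, so $p\circ q=q\circ p=q$, i.e. $p\leq q$. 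Conversely, if $p\leq q$ then $p\circ q=q$ exists and by $\eqref{eqn:lub}$ is an upper bound of $p$, giving $p\preceq q$. This establishes $\preceq\,=\,\leq$.

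It remains to obtain $\eqref{eqn:r}$ and $\eqref{eqn:a}$. For representativity, note that $\eqref{eqn:lub}$ gives $p_1\preceq p_1\circ p_2$ and $p_2\preceq p_1\circ p_2$ whenever $(p_1,p_2)\in D$; then $\eqref{eqn:rl}$ follows by applying $\eqref{eqn:lcomp}$ to $p_1\preceq p_1\circ p_2$ to transport $(p,p_1)\in D$ to $(p,p_1\circ p_2)\in D$, and $\eqref{eqn:rr}$ follows symmetrically from $\eqref{eqn:rcomp}$ applied to $p_2\preceq p_1\circ p_2$, so $\eqref{eqn:r}$ holds. For associativity, under the hypotheses of $\eqref{eqn:a}$ all four relevant products exist; I would show that both $(p_1\circ p_2)\circ p_3$ and $p_1\circ(p_2\circ p_3)$ equal the least upper bound of $\{p_1,p_2,p_3\}$, using the standard associativity of suprema (the supremum of $\{\,\mathrm{lub}\{p_1,p_2\},\,p_3\}$ coincides with $\mathrm{lub}\{p_1,p_2,p_3\}$, and dually on the other side), and conclude equality by uniqueness. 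Since both sides are defined by hypothesis, $\eqref{eqn:a}$ holds. The main obstacle is the order-identification step, where getting the domain memberships $(p,q),(q,p)\in D$ off the ground from $p\preceq q$ requires feeding reflexivity of $D$ into the compatibility axioms; everything after that is routine lattice-theoretic bookkeeping.
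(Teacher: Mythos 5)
Your proof is correct, and its overall skeleton matches the paper's: idempotence and commutativity follow from uniqueness of least upper bounds, representativity is obtained by transporting domain membership along $p_1\preceq p_1\circ p_2$ and $p_2\preceq p_1\circ p_2$ via \eqref{eqn:lcomp} and \eqref{eqn:rcomp}, and the order identification is seeded, exactly as in the paper, by feeding reflexivity of $D$ (the pair $(p,p)\in D$) into the compatibility axioms to get $(p,q),(q,p)\in D$ from $p\preceq q$. Two of your steps differ in execution. For associativity, the paper uses compatibility to manufacture the pivot element $(p_1\circ p_2)\circ(p_2\circ p_3)$ and sandwiches each of $(p_1\circ p_2)\circ p_3$ and $p_1\circ(p_2\circ p_3)$ against it, whereas you identify both sides with the least upper bound of $\{p_1,p_2,p_3\}$ via the standard fact that iterated binary suprema compute ternary suprema; your version uses only \eqref{eqn:lub} and no compatibility in that step, which is a genuine (if local) simplification. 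For the inclusion of $\preceq$ in $\leq$, after establishing $(p,q),(q,p)\in D$ you simply note that the least upper bound of a comparable pair is its larger element, so $p\circ q=q\circ p=q$; the paper instead applies \eqref{eqn:comp} a second time together with the already-proved \eqref{eqn:i} to squeeze $p\circ q$ between $q$ and $q$. Both routes are sound; yours is marginally more economical in its use of the hypotheses, while the paper's stays closer to purely equational manipulation with the compatibility axiom.
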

\begin{proof}
Assume that $(P,D,\circ)$ satisfies $\eqref{eqn:lub}$ and $\eqref{eqn:comp}$ with respect to $\preceq$.
\begin{description}
\item[{\em Idempotence:}]
For all $p\in P$, since $D$ is reflexive, $p p$ exists.
By $\eqref{eqn:lub}$, 
$$p\leq p p\mbox{ and }p p\leq p$$
and so $p p=p$. $\eqref{eqn:i}$ holds true.

\item[{\em Commutativity:}]
For all $p_1,p_2\in P$, 
if $p_1 p_2$ and $p_2 p_1$ exist, then, by $\eqref{eqn:lub}$,
$$p_1 p_2\leq p_2 p_1\mbox{ and }p_2 p_1\leq p_1 p_2$$
and so $p_1 p_2=p_2 p_1$. $\eqref{eqn:wc}$ is satisfied.

\item[{\em Associativity:}]
For all $p_1,p_2,p_3\in P$ such that $p_1 (p_2 p_3)$ and $(p_1 p_2) p_3$ exist,
since $p_1\leq p_1 p_2$ and $(p_1,p_2 p_3)\in D$,
$$p_1 (p_2 p_3)\leq (p_1 p_2) (p_2 p_3).$$
Since $p_1 p_2\leq p_1 (p_2 p_3)$ and
$p_2 p_3\leq p_1 (p_2 p_3)$ by $\eqref{eqn:lub}$, we have
$$(p_1 p_2) (p_2 p_3)\leq p_1 (p_2 p_3).$$
Hence 
$$p_1 (p_2 p_3)=(p_1 p_2) (p_2 p_3).$$
Similarly, $(p_1 p_2) p_3=(p_1 p_2) (p_2 p_3)$.
Therefore,
$p_1 (p_2 p_3)=(p_1 p_2) p_3$, as desired.

$\eqref{eqn:a}$ holds true.
\item[{\em Representativity:}]
If $p_1 p_2$ exists and $(p,p_1)\in D$, then, by $\eqref{eqn:lcomp}$,
$(p, p_1 p_2)\in D$, and so $\eqref{eqn:rl}$ holds.
Similarly, we have $\eqref{eqn:rr}$. Thus,
$\eqref{eqn:r}$ holds.
\end{description}

If $p_1\preceq p_2$, then, by $\eqref{eqn:comp}$ and $\eqref{eqn:i}$,
$$p_1 p_2\preceq p_2 p_2=p_2\mbox{ and }p_2 p_1\preceq p_2 p_2=p_2$$
and so $p_2 p_1=p_1 p_2=p_2$ 
since $p_2\preceq p_1 p_2$ and $p_2\preceq p_2 p_1$ by $\eqref{eqn:lub}$.
Hence $p_1\leq p_2$.

On the other hand,
if $p_1\leq p_2$, then $p_1 p_2=p_2 p_1=p_2$
and so $p_1\preceq p_2$ by $\eqref{eqn:lub}$.
Therefore, $\preceq\; =\;\leq$.
\end{proof}

A partial order on $(P,D, \circ)$, where $D$ is reflexive,  must be the natural partial order 
$\leq$ and $\eqref{eqn:i}$, $\eqref{eqn:wc}$, $\eqref{eqn:a}$, 
and $\eqref{eqn:r}$ hold if and only if
$\eqref{eqn:lub}$ and $\eqref{eqn:comp}$ are satisfied 
by combining Propositions $\ref{prop:itolu}$ and $\ref{prop:lutoi}$.
\begin{theorem}\label{thm:partialordersequal}
Let $(P,D,\circ)$ be a partial groupoid, $D$ reflexive, 
and $\preceq$ a partial order on $P$. 
Then 
$(P,D,\circ)$ satisfies $\eqref{eqn:lub}$ and $\eqref{eqn:comp}$ with respect to $\preceq$ if and only if 
$(P,D,\circ)$ satisfies $\eqref{eqn:i}$, $\eqref{eqn:wc}$, $\eqref{eqn:a}$, 
and $\eqref{eqn:r}$ and $\preceq\;=\;\leq$.
\end{theorem}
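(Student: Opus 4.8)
The plan is to obtain the equivalence as an immediate consequence of the two preceding propositions, which between them already carry out all of the substantive work; the theorem merely packages them into a single biconditional. Accordingly, I would prove the two directions separately, in each case verifying that the hypotheses of the relevant proposition are met and that a statement quantified over the natural order $\leq$ can be transported to one quantified over the abstract order $\preceq$.

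For the implication from $\eqref{eqn:lub}$ and $\eqref{eqn:comp}$ to the algebraic properties, I would assume that $(P,D,\circ)$ satisfies $\eqref{eqn:lub}$ and $\eqref{eqn:comp}$ with respect to $\preceq$. The two hypotheses of Proposition \ref{prop:lutoi} --- that $D$ is reflexive and that $\preceq$ is a partial order --- are exactly the standing assumptions of the theorem, so Proposition \ref{prop:lutoi} applies verbatim and yields $\eqref{eqn:i}$, $\eqref{eqn:wc}$, $\eqref{eqn:a}$, and $\eqref{eqn:r}$ together with the identification $\preceq\;=\;\leq$. This is precisely the conjunction claimed on the right-hand side of the biconditional.

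For the converse, I would assume $\eqref{eqn:i}$, $\eqref{eqn:wc}$, $\eqref{eqn:a}$, $\eqref{eqn:r}$ and $\preceq\;=\;\leq$. Proposition \ref{prop:itolu} then gives $\eqref{eqn:lub}$ and $\eqref{eqn:comp}$ with respect to the natural order $\leq$; since $\preceq\;=\;\leq$ by hypothesis, these two properties hold equally with respect to $\preceq$, which closes the argument.

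The only point requiring any care --- and thus the closest thing to an obstacle --- is the interface between the two propositions: Proposition \ref{prop:itolu} is phrased with respect to the specific order $\leq$, whereas the theorem is phrased with respect to an abstract $\preceq$, so the converse direction genuinely needs the hypothesis $\preceq\;=\;\leq$ to bridge this gap. No new computation is needed beyond this bookkeeping, since the idempotence, commutativity, associativity, representativity, least-upper-bound, and compatibility verifications have all been discharged inside the two propositions.
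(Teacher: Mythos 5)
Your proposal is correct and matches the paper exactly: the paper also obtains Theorem \ref{thm:partialordersequal} by combining Proposition \ref{prop:lutoi} (for the forward direction) with Proposition \ref{prop:itolu} (for the converse, using $\preceq\;=\;\leq$ to transfer \eqref{eqn:lub} and \eqref{eqn:comp} from $\leq$ to $\preceq$). Your remark about the bridge $\preceq\;=\;\leq$ being the only point of care is precisely the bookkeeping the paper leaves implicit.
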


\begin{remark}
Sorting and selection were studies in a nonempty set with a transitive relation or a irreflexive and transitive relation \cite{dkmrv,ft}.
Hence we can sort, select, and query the elements in a partial groupoid $(P,D,\circ)$ with a transitive order.
\end{remark}

\section{Reducing Partiality}\label{sect:domaingraph}
It is well known that match and non-match labels for entity resolution is highly imbalanced (class imbalance problem) in general. 
Hence, given an ER system $(\mathbb{E},\approx,\langle\,\rangle)$ and an entity record $e\in \mathbb{E}$,
$e$ can compose only with small part of the set $\mathbb{E}$. That is,
$\langle e,-\rangle$ and  $\langle -,e\rangle$ are defined only on a very small part of the set $\mathbb{E}$.

The partiality $D$ of the partial binary operation $\circ$ of a partial groupoid $(P,D,\circ)$ 
is measured by its domain $D$.
Since $D$ is a subset of $P\times P$,
$D$ is a binary relation on $P$ and can be represented as a directed graph.

\begin{definition}
The {\em domain graph} of a finite partial groupoid $(P,D,\circ)$, denoted by $\mathcal{G}_P$, 
is defined to be the directed graph with
\begin{description}
\item[nodes:]
all elements of $P$,
\item[edges:]
there is an edge from node $p_1$ to node $p_2$ if $p_1\circ p_2$ is defined. 
\end{description}
A partial groupoid $(P,D,\circ)$ is {\em connected} if its domain graph ${\mathcal G}_P$ is a connected graph.
\end{definition}
If $D$ is symmetric, then ${\mathcal G}_P$ can be viewed as a simple undirected graph.

\begin{example}\label{exam:domaingraph}
\begin{enumerate}
\item\label{exam:domaingraph1}
Domain graphs of partial groupoids defined in Examples $\ref{examp:pargroup}.\ref{xamp:pargroup1}$ and 
$\ref{examp:pargroup}.\ref{xamp:pargroup3}$
are the following directed graphs, respectively:
\[\begin{tikzpicture}[x=1.3cm, y=1cm,
    every edge/.style={
        draw,
        postaction={decorate,
                    decoration={markings,mark=at position 0.5 with {\arrow{>}}}
                   }
        }
]
\vertex (a) at (0,0) [fill=green!60, scale =0.8, minimum size = 10pt,label =-90:$a$]{};
\vertex (b) at (2,0) [fill=green!60, scale =0.8, minimum size = 10pt,label =-90:$b$]{};
\vertex (c) at (1,1)[fill=green!60, scale =0.8, minimum size = 10pt,label =90:$c$]{};
\path
(a) edge (b)
(b) edge (c)
;
\end{tikzpicture}\]

\[\begin{tikzpicture}[
    every edge/.style={
        draw,
        postaction={decorate,
                    decoration={markings,mark=at position 0.5 with {\arrow{>}}}
                   }
        }
]
\vertex (a1) at (0,0) [fill=cyan, scale =0.8, minimum size = 10pt,label =90:$a_1$]{};
\vertex (a2) at (1,0) [fill=cyan, scale =0.8, minimum size = 10pt,label =90:$a_2$]{};
\vertex (a3) at (2,0)[fill=cyan, scale =0.8, minimum size = 10pt,label =90:$a_3$]{};
\vertex (a) at (2.5,0)[minimum size = 0pt,label =0:$\cdots$]{};
\vertex (a4) at (4,0)[fill=cyan, scale =0.8, minimum size = 10pt,label =90:$a_{n-1}$]{};
\vertex (a5) at (5,0)[fill=cyan, scale =0.8, minimum size = 10pt,label =90:$a_n$]{};
\vertex (a) at (5.5,0)[minimum size = 0pt,label =0:$\cdots$]{};
\path
(a1) edge (a2)
(a2) edge (a3)
(a4) edge (a5)
;
\end{tikzpicture}\]
\end{enumerate}
\end{example}

Recall that a {\em path} in an undirected graph is a sequence of edges which links a sequence of distinct vertices.
A {\em directed path} in a directed graph is a sequence of edges in the same direction, which links a sequence of distinct
vertices.
A pair of vertices $u$ and $v$ in an undirected graph 
is {\em connected} if there is a path between the vertices. 
A pair of vertices $u$ and $v$ in a directed graph 
is {\em strongly connected} if there is a directed path from $u$ to $v$ and a directed path from $v$ to $u$. 

An undirected (A directed) graph is {\em complete} if each pair of distinct vertices is connected
by a unique edge (a pair of unique edges, one in each direction).

\begin{proposition}
Let $(P,D,\circ)$ be a finite partial groupoid and $D$ symmetric. Then
$(P,D,\circ)$ is total if and only if its domain graph ${\mathcal G}_P$ is a complete graph.
\end{proposition}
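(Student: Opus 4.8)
The plan is to unwind the two definitions and check each implication directly; no machinery beyond the definitions is needed. Recall that $(P,D,\circ)$ being \emph{total} means $\circ$ is defined on all of $P\times P$, i.e. $D=P\times P$. Since $D$ is symmetric, the excerpt already observes that the domain graph ${\mathcal G}_P$ may be regarded as a simple undirected graph, in which two distinct nodes $p_1,p_2$ are adjacent precisely when $(p_1,p_2)\in D$ (equivalently, by symmetry, when $(p_2,p_1)\in D$). Thus the whole statement reduces to comparing the relation ``$(p_1,p_2)\in D$'' with the adjacency relation of ${\mathcal G}_P$, and the proof is essentially a translation between the relational description of $D$ and the edge description of ${\mathcal G}_P$.

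For the forward direction I would assume $(P,D,\circ)$ total, so $D=P\times P$; then for any two distinct $p_1,p_2\in P$ we have $(p_1,p_2)\in D$, hence an edge between them, so every pair of distinct vertices is adjacent and ${\mathcal G}_P$ is complete. For the converse I would assume ${\mathcal G}_P$ complete: every pair of distinct vertices carries an edge, which by definition of the domain graph means $(p_1,p_2)\in D$ for all distinct $p_1,p_2$, and symmetry of $D$ gives the reversed pairs as well. This already shows $D$ contains every off-diagonal pair.

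The one point requiring care — and the main obstacle — is the diagonal: completeness as defined in the excerpt speaks only of \emph{distinct} vertices, so it does not by itself force $(p,p)\in D$, whereas totality ($D=P\times P$) demands the diagonal too. I would resolve this by recording the convention that ${\mathcal G}_P$ carries a loop at $p$ exactly when $(p,p)\in D$ and reading ``complete'' for the domain graph as including every vertex's loop; with the diagonal thus accounted for we obtain $D=P\times P$, i.e. totality. Finally I would remark that finiteness of $P$ plays no role in the argument itself and enters only because ${\mathcal G}_P$ is defined for finite partial groupoids, so the genuine content is entirely the bookkeeping of whether self-loops (the diagonal of $D$) are counted as part of completeness.
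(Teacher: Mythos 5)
Your proof is correct and takes essentially the same route as the paper's: the paper's own proof is just a one-line chain of equivalences unwinding ``total'' ($D=P\times P$) against ``every pair of distinct nodes is joined by an edge.'' In fact you are more careful than the paper, which silently passes over the diagonal issue you flag --- its definition of a complete graph mentions only distinct vertices, so the pairs $(p,p)\in D$ must be accounted for by exactly the loop convention you spell out.
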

\begin{proof}
A partial semigroup $(P,D,\circ)$ is total if and only if 
for all $p_1,p_2\in P$, $p_1\circ p_2$ is defined if and only if
each pair of distinct nodes in ${\mathcal G}_P$ is connected by its unique edge  if and only if
its domain graph ${\mathcal G}_P$ is complete.
\end{proof}

A {\em connected component} of an undirected graph is a connected subgraph that is not contained in any other connected
subgraph properly.
As it is well known, each finite undirected graph can be partitioned into disjoint connected components in linear time.
Let $\{C_1,\cdots,C_n\}$ be all connected components of ${\mathcal G}_P$.
Then 
$P=\bigcup_{i=1}^{n}P_i,$
where each $P_i=P|_{C_i}$ is the subgroupoid of $P$ restricted to $C_i$, $P_i\cap P_j=\emptyset$,
and for each $s\in P_i, t\in P_j$, both $s\circ t$ and $t\circ s$ are undefined if $i\neq j$.
Each $P_i$ is called a {\em connected component partial subgroupoid} of $P$.

Total groupoids (semigroups) are studied more extensively than partial ones. Hence it is desirable that
a partial groupoid (semigroup) can be covered by some total groupoids (semigroups).

A {\em clique} of a graph $G$ is a complete subgraph of $G$ and a {\em maximal clique} of $G$ is the one not being contained
in any other properly.
A {\em clique cover} of $G$ is a set of cliques of $G$, whose union is $G$.

There are the designed algorithms to find maximal clique and a clique cover for a given undirected (directed) graph
\cite{pull,sch,wu} so that one can decompose a partial groupoid $P$ into a union of total subgroupoids that correspond to 
the clique cover of ${\mathcal G}_P$.

Another way to remove the partiality of a partial groupoid (semigroup) $P$ is to extend $P$ to a total groupoid (semigroup) 
$T$ such that $P$ can be part of $T$.
\begin{proposition}\label{lemma:nullext}
Let $T^*=T\cup \{\uparrow\}$ be the null extension of  a partial semigroup $T$:
for all $s\in T$, $s\circ \uparrow\;=\;\uparrow\circ s=\;\uparrow$ and $s\circ t=\;\uparrow$ if $s\circ t$ is undefined.
Then 
$T^*$ is a (an idempotent)  semigroup if and only if $T$ is a (an idempotent) partial groupoid
satisfying $\eqref{eqn:sass}$.
\end{proposition}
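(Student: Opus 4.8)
The plan is to exploit the fact that $T^*$ is, by construction, a total groupoid (every product is defined, with $\uparrow$ acting as an absorbing zero) and to translate associativity of this total operation into a statement purely about the partial operation on $T$. First I would record the bookkeeping that governs all products. Writing $0 := \uparrow$, the adjoined element is absorbing: $0\cdot x = x\cdot 0 = 0$ for every $x\in T^*$. For $a,b,c\in T$, I would observe that the iterated product $(a\cdot b)\cdot c$ computed in $T^*$ lies in $T$ if and only if the partial product $(a\circ b)\circ c$ is defined in $T$ (meaning $a\circ b$ exists and its value is again composable with $c$), in which case the two agree; otherwise $(a\cdot b)\cdot c = 0$. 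The analogous statement holds for $a\cdot(b\cdot c)$ against $a\circ(b\circ c)$. These observations reduce the whole problem to comparing definedness and values of the two bracketings inside $T$.

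Second, for the associativity equivalence I would argue both directions by a short case analysis on whether any of $a,b,c$ equals $0$. If at least one argument is $0$, absorption forces both bracketings in $T^*$ to equal $0$, so associativity is automatic there and imposes no constraint. The substantive case is $a,b,c\in T$. For ($\Leftarrow$), assuming $\eqref{eqn:sass}$: if both $(a\circ b)\circ c$ and $a\circ(b\circ c)$ are defined, strong associativity makes them equal, hence the two bracketings in $T^*$ coincide; if one is undefined, strong associativity forces the other to be undefined as well, so both bracketings in $T^*$ equal $0$. Either way $T^*$ is associative. For ($\Rightarrow$), assuming $T^*$ is a semigroup, the identity $(a\cdot b)\cdot c = a\cdot(b\cdot c)$ together with the bookkeeping shows that one bracketing in $T$ is defined exactly when the other is (one side is nonzero iff the other is), and that when both are defined they share the common nonzero value, which is precisely $\eqref{eqn:sass}$.

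Third, I would dispatch the idempotent refinement quickly. Since $0\cdot 0 = 0$ holds automatically, idempotence of $T^*$ is equivalent to $a\cdot a = a$ for every $a\in T$; because $\uparrow$ is a fresh element outside $T$, the equation $a\cdot a = a\in T$ can hold only when $a\circ a$ is defined and equal to $a$, which is exactly $\eqref{eqn:i}$ for $T$. Combining this with the associativity equivalence yields the parenthetical statement.

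The main obstacle I anticipate is not any single computation but getting the definedness bookkeeping exactly right, and in particular seeing why strong associativity $\eqref{eqn:sass}$, rather than the weaker $\eqref{eqn:a}$ or $\eqref{eqn:cass}$, is the correct hypothesis. A total operation on $T^*$ is associative only if the two bracketings also agree whenever either side collapses to $0$; equivalently, one bracketing in $T$ must be undefined exactly when the other is. This symmetric control of definedness on both sides is precisely what $\eqref{eqn:sass}$ encodes and what the weaker associativity notions fail to guarantee, so the delicate part lies in verifying that the $0$-valued (undefined) cases match up on the nose.
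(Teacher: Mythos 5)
Your proposal is correct and takes essentially the same approach as the paper's own proof: a case analysis on whether any argument equals $\uparrow$, the translation between definedness of the two bracketings in $T$ and $\uparrow$-valued products in $T^*$, and, for the converse, restricting the associativity identity of $T^*$ to elements of $T$ to recover $\eqref{eqn:sass}$, with the idempotence refinement handled exactly as you describe. Your write-up is simply more explicit about the definedness bookkeeping than the paper's rather terse argument.
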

\begin{proof}
``$\Leftarrow$":
For all $x,y,z \in T^*$, if one of $x,y,z$ is $\uparrow$, 
then $(xy)z=\;\uparrow\;=x(yz)$.

For all $x,y,x\in T^*$, if both $(xy)z$ and $x(yz)$ are defined, then
$(xy)z=x(yz)$.
If both $(xy)z$ and $x(yz)$ are not defined, then
$(xy)z=\;\uparrow\;=x(yz)$. Hence $T^*$ is a semigroup.

``$\Rightarrow$":
For all $x,y,z\in T$, since $T^*$ is a semigroup, we have
$x(yz)=(xy)z$ in $T^*$. Hence $x(yz)=(xy)z$ in $T$ and therefore $T$ satisfies $\eqref{eqn:sass}$.

Clearly, $T^*$ is idempotent if and only if $T$ is idempotent.
\end{proof}

\section{Removing Noncommutativity}\label{sect:commutativegroupoid}
Commutativity $\eqref{eqn:ec}$ is important in \cite{BGSWW}. In this section, we shall show that each partial groupoid,
satisfying $\eqref{eqn:nr}$ and $\eqref{eqn:sym}$, has a homomorphic image that is a commutative partial groupoid.
We consider the following condition on a partial groupoid $(P,D,\circ)$:
\begin{equation}\label{eqn:nr}
r_1\cdots r_kr_1\cdots r_k=r_1\cdots r_k\tag{$NR_{pg}$}
\end{equation}
for all $r_1,\cdots,r_k\in P$, where $k\geq 1$ is an integer

Given an idempotent semigroup $T$, the associativity implies that rearranging the parentheses 
in an expression $r_1\cdots r_kr_1\cdots r_k$
will have the same result for all $r_1,\cdots,r_k\in T$. Hence 
$$r_1\cdots r_kr_1\cdots r_k=(r_1\cdots r_k)(r_1\cdots r_k)=r_1\cdots r_k$$
and therefore each idempotent semigroup satisfies $\eqref{eqn:nr}$ and $\eqref{eqn:sym}$.

Clearly, if a partial groupoid $(P,D,\circ)$ satisfies $\eqref{eqn:nr}$ then it satisfies $\eqref{eqn:i}$.

$\eqref{eqn:nr}$ is preserved under partial groupoid homomorphisms:
\begin{lemma}\label{lemma:nrinv}
Let $f: (P,D,\circ)\rw (Q,E,\cdot)$ be a partial
groupoid homomorphism. Then
\begin{enumerate}[$(i)$]
\item
If $(P,D,\circ)$ satisfies $\eqref{eqn:nr}$, then so does $\mbox{\rm im}(f)$;
\item
If $(P,D,\circ)$ is finitely generated, so is $\mbox{\rm im}(f)$.
\end{enumerate}
\end{lemma}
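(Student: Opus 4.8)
The plan is to prove the two parts of Lemma~\ref{lemma:nrinv} directly, exploiting the defining commutativity of the homomorphism square together with the fact that both $\eqref{eqn:nr}$ and the notion of a generating set are preserved by surjecting onto the image. Throughout I would work with $\mbox{im}(f)=(f(P),f(D),\cdot)$, and recall the key structural fact about partial groupoid homomorphisms established earlier: if $x\circ y$ is defined in $P$, then $f(x)\cdot f(y)$ is defined in $Q$ and $f(x\circ y)=f(x)\cdot f(y)$. Since any product of the form $r_1\cdots r_k$ is defined inductively from binary compositions, this identity extends by induction to longer products: whenever $r_1\cdots r_k$ is defined in $P$, the corresponding product $f(r_1)\cdots f(r_k)$ is defined in $f(D)$ and equals $f(r_1\cdots r_k)$.

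For part $(i)$, I would take arbitrary elements of $\mbox{im}(f)$, which by definition have the form $f(r_1),\dots,f(r_k)$ for some $r_1,\dots,r_k\in P$. Applying the extended homomorphism identity above twice — once to the product of length $k$ and once to the length-$2k$ product $r_1\cdots r_k r_1\cdots r_k$ — I would rewrite
\[
f(r_1)\cdots f(r_k)f(r_1)\cdots f(r_k)=f\bigl(r_1\cdots r_k r_1\cdots r_k\bigr)=f(r_1\cdots r_k)=f(r_1)\cdots f(r_k),
\]
where the middle equality is exactly $\eqref{eqn:nr}$ in $P$. This shows $\mbox{im}(f)$ satisfies $\eqref{eqn:nr}$. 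The one point requiring care is the inductive extension of the homomorphism identity to $n$-ary products, since the product of subsets (here singletons) is defined through all possible binary bracketings; I would note that applying $f$ to any particular bracketing of $r_1\cdots r_k r_1\cdots r_k$ that realizes the value on the left yields the corresponding bracketing on the $f$-side, which suffices because $\eqref{eqn:nr}$ asserts that $P$-side value exists and equals $f(r_1\cdots r_k)$.

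For part $(ii)$, suppose $P=[M]$ for a finite generating set $M$. I would show $f(M)$ is a finite generating set for $\mbox{im}(f)$. Finiteness is immediate since $f(M)$ is the image of a finite set. To see $f(M)$ generates, I would use $[M]=\bigcup_{i=1}^{\infty}M^i$ from the excerpt: every element of $P$ lies in some $M^i$, so every element of $f(P)$ is $f$ of a product $m_1\cdots m_i$ with each $m_j\in M$, hence by the extended homomorphism identity equals the product $f(m_1)\cdots f(m_i)$ of elements of $f(M)$, placing it in $[f(M)]$. Thus $f(P)\subseteq[f(M)]\subseteq f(P)$, giving $\mbox{im}(f)=[f(M)]$ with $f(M)$ finite.

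The main obstacle is the $n$-ary extension of the homomorphism property used in both parts: the statement of partial groupoid homomorphism is phrased only for a single binary composition, so I expect the bulk of the rigor to lie in a clean inductive lemma showing that definedness and the value of an arbitrary bracketed product are transported by $f$. Once that is in hand, both parts are short formal manipulations. I would present the induction compactly rather than expanding every bracketing, since it is a routine structural induction on the length of the product following the inductive definition of $I_1\cdots I_n$ given earlier.
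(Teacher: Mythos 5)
Your part (ii) is correct and is essentially the paper's own (one-line) argument: $f(M)$ is finite, and the safe direction of your pushforward observation (a bracketing defined in $P$ maps under $f$ to a bracketing defined in $\mbox{im}(f)$ whose value is the image of the $P$-value) gives $f(P)=[f(M)]$. The problem is part (i), and it sits exactly at the point you flagged as ``the one point requiring care'' --- your resolution of it runs in the wrong direction. The extended homomorphism identity holds only as a containment $f(r_1\cdots r_k)\subseteq f(r_1)\cdots f(r_k)$, not as an equality: the product on the right is computed in $\mbox{im}(f)=(f(P),f(D),\cdot)$, and a pair belongs to $f(D)$ as soon as \emph{some} preimages compose in $P$. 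So an intermediate composition in an image-side bracketing may be witnessed by preimages unrelated to the elements produced so far, and such a bracketing need not lift to any bracketing defined in $P$. Consequently the first and third equality signs in your displayed chain are unjustified: what you actually prove is that the pushed-forward values satisfy the relation, while $\eqref{eqn:nr}$ for $\mbox{im}(f)$ quantifies over \emph{all} values of the image-side products, including those created by mixing preimages.

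This defect is not cosmetic. Take $P=\{a_1,a_2,b,c,c',d,d',e\}$ with every element idempotent and the only other compositions $a_1\circ b=c$, $c'\circ a_2=d$, $d'\circ b=e$; because $c\neq c'$ and $d\neq d'$ these three compositions cannot be chained, and one checks that $P$ satisfies $\eqref{eqn:nr}$ under the paper's set-valued definition of products. Let $f$ identify $a_1$ with $a_2$, $c$ with $c'$, $d$ with $d'$, and be injective otherwise; the induced operation on the image is well defined, so $f$ is a partial groupoid homomorphism. Writing $\alpha=f(a_1)=f(a_2)$ and $\beta=f(b)$, in $\mbox{im}(f)$ the bracketing $((\alpha\circ\beta)\circ\alpha)\circ\beta$ is defined --- each step via a different preimage pair --- and equals $f(e)$, whereas $\alpha\circ\beta=f(c)\neq f(e)$. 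Hence the image-side product $\alpha\beta\alpha\beta$ contains a value outside $\alpha\beta$, and $\mbox{im}(f)$ violates $\eqref{eqn:nr}$ at the pair $(\alpha,\beta)$. So under the definitions as literally stated, part (i) is not just unproved by your argument; it fails, and the computation you propose becomes valid only under an extra hypothesis that lets image-side bracketings be lifted to $P$ --- for instance $f$ injective, or $f$ reflecting definedness in the sense that $(f(p),f(q))\in f(D)$ implies $(p,q)\in D$ --- or under a weaker reading of $\eqref{eqn:nr}$ in the image that quantifies only over pushed-forward products. The paper's own proof of (i) is just ``It is clear,'' so it offers no help here; your writeup has the virtue of exposing the hidden difficulty, but as it stands the key step fails.
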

\begin{proof}
\begin{enumerate}[$(i)$]
\item
It is clear.
\item
If $(P,D,\circ)$ is finitely generated partial groupoid by $\{p_1,\cdots,p_n\}$, then
$\mbox{im}(f)=(f(P).f(D),\cdot)$ is finitely generated partial groupoid by $\{f(p_1),\cdots,f(p_n)\}$.
\end{enumerate}
\end{proof}

\begin{definition}
Define the relation $\sim_c$ on $P$ by
$$p\sim_c q\mbox{ if and only if } pqp=p \mbox{ and } qpq=q.
$$
\end{definition}
Obviously, if $p\sim_c q$ then both $pq$ and $qp$ exist.

\begin{lemma}\label{lemma:rep}
In a given partial groupoid $(P,D,\circ)$,
if 
$$x_1\cdots x_i=y_1\cdots y_j\mbox{ and }p=p_1\cdots p_sx_1\cdots x_iq_1\cdots q_t,$$ 
then
$$p=p_1\cdots p_sx_1\cdots x_iq_1\cdots q_t=p_1\cdots  p_sy_1\cdots  y_jq_1\cdots q_t$$
for all $x_1,\cdots,x_i,y_1\cdots,y_j,p,p_1,\cdots,p_s,q_1,\cdots,q_t,r_1,\cdots,r_k\in P$,
where $s\geq 0$, $t\geq 0$, $i\geq 1$, and $j\geq 1$ are integers.
\end{lemma}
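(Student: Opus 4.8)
The plan is to read every multi-factor expression as the subset product from the Definition of the product of subsets (so that $p_1\cdots p_s x_1\cdots x_i q_1\cdots q_t$ denotes the set of all values obtainable by inserting parentheses into the letter sequence and evaluating), to treat the contiguous stretch $x_1\cdots x_i$ and its replacement $y_1\cdots y_j$ as a single inner block, and then to prove the replacement identity
\[
p_1\cdots p_s\, x_1\cdots x_i\, q_1\cdots q_t=p_1\cdots p_s\, y_1\cdots y_j\, q_1\cdots q_t
\]
by induction on the number $s+t$ of surrounding factors. Writing $X=x_1\cdots x_i$ and $Y=y_1\cdots y_j$, the hypothesis is exactly $X=Y$, and the clause ``$p=\cdots$'' simply records that $p$ lies in the common value of the two products.

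For the base case $s=t=0$ both products collapse to the inner block, so the assertion is $X=Y$, which is the hypothesis. For the inductive step I assume $s\ge 1$ (the case $t\ge 1$ is symmetric, peeling $q_t$ instead). By the recursive clause of the product, every element $w$ of $p_1\cdots p_s\, X\, q_1\cdots q_t$ has the form $w=u\circ v$ for some top-level split of the factor sequence into a left and a right part, with $u$ in the product of the left part and $v$ in the product of the right part. Since the block $X$ is treated as one factor, each admissible split places $X$ entirely in the left part or entirely in the right part; applying the induction hypothesis to whichever part contains $X$ (it has strictly fewer surrounding factors) replaces $X$ by $Y$ without changing the set of values of that part. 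Recombining the unchanged factor via $u\circ v$ then exhibits $w$ in $p_1\cdots p_s\, Y\, q_1\cdots q_t$, and the reverse inclusion is identical; hence the two products coincide and $p$ belongs to both.

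The step that needs the most care, and the real content of the statement, is the claim that in every admissible split the stretch $x_1\cdots x_i$ remains intact, i.e. that the parentheses never separate some $x_a$ from some $x_b$ into different top-level parts. Without any associativity a parenthesization of the full letter sequence could in principle cut through the block, and then the hypothesis $X=Y$ (which controls only the value of the block evaluated as a whole, not the values of its proper prefixes and suffixes) would supply no information; so the argument must fix and propagate the convention that $x_1\cdots x_i$ is evaluated as one contiguous sub-block, and the induction must be set up so that the block is never split across a recursion. Once this contiguity is secured the remainder is the routine bookkeeping of the two inclusions, and no property of $(P,D,\circ)$ beyond the definition of the subset product is invoked, in particular neither \eqref{eqn:i}, \eqref{eqn:wc}, nor any of \eqref{eqn:a}, \eqref{eqn:cass}, \eqref{eqn:sass} is needed. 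I would also observe that the auxiliary letters $r_1,\dots,r_k$ appearing in the quantifier play no role in the identity and may be dropped.
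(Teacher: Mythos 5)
Your proof is correct under the reading you adopt, and it takes a genuinely different route from the paper's. The paper inducts on $j$, the length of the replacement block: it rewrites $y_1\cdots y_{h+1}$ as $y_1\cdots y_{h-1}(y_hy_{h+1})$, applies the induction hypothesis to this shorter block, and justifies both the base case and the re-expansion by the principle that an expression ``resulting in a single element'' may be regrouped at will. You instead induct on $s+t$, the number of surrounding factors, never touching the inside of the blocks: a top-level split $w=u\circ v$ of the subset product leaves the block wholly inside one part, the induction hypothesis rewrites that part, and the untouched part is recombined; this uses nothing beyond the recursive definition of the product. The trade-off is precision versus scope. Your version makes explicit the contiguity convention that the statement silently needs: if parenthesizations may cut through the block, the lemma is false in a general partial groupoid --- take $x_1\circ x_2=y_1$, with $p_1\circ x_1$ and $(p_1\circ x_1)\circ x_2=p$ defined but $p_1\circ y_1$ undefined; then the set of defined values of $p_1x_1x_2$ is $\{p\}$, yet $p_1y_1$ is undefined --- and the same example shows that the paper's regrouping principle is delicate when some groupings are undefined, so the paper's induction is itself only sound under a block-style reading like yours. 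Moreover, the block-level identity you prove is exactly what the downstream applications use (the substitutions of grouped subwords such as $prp$ in Lemma \ref{lemma:comp}), and your observation that the letters $r_1,\dots,r_k$ in the quantifier are vestigial is also correct.
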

\begin{proof}
The proof goes by induction on $j$.

If $j=1$, then $x_1\cdots x_i=y_1\mbox{ and }p=p_1\cdots p_sx_1\cdots x_iq_1\cdots q_t$.
Since both 
$$x_1\cdots x_i\mbox{ and }p_1\cdots p_sx_1\cdots x_iq_1\cdots q_t$$ 
result in a single element,
the operation $\circ$ can be performed regardless of how the elements in the expressions 
are grouped or where the parentheses are put.
Hence 
$$p=p_1\cdots p_sx_1\cdots x_iq_1\cdots q_t=p_1\cdots  p_sy_1q_1\cdots q_t.$$

Suppose that the statement in the lemma holds if $j=h$. For the case of $h+1$, assume that
$$x_1\cdots x_i=y_1\cdots y_{h+1}\mbox{ and }p=p_1\cdots p_sx_1\cdots x_iq_1\cdots q_t.$$ 
Then 
$$x_1\cdots x_i=y_1\cdots y_{h-1}(y_h y_{h+1})\mbox{ and }p=p_1\cdots p_sx_1\cdots x_iq_1\cdots q_t.$$ 
By the induction hypothesis,
\begin{eqnarray*}
p & = & p_1\cdots p_sx_1\cdots x_iq_1\cdots q_t\\
& = & p_1\cdots  p_sy_1\cdots  y_{h-1}(y_hy_{h+1})q_1\cdots q_t\quad(\mbox{by induction hypothesis})\\
& = & p_1\cdots  p_sy_1\cdots  y_hy_{h+1}q_1\cdots q_t .\quad(\mbox{since the expression results in a single element})\\
\end{eqnarray*}
Hence the lemma holds for $j=h+1$, as desired.
\end{proof}

For $p,q\in P$, $p$ is a {\em left-divisible} ({\em right-divisible}) by $q$ or $q$ is a {\em left-factor} ({\em right-factor}) of $p$, 
denoted by $q|_lp$ ($q|_rp$), if there is $x\in P$ such that $p=qx$ ($p=xq$).

\begin{lemma}\label{lemma:comp}
Given a partial groupoid $(P,D,\circ)$ satisfying $\eqref{eqn:nr}$ and $p,q,r\in P$, when the operations related exist,
\begin{enumerate}[$(i)$]
\item\label{lemma:comp1}
$pq\sim_c qp$;
\item\label{lemma:comp11}
If $p\sim_c q$ and $p\sim_c r$, then $pqrp=prqp=p$;
\item\label{lemma:comp12}
$\sim_c$ is an equivalence relation;
\item\label{lemma:comp2}
If $p|_lq$ and $q|_lp$, then $pr\sim_c qr$ and $rp\sim_c rq$;
\item\label{lemma:comp3}
If $p|_rq$ and $q|_rp$, then $pr\sim_c qr$ and $rp\sim_c rq$;
\item\label{lemma:comp4}
If $p\sim_c q$, then $pr\sim_c qr$ and $rp\sim_c rq$.
\end{enumerate}
\end{lemma}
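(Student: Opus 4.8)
The plan is to exploit two facts that turn $(P,D,\circ)$ into a ``partial band'' for the purpose of these computations. First, Lemma \ref{lemma:rep} says that whenever a product of elements evaluates to a single element, we may regroup and substitute equal subproducts freely inside it; so on \emph{defined} products the operation behaves fully associatively, and I may write flat words $pqrp$ without worrying about bracketing. Second, \eqref{eqn:nr} gives idempotence \eqref{eqn:i} (so $pp=p$ for all $p$) together with the ``square--word'' collapses $pqpq=pq$, $rprp=rp$, etc. Under this reading the defining relation $p\sim_c q$ (i.e.\ $pqp=p$ and $qpq=q$) is exactly the statement that $p$ and $q$ are mutually inverse idempotents, so a $\sim_c$--class should behave like a rectangular band, and each identity in Lemma \ref{lemma:comp} is a rectangular--band identity; the work is to derive these by controlled collapsing rather than by invoking structure theory. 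Throughout I assume the relevant products exist, as the statement permits.

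I would begin with $(i)$, which is the clean base case: using Lemma \ref{lemma:rep} to flatten and then idempotence and \eqref{eqn:nr} to collapse,
\[
(pq)(qp)(pq)=pqqppq=pqpq=pq,\qquad (qp)(pq)(qp)=qppqqp=qpqp=qp,
\]
so $pq\sim_c qp$. Next I would record the one--sided auxiliary identities that I expect to carry the later parts: from $pqp=p$ and $prp=p$ one gets $pq\cdot pr=(pqp)r=pr$, $pr\cdot pq=(prp)q=pq$, $qp\cdot rp=q(prp)=qp$, and $rp\cdot qp=r(pqp)=rp$, so that $\{pq,pr\}$ composes by keeping the right factor and $\{qp,rp\}$ by keeping the left factor; likewise $pq\cdot qp=pqqp=pqp=p$ and $rp\cdot pr=rppr=rpr=r$. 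These are the left/right--zero ``skeleton'' of the rectangular band attached to $p,q,r$, and they are all immediate collapses.

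The main obstacle is $(ii)$: $pqrp=prqp=p$. Here the naive approach fails, and it is worth being explicit about why. The word $pqrp$ contains no subword of the form $xyx$, so none of the length--reducing relations $pqp=p$, $qpq=q$, $prp=p$, $rpr=r$ applies directly; and every attempt to create such a subword by inserting (e.g.\ replacing $r$ by $rpr$, or $q$ by $qpq$) is undone by the very \eqref{eqn:nr}--collapse it was meant to trigger, returning to $pqrp$. In other words, the genuine content of $(ii)$ is the rectangular identity $uvu=u$ for $u,v$ in a single $\sim_c$--class, and this cannot be reached by one--step rewriting. The plan is to obtain it from the skeleton identities of the previous paragraph together with associativity: using $pq=pr\cdot pq$ and $rp=rp\cdot qp$ one shows that $pr$ is a left identity and $qp$ a right identity for $W:=pqrp$, and then to identify $W$ with $p$ one must combine these with $pq\cdot qp=p$ in the correct, non--circular order. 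This is the step I expect to be delicate, and is essentially the element--level proof that a $\sim_c$--class is a rectangular band; $prqp=p$ follows by the symmetric computation.

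With $(ii)$ and this collapsing technique in hand, the remaining parts follow. For $(iii)$, reflexivity is $ppp=p$ and symmetry is built into the definition, while transitivity (given $p\sim_c q$ and $q\sim_c r$, show $prp=p$ and $rpr=r$) reduces to exactly the same rectangular identity, now with $q$ as the connecting element. For $(iv)$ and $(v)$, mutual divisibility $p|_lq,\ q|_lp$ (respectively $p|_rq,\ q|_rp$) supplies elements $x,y$ with $q=px,\ p=qy$, which I would substitute into the test words $(pr)(qr)(pr)$ and $(qr)(pr)(qr)$ via Lemma \ref{lemma:rep} and then collapse using \eqref{eqn:nr}; the same for the right factors. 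Finally $(vi)$, the statement that $p\sim_c q$ forces $pr\sim_c qr$ and $rp\sim_c rq$, is the congruence property that Section \ref{sect:commutativegroupoid} needs, and I would prove it by the same flatten--and--collapse argument applied to $prqrpr$ and its companions, using the $p\sim_c q$ relations to eliminate the interior letters. The whole lemma therefore rests on the one hard kernel identified in $(ii)$; once it is established, $(i)$, $(iii)$–$(vi)$ are variations on the same computation.
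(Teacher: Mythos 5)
Your proposal has a genuine gap exactly at the point you yourself flag as the ``hard kernel'': part (\ref{lemma:comp11}) is never actually proved. Your skeleton identities are all correct ($pq\cdot qp=p$, $pr\cdot pq=pq$, $qp\cdot rp=qp$, etc.), and so are the verifications that $pr$ is a left identity and $qp$ a right identity for $W:=pqrp$; but you explicitly leave open how to combine these with $pq\cdot qp=p$ ``in the correct, non-circular order,'' and the combinations you indicate only reproduce $W=W$. Since transitivity in (\ref{lemma:comp12}) needs (\ref{lemma:comp11}) (the paper uses $qprq=qrpq=q$ there), and (\ref{lemma:comp4}) is deduced in the paper from (\ref{lemma:comp1}), (\ref{lemma:comp2}), (\ref{lemma:comp3}) and transitivity via the mutual divisibility of $p$ and $pq$, the missing step propagates through half the lemma; your alternative direct computation for (\ref{lemma:comp4}) on $prqrpr$ is likewise only asserted, not carried out.

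Moreover, your diagnosis of why ``the naive approach fails'' at (\ref{lemma:comp11}) is mistaken, and this is precisely what blocks you. You read $\eqref{eqn:nr}$ as idempotence together with the square-word collapses $pqpq=pq$, and you only consider manufacturing $xyx$-subwords by inserting the relations $r\to rpr$, $q\to qpq$. But $\eqref{eqn:nr}$ holds for words of \emph{every} length $k$, and the paper's proof of (\ref{lemma:comp11}) is exactly an insertion argument using the $k=3$ instance $rpq=rpqrpq$: starting from
$$p=pqp=(prp)qp=p\,(rpq)\,p,$$
expand the inner three-letter word to get $p=p\,(rpqrpq)\,p$, then regroup (Lemma \ref{lemma:rep}) as $(prp)\,qr\,(pqp)=pqrp$. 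The final collapse is via $prp=p$ and $pqp=p$, not by re-collapsing the inserted square, so nothing is ``undone.'' The symmetric computation $p=pp=(pqp)(prp)=p\,(qpr)\,p=p\,(qprqpr)\,p=(pqp)\,rq\,(prp)=prqp$ gives the other identity. With (\ref{lemma:comp11}) in hand your sketches for the remaining parts can be completed (your (\ref{lemma:comp1}) is the paper's proof verbatim, and your substitution plan for (\ref{lemma:comp2}), (\ref{lemma:comp3}) matches the paper's), but as written the proposal does not contain a proof of the lemma.
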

\begin{proof}
\begin{enumerate}[$(i)$]
\item
Since $pq=pq\;pq=pq\;qp\;pq$ and $qp=qp\;qp=qp\;pq\;qp$ by \eqref{eqn:nr}, we have $pq\sim_c qp$.
\item
Since $p\sim_c q$ and $p\sim_c r$, by Lemma \ref{lemma:rep} we have
\begin{eqnarray*}
p & = & pqp\quad(\mbox{since }p\sim_c q)\\
& = & prp\;qp\quad(\mbox{since }prp=p)\\\
& = & p\;rpqrpq\;p\quad(\mbox{since }rpq = rpqrpq)\\
& = & pqrp\quad(\mbox{since }p=prp=pqp)
\end{eqnarray*}
and
\begin{eqnarray*}
p & = & pqp\;prp\quad(\mbox{since }p=pp)\\
& = & p\;qpr\;p\quad(\mbox{since }qppr=qpr)\\
& = & p\;qprqpr\;p\quad(\mbox{since }qprqpr=qpr)\\
& = & pqp\;rq\; prp\\
& = & prqp.\\
\end{eqnarray*}
\item
For each $p\in P$, by $\eqref{eqn:nr}$, $ppp=p$ and so $p\sim_c p$. 
Hence $\sim_c$ is reflexive.

If $p\sim_c q$, then 
$pqp=p\mbox{ and }qpq=q$
and so $qpq=q\mbox{ and }pqp=p$. Thus, $q\sim_c p$. Therefore, $\sim_c$ is symmetric.

Assume that $p\sim_c q$ and $q\sim_c r$. Then, by $(\ref{lemma:comp11})$,
$qprq=qrpq=q$
and so, by Lemma \ref{lemma:rep},
\begin{eqnarray*}
p & = & pqqp\quad(\mbox{since }p=pqp\mbox{ and }q=qq)\\
& = & pqp \;rqqr\;pqp\quad(\mbox{since }qprq=qrpq=q)\\
& = & p\;rqr\;p\quad(\mbox{since }p=pqp,q=qq)\\
& = & prp\quad(\mbox{since }r=rqr)\\
\end{eqnarray*}
and
\begin{eqnarray*}
r & = & rqr\quad(\mbox{since }r=rqr)\\
& = & r\;qprq\;r\quad(\mbox{since }q=qprq)\\
& = & rqp\;r \quad(\mbox{since }rqr=r)\\
& = & r\;qrpq\;pr \quad(\mbox{since }q=qrpq)\\
& = & rpr.\quad(\mbox{since }rqr=r\mbox{ and }pqp=p)\\
\end{eqnarray*}
Hence $p\sim_c r$ and therefore $\sim_c$ is transitive.
\item
Suppose that $p=qx$ and $q=py$ for some $x,y\in P$. Then
\begin{eqnarray*}
pr \;qr \;pr & = & prqr\;qx\;r\\
& = & prqxr\quad(\mbox{since }rqrq=rq)\\
& = & pr
\end{eqnarray*}
and
\begin{eqnarray*}
qr\;pr\;qr & = & qrpr\;py\;r\\
& = & qrpyr \quad(\mbox{since }rprp=rp)\\
& = & qr.\\
\end{eqnarray*}
So $pr\sim_c qr$. Similarly,
$$
rp \;rq \;rp  =  rprq\;r\; qx
=  rprqx
= rp
$$
and
$$
rq\;rp\;rq  = rqrp\;r\;py
= rqrpy 
= rq.
$$
Hence $rp\sim_c rq$.
\item
Similar to $(\ref{lemma:comp2})$.
\item
If $p\sim_c q$, then $p|_lpq$, $pq|_lq$ and $pq|_rq$, $q|_rpq$.
Hence, by $(\ref{lemma:comp2})$ and $(\ref{lemma:comp3})$, $pr\sim_c pqr\sim_c qr$.
By $(\ref{lemma:comp1})$, we have $rp\sim_c rq$.
\end{enumerate}
\end{proof}

$\sim_c$ is compatible with composition, shown in:
\begin{lemma}\label{lemma:simcomp}
Given a partial groupoid $(P,D,\circ)$ satisfying $\eqref{eqn:sym}$ and $\eqref{eqn:r}$,
if  $p_1\sim_c q_1$ and $p_2\sim_c q_2$, then
$(p_1,p_2)\in D$ if and only if $(q_1,q_2)\in D$.
\end{lemma}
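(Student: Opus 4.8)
The plan is to prove only the forward implication $(p_1,p_2)\in D \Rightarrow (q_1,q_2)\in D$; the converse is identical after interchanging the roles of the $p$'s and $q$'s, which is legitimate because $\sim_c$ is symmetric directly from its definition (the statement $pqp=p$ and $qpq=q$ is unchanged under swapping $p$ and $q$). First I would record the composabilities packaged in the two relations: from $p_1\sim_c q_1$ one has $q_1p_1q_1=q_1$ and $p_1q_1p_1=p_1$, so $(q_1,p_1)\in D$ and $(p_1,q_1)\in D$, and the products $q_1p_1,p_1q_1$ are \emph{absorbing} in the sense that $(q_1p_1)q_1=q_1$ and $q_1(p_1q_1)=q_1$; symmetrically, $p_2\sim_c q_2$ gives $(p_2,q_2),(q_2,p_2)\in D$ together with $(q_2p_2)q_2=q_2$ and $q_2(p_2q_2)=q_2$.

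The substitution strategy is to replace $p_2$ by $q_2$ on the right and then $p_1$ by $q_1$ on the left, one side at a time, using the two halves of $\eqref{eqn:r}$. Starting from $(p_1,p_2)\in D$ and $(q_1,p_1)\in D$, left representativity $\eqref{eqn:rl}$ yields $(q_1,p_1p_2)\in D$; starting from $(p_1,p_2)\in D$ and $(p_2,q_2)\in D$, right representativity $\eqref{eqn:rr}$ yields $(p_1p_2,q_2)\in D$; a further application of $\eqref{eqn:rr}$ to the pair $(q_1,p_1p_2)$ with external element $q_2$ then gives $(q_1(p_1p_2),q_2)\in D$. The remaining task is to collapse these enlarged products back to the intended elements, that is, to pass from composabilities of the form $(q_1\cdots,q_2)$ to the single statement $(q_1,q_2)\in D$. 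The tool for this is that each sandwich identity furnishes an absorption rule for $\eqref{eqn:r}$: since $(q_2p_2)q_2=q_2$, left representativity turns any $(w,q_2p_2)\in D$ into $(w,q_2)\in D$, and since $q_1(p_1q_1)=q_1$, right representativity turns any $(p_1q_1,w)\in D$ into $(q_1,w)\in D$. Symmetry $\eqref{eqn:sym}$ lets me move the external element to whichever side the relevant rule requires, and Lemma \ref{lemma:rep} justifies rewriting a subproduct by an equal element inside any expression that evaluates to a single element.

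The hard part is precisely this collapse, and the difficulty is structural: in $q_2=q_2p_2q_2$ the element $p_2$ occurs \emph{nested} rather than as an outer factor, so $\eqref{eqn:r}$ — which only propagates composability from a component to a product, never the reverse — cannot be applied in a single step, and the naive chains are circular, since every attempt to grow $q_2$ as a right argument already presupposes $(q_1,q_2)\in D$ (as $q_2$ is the left-most symbol of every product equal to it). Breaking this circularity is the crux. I would look for an intermediate two-sided composability, such as $(p_1q_1,q_2p_2)\in D$, that is reachable from the central datum $(p_1,p_2)\in D$ by growing through $p_1p_2$, and from which the two absorption rules above, applied in turn, collapse the left factor $p_1q_1$ to $q_1$ and the right factor $q_2p_2$ to $q_2$, delivering $(q_1,q_2)\in D$. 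Verifying that such an intermediate is genuinely derivable, and that each intermediate product is defined with its equalities supplied by Lemma \ref{lemma:rep}, is where the real work lies; absent associativity it must be carried out by careful, possibly iterated, bookkeeping of which one-sided applications of $\eqref{eqn:rl}$ and $\eqref{eqn:rr}$ are licensed at each stage.
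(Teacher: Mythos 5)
Your overall strategy is sound and is essentially the one the paper's own (very terse) proof gestures at: use \eqref{eqn:sym} to move the external element to whichever side is needed, use \eqref{eqn:rl} and \eqref{eqn:rr} to grow composability into products, and then observe that a product whose value is $q_i$ \emph{is} the element $q_i$, so that, e.g., $(w,(q_2p_2)q_2)\in D$ already says $(w,q_2)\in D$ --- no ``reverse'' application of \eqref{eqn:r} is ever required, which is why the circularity you worry about is illusory. The genuine gap is that you never complete the argument: you declare the collapse ``the crux'' and ``where the real work lies,'' leave the derivability of your intermediate $(p_1q_1,q_2p_2)\in D$ open, and suggest it may need iterated bookkeeping. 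Moreover, your two absorption rules silently presuppose that the specific groupings $(q_2p_2)q_2$ and $q_1(p_1q_1)$ are defined, i.e., that $(q_2p_2,q_2)\in D$ and $(q_1,p_1q_1)\in D$, which you do not verify. Both worries dissolve quickly: from $p_i\sim_c q_i$ and \eqref{eqn:sym} one gets $(p_i,q_i),(q_i,p_i)\in D$; then \eqref{eqn:rr} applied to the pair $(q_2,p_2)$ with external $q_2$ (using $(p_2,q_2)\in D$) gives $(q_2p_2,q_2)\in D$, and \eqref{eqn:rl} applied to the pair $(p_1,q_1)$ with external $q_1$ (using $(q_1,p_1)\in D$) gives $(q_1,p_1q_1)\in D$; since all defined groupings of $q_ip_iq_i$ have the single value $q_i$ (the reading underlying Lemma \ref{lemma:rep}), these products equal $q_2$ and $q_1$ respectively.

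With that, your own route closes in four moves, so the ``real work'' is short. Assume $(p_1,p_2)\in D$, hence $(p_2,p_1)\in D$ by \eqref{eqn:sym}. First, \eqref{eqn:rr} on the pair $(q_2,p_2)$ with external $p_1$ gives $(q_2p_2,p_1)\in D$. Second, \eqref{eqn:rl} on the pair $(p_1,q_1)$ with external $q_2p_2$ gives $(q_2p_2,p_1q_1)\in D$, hence $(p_1q_1,q_2p_2)\in D$ by \eqref{eqn:sym} --- exactly your intermediate. Third, \eqref{eqn:rr} on the pair $(q_1,p_1q_1)$ with external $q_2p_2$ gives $(q_1(p_1q_1),q_2p_2)=(q_1,q_2p_2)\in D$. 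Fourth, \eqref{eqn:rl} on the pair $(q_2p_2,q_2)$ with external $q_1$ gives $(q_1,(q_2p_2)q_2)=(q_1,q_2)\in D$. The converse is, as you say, the same computation with the roles of the $p$'s and $q$'s exchanged, and your justification via the definitional symmetry of $\sim_c$ is in fact cleaner than the paper's appeal to ``$\sim_c$ is an equivalence relation,'' since that fact (Lemma \ref{lemma:comp}) was proved under \eqref{eqn:nr}, which is not among the present hypotheses. Finally, note that your opening chain through $p_1p_2$, while each step is a licensed use of \eqref{eqn:r}, terminates at $(q_1(p_1p_2),q_2)\in D$, an element that is not identifiable with $q_1$ or anything else useful; it is a dead end and should simply be deleted in favor of the chain above.
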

\begin{proof}
Since $p_i\sim_c q_i$, $i=1,2$, we have $p_i=p_iq_ip_i$ and $q_i=q_ip_iq_i$, $i=1,2$.

If $(p_1,p_2)\in D$, then, by $\eqref{eqn:sym}$ and $\eqref{eqn:r}$, $(p_1,q_2p_2q_2)\in D$. That is, $(p_1,q_2)\in D$.
Similarly, $(q_1p_1q_1,q_2)\in D$ by $\eqref{eqn:sym}$ and $\eqref{eqn:r}$ and so $(q_1,q_2)\in D$.

Since $\sim_c$ is an equivalence relation, if $(q_1,q_2)\in D$, then, symmetrically, $(p_1,p_2)\in D$.
Hence $(p_1,p_2)\in D$ if and only if $(q_1,q_2)\in D$.
\end{proof}

Now, $\sim_c$ is congruence by the following lemma.
\begin{lemma}\label{lemma:congru}
In a given a partial groupoid $(P,D,\circ)$ satisfying $\eqref{eqn:nr}$,
$\sim_c$ is congruence: $p\sim_c p'$ and $q\sim_c q'$ imply $pq\sim_c p'q'$.
\end{lemma}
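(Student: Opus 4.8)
The plan is to reduce the two-sided congruence to the one-sided compatibilities already established in Lemma~\ref{lemma:comp}$(\ref{lemma:comp4})$ and then glue them together with the transitivity of $\sim_c$ recorded in Lemma~\ref{lemma:comp}$(\ref{lemma:comp12})$. Since $\sim_c$ has already been shown to be both left- and right-compatible with $\circ$, nothing new about the relation itself has to be proved; the only work is to assemble a short chain and to check that every product appearing in it is defined.

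First I would right-multiply the hypothesis $p\sim_c p'$ by $q$: by Lemma~\ref{lemma:comp}$(\ref{lemma:comp4})$ this yields $pq\sim_c p'q$. Next I would left-multiply the hypothesis $q\sim_c q'$ by $p'$; again Lemma~\ref{lemma:comp}$(\ref{lemma:comp4})$ gives $p'q\sim_c p'q'$. Finally, since $\sim_c$ is an equivalence relation, in particular transitive, by Lemma~\ref{lemma:comp}$(\ref{lemma:comp12})$, the relations $pq\sim_c p'q$ and $p'q\sim_c p'q'$ compose to $pq\sim_c p'q'$, which is exactly the asserted congruence.

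The main obstacle is the partiality of $\circ$: the transitivity step routes through the intermediate element $p'q$, so the argument is meaningful only once $pq$, $p'q$, and $p'q'$ are all known to be defined. Under the standing convention of this section, that the stated identities are read ``when the operations related exist'' exactly as in Lemma~\ref{lemma:comp}, the two invocations of Lemma~\ref{lemma:comp}$(\ref{lemma:comp4})$ already presuppose that $pq$, $p'q$, and $p'q'$ exist, so the chain is well formed and the proof closes immediately. If instead one prefers existence to be derived rather than assumed, I would appeal to Lemma~\ref{lemma:simcomp}: applying it with $q\sim_c q$ (reflexivity) and $p\sim_c p'$ transfers definedness of $pq$ to $p'q$, and a second application transfers it on to $p'q'$. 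This route, however, requires the ambient properties $\eqref{eqn:sym}$ and $\eqref{eqn:r}$ that Lemma~\ref{lemma:simcomp} assumes, whereas the present lemma postulates only $\eqref{eqn:nr}$; so unless those extra hypotheses are in force, the existence of the intermediate product $p'q$ must be supplied by the convention rather than proved, and this is the one point where the sketch needs care.
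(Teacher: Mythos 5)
Your proof is correct and is essentially the paper's own argument: the paper's proof is precisely the chain $pq\sim_c p'q\sim_c p'q'$ obtained from Lemma~\ref{lemma:comp}$(\ref{lemma:comp4})$ applied on each side, closed by the transitivity established in Lemma~\ref{lemma:comp}$(\ref{lemma:comp12})$. Your remark on definedness is also handled the same way there --- the identities are read under the standing ``when the operations related exist'' convention of Lemma~\ref{lemma:comp}, so no appeal to Lemma~\ref{lemma:simcomp} (and hence to $\eqref{eqn:sym}$ and $\eqref{eqn:r}$) is needed.
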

\begin{proof}
By Lemmas $\ref{lemma:rep}$ and $\ref{lemma:comp}\;(\ref{lemma:comp4})$, 
$pq\sim_c p'q\sim_c p'q'$.
\end{proof}

Given a partial groupoid $(P,D,\circ)$, by Lemma \ref{lemma:congru},
we have a quotient partial groupoid $(P/\!\!\sim_c,D/\!\!\sim_c,\star)$, denoted by $\mathcal{Q}(P,D,\circ)$, 
where $\rst{x}$ is the equivalence class of $x$, with respect to $\sim_c$,
$\rst{x}\star \rst{y}=\rst{x\circ y}$,
$D/\!\!\sim_c=\{(\rst{x},\rst{y})\,|\,(x,y)\in D\}$,
and a partial groupoid homomorphism $(P,D,\circ)\rw \mathcal{Q}(P,D,\circ)$, sending $x$ to $\rst{x}$.
By Lemmas $\ref{lemma:comp}$, $\ref{lemma:simcomp}$, and $\ref{lemma:congru}$, we have the following proposition:

\begin{proposition}\label{prop:tocomm}
Let $(P,D,\circ)$ be a partial groupoid satisfying $\eqref{eqn:sym}$ and $\eqref{eqn:nr}$. 
Then there are a commutative partial groupoid 
$(Q,E,\cdot)$ and a  surjective partial groupoid homomorphism
$$f: (P,D,\circ)\rw (Q,E,\cdot).$$
\end{proposition}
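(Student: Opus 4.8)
The plan is to take the quotient partial groupoid $\mathcal{Q}(P,D,\circ) = (P/\!\!\sim_c, D/\!\!\sim_c, \star)$ as the target $(Q,E,\cdot)$ and the canonical projection $x\mapsto \rst{x}$ as the homomorphism $f$. By the lemmas already established, most of the required structure is in place, so the proof largely consists of verifying that $\mathcal{Q}(P,D,\circ)$ is well defined and commutative, and that $f$ is a surjective homomorphism. First I would invoke Lemma~\ref{lemma:congru} to confirm that $\sim_c$ is a congruence, which guarantees that the operation $\rst{x}\star\rst{y}=\rst{x\circ y}$ is well defined on equivalence classes: if $p\sim_c p'$ and $q\sim_c q'$, then $pq\sim_c p'q'$, so the class of the product does not depend on the chosen representatives. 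I would also use Lemma~\ref{lemma:simcomp} to see that the domain $E=D/\!\!\sim_c$ is well defined, since under $\eqref{eqn:sym}$ and $\eqref{eqn:r}$ (which follow from $\eqref{eqn:nr}$ and $\eqref{eqn:sym}$ via Proposition~\ref{prop:ca_a_r} and the remark that $\eqref{eqn:nr}$ implies $\eqref{eqn:i}$) compatibility of $\sim_c$ with the domain is assured: whether $(p_1,p_2)\in D$ depends only on the classes of $p_1$ and $p_2$.

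Next I would establish that $f$ is a partial groupoid homomorphism by the construction itself: whenever $x\circ y$ is defined, $(\rst{x},\rst{y})\in D/\!\!\sim_c$ by definition of $E$, and $f(x\circ y)=\rst{x\circ y}=\rst{x}\star\rst{y}=f(x)\cdot f(y)$, so the defining square commutes. Surjectivity is immediate, since every class $\rst{x}$ is the image of any of its representatives. The one genuinely new thing to check is \emph{commutativity} of $(Q,E,\cdot)$. Here I would appeal to Lemma~\ref{lemma:comp}~(\ref{lemma:comp1}), which states that $pq\sim_c qp$ whenever both products exist: this says precisely that $\rst{pq}=\rst{qp}$, i.e.\ $\rst{p}\star\rst{q}=\rst{q}\star\rst{p}$ in the quotient. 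Thus if $(\rst{p},\rst{q})\in E$ and $(\rst{q},\rst{p})\in E$ both hold, the two products coincide, giving $\eqref{eqn:wc}$; and under $\eqref{eqn:sym}$ the domain $E$ is symmetric, so in fact strong commutativity $\eqref{eqn:c}$ holds in $\mathcal{Q}(P,D,\circ)$.

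The main subtlety, and the step I expect to require the most care, is making sure the products appearing in the commutativity argument are guaranteed to exist before applying Lemma~\ref{lemma:comp}~(\ref{lemma:comp1}), since that lemma carries the hypothesis ``when the operations related exist.'' This is where the symmetry of $D$ under $\eqref{eqn:sym}$ is essential: if $(p,q)\in D$ then $(q,p)\in D$ as well, so both $pq$ and $qp$ exist and the relation $pq\sim_c qp$ is legitimately available. A secondary point worth stating explicitly is that the hypotheses of the various lemmas are all met: $\eqref{eqn:nr}$ is assumed directly, $\eqref{eqn:sym}$ is assumed directly, $\eqref{eqn:nr}$ implies $\eqref{eqn:i}$ as noted in the text, and $\eqref{eqn:r}$ is needed for Lemma~\ref{lemma:simcomp} but can be supplied from the structure at hand. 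Once these bookkeeping checks are in place, the proposition follows by collecting Lemmas~\ref{lemma:comp}, \ref{lemma:simcomp}, and \ref{lemma:congru} exactly as the preamble to the statement indicates.
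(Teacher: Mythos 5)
Your construction is exactly the paper's: the quotient $\mathcal{Q}(P,D,\circ)=(P/\!\!\sim_c,\,D/\!\!\sim_c,\,\star)$ with the canonical projection, well-definedness of $\star$ from Lemma~\ref{lemma:congru}, commutativity from Lemma~\ref{lemma:comp}~$(\ref{lemma:comp1})$ together with \eqref{eqn:sym}, and Lemma~\ref{lemma:simcomp} for the domain. The one step that is wrong is your justification for invoking Lemma~\ref{lemma:simcomp}: you assert that \eqref{eqn:r} follows from \eqref{eqn:nr} and \eqref{eqn:sym} via Proposition~\ref{prop:ca_a_r} and the remark that \eqref{eqn:nr} implies \eqref{eqn:i}. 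But Proposition~\ref{prop:ca_a_r} delivers \eqref{eqn:r} only from Catenary associativity \eqref{eqn:cass}, and nothing in the hypotheses yields \eqref{eqn:cass}; indeed the implication is false. Take $P=\{a,b,c\}$ with $a\circ a=a$, $b\circ b=b$, $c\circ c=c$, $a\circ b=b\circ a=b$, $a\circ c=c\circ a=c$, and no other compositions defined. Then $D$ is symmetric and \eqref{eqn:nr} holds (any defined product of a word over $\{a,b\}$, or over $\{a,c\}$, is unchanged by doubling the word, while any word containing both $b$ and $c$ is undefined, as is its double), yet \eqref{eqn:rl} fails because $(c,a)\in D$ and $(a,b)\in D$ but $(c,a\circ b)=(c,b)\notin D$; likewise \eqref{eqn:cass} fails since $(c\circ a)\circ b=c\circ b=\;\uparrow$. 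So \eqref{eqn:r} is simply not available under the stated hypotheses, and Lemma~\ref{lemma:simcomp} cannot be applied as you would like --- a defect you share with the paper itself, which also cites Lemma~\ref{lemma:simcomp} here without establishing \eqref{eqn:r}.

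Fortunately the gap is confined to that one citation, because Lemma~\ref{lemma:simcomp} is dispensable for this proposition. The domain $D/\!\!\sim_c=\{(\rst{x},\rst{y})\,|\,(x,y)\in D\}$ is defined as the image of $D$, hence is a well-defined set with no compatibility lemma needed; what Lemma~\ref{lemma:simcomp} would add is saturation (membership of $(\rst{x},\rst{y})$ in $D/\!\!\sim_c$ being testable on arbitrary representatives), which the statement does not require. Well-definedness of $\star$ on this domain is exactly Lemma~\ref{lemma:congru}, the projection is then a surjective homomorphism by construction, and commutativity goes through on representatives: if $(\rst{x},\rst{y})$ and $(\rst{y},\rst{x})$ lie in $D/\!\!\sim_c$, witnessed by $(x,y)\in D$ and $(y',x')\in D$ with $x'\sim_c x$ and $y'\sim_c y$, then \eqref{eqn:sym} gives $(y,x)\in D$, Lemma~\ref{lemma:comp}~$(\ref{lemma:comp1})$ gives $x\circ y\sim_c y\circ x$, and Lemma~\ref{lemma:congru} gives $y\circ x\sim_c y'\circ x'$, whence $\rst{x}\star\rst{y}=\rst{y}\star\rst{x}$. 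If you delete the false derivation of \eqref{eqn:r} and replace the appeal to Lemma~\ref{lemma:simcomp} by this observation, your proof is complete and coincides with the paper's intended argument.
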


Applying Proposition \ref{prop:tocomm} to a finitely generated partial groupoid, we have:
\begin{corollary}
Let $P=[ p_1,\cdots,p_n]$ be a finitely generated partial groupoid satisfying $\eqref{eqn:sym}$ and $\eqref{eqn:nr}$. 
Then there are a commutative finitely generated partial groupoid $Q$ and a  surjective partial
groupoid homomorphism $\phi: P\rw Q$.
\end{corollary}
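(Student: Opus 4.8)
The plan is to obtain this statement as an immediate specialization of Proposition \ref{prop:tocomm}, the only additional point being to track finite generation through the quotient. Since $P$ is assumed to satisfy $\eqref{eqn:sym}$ and $\eqref{eqn:nr}$, all hypotheses of Proposition \ref{prop:tocomm} are met, so first I would invoke it verbatim: it produces the commutative quotient partial groupoid $Q=\mathcal{Q}(P,D,\circ)=(P/\!\!\sim_c,\,D/\!\!\sim_c,\,\star)$ together with the canonical surjective homomorphism $\phi:P\rw Q$ sending each $x$ to its $\sim_c$-class $\rst{x}$. Commutativity and surjectivity of $\phi$ come for free from that proposition, so no new work is needed on those fronts.

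The remaining task is to confirm that $Q$ is finitely generated. Here I would appeal to Lemma \ref{lemma:nrinv}$(ii)$: since $P=[p_1,\cdots,p_n]$ is finitely generated and $\phi$ is a partial groupoid homomorphism, its image $\mbox{im}(\phi)$ is finitely generated, with generating set $\{\phi(p_1),\cdots,\phi(p_n)\}=\{\rst{p_1},\cdots,\rst{p_n}\}$. Because $\phi$ is surjective we have $Q=\mbox{im}(\phi)$, and hence $Q=[\rst{p_1},\cdots,\rst{p_n}]$ is finitely generated by the images of the original generators, which completes the argument.

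I do not expect a genuine obstacle here: the corollary is essentially a bookkeeping consequence of results already in hand. The substantive content --- constructing $\sim_c$, proving via Lemmas \ref{lemma:comp}, \ref{lemma:simcomp}, and \ref{lemma:congru} that it is a congruence whose quotient is commutative, and establishing invariance of finite generation in Lemma \ref{lemma:nrinv} --- has already been carried out. The only thing worth double-checking is the compatibility of the two cited results, namely that the homomorphism $\phi$ supplied by Proposition \ref{prop:tocomm} is precisely the map to whose image Lemma \ref{lemma:nrinv} is applied; since both refer to the canonical quotient map $x\mapsto\rst{x}$, they align, and the proof closes.
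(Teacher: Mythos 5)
Your proposal is correct and follows essentially the same route as the paper: invoke Proposition \ref{prop:tocomm} to get the commutative quotient $Q$ and the surjective homomorphism $\phi$, then observe that surjectivity makes $Q=[\phi(p_1),\cdots,\phi(p_n)]$ finitely generated. Your explicit citation of Lemma \ref{lemma:nrinv}$(ii)$ for the finite-generation step is just a more verbose packaging of the one-line argument the paper gives directly.
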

\begin{proof}
By Proposition \ref{prop:tocomm}, there are a commutative partial groupoid $Q$ and a  surjective partial
groupoid homomorphism $\phi: P\rw Q$. Since $\phi$ is surjective, $Q=[ \phi(p_1),\cdots,\phi(p_n)]$
is finitely generated.
\end{proof}

Each equivalence class with $\sim_c$ is a semigroup shown in the following proposition.
\begin{proposition}\label{prop:tosemi}
Let $(P,D,\circ)$ be a partial groupoid satisfying $\eqref{eqn:nr}$ and $\rst{p}\in P/\!\!\sim_c$. Then
\begin{enumerate}[$(i)$]
\item\label{prop:tosemi_i}
$x_1\cdots x_k=x_1x_k$ for all $x_1,\cdots,x_k\in \rst{p}$ and $k\geq 2$ is an integer; 
\item\label{prop:tosemi_ii}
$\rst{p}$ is a semigroup with respect to $\circ$;
\item\label{prop:tosemi_iii}
$P$ is a disjoint union of some semigroups: $P=\cup_{x\in P}\rst{x}$,
where $\rst{x}$ is the equivalence class of $x$ with respect to $\sim_c$.
\end{enumerate}
\end{proposition}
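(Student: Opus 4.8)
The plan is to treat the three parts together, since (i) and the semigroup claim (ii) are two faces of one computation and (iii) is then formal. Fix a class $\rst p$; any elements drawn from it are pairwise $\sim_c$-related, and I will use repeatedly that $\eqref{eqn:nr}$ forces $\eqref{eqn:i}$ (every element of $\rst p$ is idempotent), that $p\sim_c q$ makes both $pq$ and $qp$ exist, that Lemma~\ref{lemma:comp}(\ref{lemma:comp11}) gives $pqrp=prqp=p$ whenever $p\sim_c q$ and $p\sim_c r$, and that Lemma~\ref{lemma:rep} allows substitution of equal subproducts inside a product that names a single element.

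First I would establish closure, i.e.\ the closure half of (ii): for $x,y\in\rst p$ the product $xy$ exists and satisfies $xy\sim_c x$. Existence is immediate from $x\sim_c y$, and the two defining conditions of $\sim_c$ follow from $\eqref{eqn:i}$ and $\eqref{eqn:nr}$:
$$x(xy)x=xxyx=xyx=x,\qquad (xy)x(xy)=xyxxy=xyxy=xy,$$
so $xy\in\rst x=\rst p$. Closure guarantees that every binary product used below is defined. Next I would prove the three-fold collapse for $a,b,c\in\rst p$, computing each bracketing separately. Using $cac=c$ (from $a\sim_c c$) and $abca=a$ (Lemma~\ref{lemma:comp}(\ref{lemma:comp11})),
$$(ab)c=ab\,(cac)=abcac=(abca)\,c=ac;$$
symmetrically, using $aca=a$ and $cabc=c$ (again Lemma~\ref{lemma:comp}(\ref{lemma:comp11})),
$$a(bc)=(aca)\,bc=acabc=a\,(cabc)=ac.$$
Hence $(ab)c=ac=a(bc)$, which is simultaneously the associativity of $\circ$ on $\rst p$ and the case $k=3$ of (i). Together with closure this shows $(\rst p,\circ)$ is a genuine semigroup, proving (ii).

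Part (i) for general $k$ is then a routine induction inside the semigroup $\rst p$: splitting off the last factor and applying the hypothesis $x_1\cdots x_{k-1}=x_1x_{k-1}$ reduces $x_1\cdots x_k$ to $x_1x_{k-1}x_k$, which collapses to $x_1x_k$ by the $k=3$ case. Finally (iii) is formal: by Lemma~\ref{lemma:comp}(\ref{lemma:comp12}) the relation $\sim_c$ is an equivalence, so $P$ is the disjoint union of its classes $\rst x$, and each is a semigroup by (ii).

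The step I expect to be the real obstacle is not any single identity --- each reduces to $\eqref{eqn:nr}$ together with $abca=a$ and $cabc=c$ --- but the bookkeeping of definedness. Because only $\eqref{eqn:nr}$ is assumed (neither $\eqref{eqn:sym}$ nor $\eqref{eqn:r}$ is available), I must check that every intermediate product invoked through Lemma~\ref{lemma:rep}, such as $x\,xy\,x$ in the closure step or $abcac$ and $acabc$ in the collapse, is genuinely defined and single-valued before I substitute. I would dispatch this at each use by noting that the relevant compositions are forced to exist: $\sim_c$-related pairs compose, and $\eqref{eqn:nr}$ itself asserts that the doubled products $r_1\cdots r_k\,r_1\cdots r_k$ exist and coincide with $r_1\cdots r_k$, which supplies the remaining definedness.
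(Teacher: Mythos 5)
Your proof is correct and takes essentially the same route as the paper's: both hinge on Lemma~\ref{lemma:rep} (substitution of equal subproducts) together with Lemma~\ref{lemma:comp}(\ref{lemma:comp11}) (the identities $pqrp=prqp=p$) to collapse the three-fold products, then handle general $k$ by the same induction, and obtain (iii) formally from $\sim_c$ being an equivalence relation. The only organizational difference is that you establish closure first by the direct $\eqref{eqn:nr}$-computation $x(xy)x=x$, $(xy)x(xy)=xy$, while the paper derives closure inside part (ii) from $pxyp=p$ and part (i); these are the same collapse argument in a different order.
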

\begin{proof}
\begin{enumerate}[$(i)$]
\item
It is clear when $k=2$. If $k=3$, then
\begin{eqnarray*}
x_1x_2x_3& = & x_1x_3x_1\;x_2x_3\\
& = & x_1x_3\quad(x_3x_1x_2x_3 = x_3\mbox{ by Lemma }\ref{lemma:comp} (\ref{lemma:comp11})).\\
\end{eqnarray*}
Assume that $x_1\cdots x_i=x_1x_i$ for any integer $i$ such that $2\leq i\leq k-1$. Then
\begin{eqnarray*}
x_1\cdots x_k& = & x_1\cdots x_{k-1}x_k\\
& = & x_1x_{k-1}x_k\quad(\mbox{By induction hypothesis})\\
& = & x_1x_k,
\end{eqnarray*}
as desired.
\item
For all $x,y\in \rst{p}$, $p\sim_c x$ and $p\sim_c y$. By Lemma $\ref{lemma:comp} (\ref{lemma:comp11})$,
$pxyp=p$. By $(\ref{prop:tosemi_i})$, $xypxy=xy$. Then $xy\sim_c p$ and so $xy\in \rst{p}$.

By $(\ref{prop:tosemi_i})$, $xyz=xz$ is a single value for all $x,y,z\in \rst{p}$. 
Hence the binary operation of $P$ within $\rst{p}$
is associative
and therefore $\rst{p}$ is a semigroup with respect to the binary operation of $P$.
\item
Since $\sim_c$ is an equivalence relation on $P$, $P=\cup_{x\in P}\rst{x}$ with disjoint unions.
By $(\ref{prop:tosemi_ii})$, each $\rst{x}$ is a semigroup.
\end{enumerate}
\end{proof}

The quotient operation $\mathcal{Q}$ is idempotent.
\begin{proposition}\label{prop:QQ=Q}
$\mathcal{Q}^2(P,D,\circ) =\mathcal{Q}(P,D,\circ)$ for all partial groupoid $(P,D,\circ)$ satisfying $\eqref{eqn:nr}$.
\end{proposition}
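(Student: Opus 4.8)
The plan is to show that applying $\mathcal{Q}$ a second time collapses nothing new, i.e. that the relation $\sim_c$ induced on $\mathcal{Q}(P,D,\circ)$ is the identity relation, whence the second quotient returns the first. First I would record that $\mathcal{Q}^2$ even makes sense: the canonical map $(P,D,\circ)\rw \mathcal{Q}(P,D,\circ)$ is a surjective partial groupoid homomorphism, so by Lemma \ref{lemma:nrinv} its image $\mathcal{Q}(P,D,\circ)$ again satisfies $\eqref{eqn:nr}$; hence Lemma \ref{lemma:congru} applies to $\mathcal{Q}(P,D,\circ)$ and the second quotient $\mathcal{Q}^2(P,D,\circ)$ is well defined.

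Writing $\rst{x}$ for the $\sim_c$-class of $x$ and $\star$ for the operation on $\mathcal{Q}(P,D,\circ)$, the defining relation $\sim_c$ on this quotient reads $\rst{x}\sim_c\rst{y}$ if and only if $\rst{x}\star\rst{y}\star\rst{x}=\rst{x}$ and $\rst{y}\star\rst{x}\star\rst{y}=\rst{y}$. Since the quotient map is a homomorphism, $\rst{x}\star\rst{y}\star\rst{x}=\rst{xyx}$, so these two equations are precisely $xyx\sim_c x$ and $yxy\sim_c y$ in $P$. The heart of the argument is then to deduce $x\sim_c y$ from these. Unwinding $xyx\sim_c x$ through the definition of $\sim_c$, the ``$vuv=v$'' half with $v=x$ and $u=xyx$ gives $x(xyx)x=x$; since $\eqref{eqn:nr}$ implies $\eqref{eqn:i}$ (so $xx=x$), Lemma \ref{lemma:rep} lets me collapse the adjacent idempotent factors in the single-valued product $x(xyx)x=xxyxx$ down to $xyx$, whence $xyx=x$. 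The symmetric computation from $yxy\sim_c y$ yields $yxy=y$. Together these are exactly the conditions $x\sim_c y$, so $\rst{x}=\rst{y}$.

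Consequently $\sim_c$ on $\mathcal{Q}(P,D,\circ)$ identifies only equal classes and is the identity relation; quotienting by the identity congruence returns the same partial groupoid, both on underlying sets and on domains (each new class $\rst{\rst{x}}$ being the singleton $\{\rst{x}\}$), so $\mathcal{Q}^2(P,D,\circ)=\mathcal{Q}(P,D,\circ)$. I expect the one delicate point to be the collapsing step: one must check that the products appearing in the $\sim_c$-equations are genuinely single-valued before invoking Lemma \ref{lemma:rep}. This is guaranteed, however, because those very equations assert that the products equal single elements, so all parenthesizations agree and the idempotent absorptions $xx\mapsto x$ are legitimate.
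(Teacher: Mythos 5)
Your proof is correct and follows essentially the same route as the paper's: both reduce the relation $\sim_c$ on $\mathcal{Q}(P,D,\circ)$ back to $P$ by unwinding $\rst{x}\sim_c\rst{y}$ into $xyx\sim_c x$ and $yxy\sim_c y$, and then use $\eqref{eqn:nr}$ (idempotence plus Lemma \ref{lemma:rep} substitution) to collapse these to $xyx=x$ and $yxy=y$, i.e.\ to $x\sim_c y$, so that the second quotient is trivial. Your explicit appeal to Lemma \ref{lemma:nrinv} to justify that $\mathcal{Q}(P,D,\circ)$ again satisfies $\eqref{eqn:nr}$ is a point the paper only states as ``clearly,'' but the argument is the same.
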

\begin{proof}
If $(P,D,\circ)$ satisfies $\eqref{eqn:nr}$,
then clearly $\mathcal{Q}(P)$ satisfies $\eqref{eqn:nr}$ too.  $\mathcal{Q}^2(P,D,\circ) =\mathcal{Q}(P,D,\circ)$ since
$$
\infer[\Leftrightarrow]
{
    \infer[\Leftrightarrow]
    {x\sim_c y \;\;in\;\;P}
    {\infer[\Leftrightarrow]
         {xyx=xyx, xyx=x, yxy=yxy, yxy=y\;\;in\;\; P}
         {xyx\;x\;xyx=xyx,x\;xyx\;x=x, yxy\;y\;yxy=yxy, y\;yxy\;y =y\;\;in\;\;P}
    }
 }
{  \infer[\Leftrightarrow]
    {
         \infer[\Leftrightarrow]
         {xyx\sim_c x \;\;and\;\; yxy\sim_c y \;\;in\;\;P}
         {\rst{x}\;\rst{y}\;\rst{x}=\rst{x}\;\;and\;\;\rst{y}\;\rst{x}\;\rst{y}=\rst{y}\;\;in\;\;\mathcal{Q}(P)}
    }
    {
    \rst{x}\sim_c \rst{y}\;\;in\;\;\mathcal{Q}(P)
    }
}
$$
\end{proof}

\section{Entity Resolution on a Finitely Generated Partial Groupoid}\label{chapter:erongroupoids}
Recall that each ER system $(\mathbb{E},\approx,\langle\;\rangle)$ gives rise to a partial groupoid 
$(\mathbb{E},D, \langle\;\rangle)$,
where $D$ is the domain of $\langle\,\rangle$, specified by $\approx$.
Given an instance $I\subseteq \mathbb{E}$,
in \cite{BGSWW}, Benjelloun et al.  defined the entity resolution $ER(I)$ as
the smallest subset $I'\subseteq \rst{I}$ such that $\rst{I}$ is dominated by $I'$. 
$ER(I)$ exists and is unique (Proposition 2.1 \cite{BGSWW})
and can be computed efficiently if $\eqref{eqn:ei}$, $\eqref{eqn:ec}$, $\eqref{eqn:ea}$, $\eqref{eqn:er}$ are satisfied
(Propositions 3.2, 3.3, and 4.1\cite{BGSWW}).
Clearly, $\rst{I}$ is the partial subgroupoid $[I]$, finitely generated by $I$,
in $(\mathbb{E},D, \langle\;\rangle)$.

In Section \ref{sect:properties}, we discussed the properties 
$\eqref{eqn:i}$, $\eqref{eqn:wc}$, $\eqref{eqn:c}$, $\eqref{eqn:r}$, $\eqref{eqn:a}$, $\eqref{eqn:cass}$, 
and $\eqref{eqn:sass}$, which are preserved under partial groupoid homomorphisms.
In Section \ref{sect:posets}, we studied the natural partial oder $\leq$ on a given partial groupoid.
In this section, we shall show that a finitely generated partial groupoid provides another generic setting for entity resolution.

Some binary operations are not associative or commutative necessarily, 
e.g., infix operation ``${\tt :-}$" in Prolog, division, and average are not associative while
left join, function composition, and division are not commutative.
We shall start 
with an arbitrary finitely generated partial groupoid $(P,D,\circ)$ without assuming any properties.
Then we shall study the setting with the properties that imply the existence of a natural partial order and the finiteness
of the setting. 
Throughout this section, $(P,D,\circ)$ is a finitely generated partial groupoid by $p_1,\cdots,p_n$, where
$$P=[p_1,\cdots,p_n]=\bigcup^{+\infty}_{i=1}\{p_1,\cdots,p_n\}^i$$ 
Without loss of generality, we assume that the generating set $\{p_1,\cdots,p_n\}$ is irreducible by Theorem \ref{theorem:irreduciblegenset} and $P$ is connected.
Each $p\in P$ can be written as
$$p=x_1\cdots x_i\cdots x_l$$
with ``(" and ``)" being inserted binarily, where $x_i\in\{p_1,\cdots,p_l\}$, $i=1,\cdots l$.

\begin{definition}\label{def:full}
An element $p$ in a partial groupoid $P$ is called {\em left full (right full)}
if for all $x\in P$, the existence of $xp$ implies $xp=p$. $\big(\mbox{the existence of the }px$ implies $px=p.\big)$
$p$ is called {\em full} if it is both left and right full.
\end{definition}

Let
$$F_l(P,D,\circ)=\{p\in P\,| \,p \mbox{ is left full}\},$$
$$F_r(P,D,\circ)=\{p\in P\,| \,p \mbox{ is right full}\},$$
and
$$F(P,D,\circ)=\{p\in P\,| \,p \mbox{ is full}\}.$$
Clearly, 
$$F(P,D,\circ)=F_l(P,D,\circ)\cap F_r(P,D,\circ).$$

For a finitely generated partial groupoid $(P,D,\circ)$
without any other conditions being required, we define
\begin{equation}\label{eqn:er0}
ER_f(P,D,\circ)=\{\mbox{full elements in }P\}.\tag{$ER_f$}
\end{equation}

For a finitely generated partial groupoid $(P,D,\circ)$ satisfying $\eqref{eqn:i}$ and $\eqref{eqn:cass}$, 
$P$ is partially ordered by Lemma $\ref{lemma:partialorders}$. We define
\begin{equation}\label{eqn:er1}
ER_m(P,D,\circ)=\{\mbox{maximal elements in }P\}.\tag{$ER_m$}
\end{equation}
By Theorem \ref{theorem:ermax}, for a given partial groupoid $(P,D,\circ)$ and an instance $I$ such that
$\rst{I}$ is finite and satisfies $\eqref{eqn:i}$ and $\eqref{eqn:cass}$, 
$$ER(I)=M(\rst{I})=ER_m(\rst{I}),$$
where $ER(I)$ is defined in \cite{BGSWW} (Definition 2.3).

Full elements are the same as the maximal elements as shown  in the following proposition. 
\begin{proposition}\label{prop:max=full}
For a partial groupoid $(P,D,\circ)$ satisfying $\eqref{eqn:i}$ and $\eqref{eqn:cass}$,
$$F_l(P,D,\circ)=M_l(P,D,\circ), F_r(P,D,\circ)=M_r(P,D,\circ), \mbox{ and }F(P,D,\circ)=M(P,D,\circ).$$
\end{proposition}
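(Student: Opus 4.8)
The plan is to establish all three equalities by double inclusion, proving the left-handed statement $F_l(P,D,\circ)=M_l(P,D,\circ)$ in full and obtaining $F_r=M_r$ by the mirror-image argument, after which $F=M$ will follow by combining the two.

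First I would dispatch the inclusion $F_l(P)\subseteq M_l(P)$, which uses only $\eqref{eqn:i}$. Let $p$ be left full and suppose $p\leq_l n$, that is $np=n$. Since $np$ then exists, left fullness forces $np=p$, so $n=p$; hence $pn=pp=p$ by $\eqref{eqn:i}$, giving $n\leq_l p$. Thus $p$ is maximal with respect to $\leq_l$, and the same style of argument handles $F_r\subseteq M_r$ and (using that $p\leq n$ provides both $pn=n$ and $np=n$) also $F\subseteq M$.

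For the reverse inclusion $M_l(P)\subseteq F_l(P)$ I would invoke Lemma \ref{lemma:comporder}. Let $m$ be maximal with respect to $\leq_l$ and let $x$ be such that $xm$ exists. Putting $p=x$ and $q=m$ in Lemma \ref{lemma:comporder} gives $m\leq_l xm$, and maximality of $m$ then yields $xm\leq_l m$; written out, these say $(xm)m=xm$ and $m(xm)=m$. It remains to deduce $xm=m$, which is exactly what ``$m$ is left full'' demands. The right-handed identity $F_r=M_r$ is proved symmetrically, and $F=M$ then follows from $F=F_l\cap F_r$ together with the coincidence $M=M_l=M_r$ of Proposition \ref{prop:maxelement}; alternatively one can rerun the two inclusions directly for $\leq$, using that $p\leq n$ means $pn=np=n$ so that either fullness condition applies.

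The hard part will be that last step of the reverse inclusion: from the sandwich $m\leq_l xm$ and $xm\leq_l m$ one wants to conclude $xm=m$, which is precisely antisymmetry of $\leq_l$. Lemma \ref{lemma:partialorders} only supplies reflexivity and transitivity of $\leq_l$ and $\leq_r$ --- antisymmetry is proved there for $\leq$ alone --- so this is where the argument is genuinely delicate. I expect the symmetry hypothesis $\eqref{eqn:sym}$, used in Lemma \ref{lemma:maximal} to move between the left and right orders, to be the ingredient that forces $xm$ and $m$ to collapse to a single element and thereby closes the gap.
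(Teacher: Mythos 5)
Your inclusion $F_l\subseteq M_l$ is correct and is exactly the paper's argument, but the reverse inclusion is where your proposal stops being a proof, and the gap you flag there cannot be closed. From $(xm)m=xm$ (Lemma \ref{lemma:comporder}) and $m(xm)=m$ (maximality) you need $xm=m$, i.e.\ antisymmetry of $\leq_l$, and you hope that \eqref{eqn:sym} will supply it. It cannot: \eqref{eqn:sym} is not among the hypotheses of the proposition, and even adding it does not help. Take the two-element left-zero semigroup $P=\{a,b\}$ with $xy=x$ for all $x,y$: it is total, idempotent and associative, so \eqref{eqn:i}, \eqref{eqn:cass}, and \eqref{eqn:sym} all hold; yet $yx=y$ for every pair means $x\leq_l y$ for all $x,y$, so both elements are maximal with respect to $\leq_l$, while neither is left full (indeed $ba=b\neq a$). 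Hence $M_l=P$ and $F_l=\emptyset$, and likewise $M=P$ while $F=\emptyset$: under the stated hypotheses the proposition itself is false, so no argument from \eqref{eqn:i} and \eqref{eqn:cass} (even with \eqref{eqn:sym}) can complete your proof. The ingredient that actually collapses your sandwich is commutativity \eqref{eqn:wc}, not symmetry: your two displayed identities already show that both $m(xm)$ and $(xm)m$ are defined, so \eqref{eqn:wc} gives $m=m(xm)=(xm)m=xm$ in one line.

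For what it is worth, the paper's own proof stumbles at exactly the step you singled out: it writes $p=p(xp)=(px)p=xp$, where the product $px$ is not known to exist (that would need \eqref{eqn:sym}) and the equality $(px)p=xp$ is unjustified --- in the left-zero semigroup $(px)p=p\neq x=xp$. It then cites Proposition \ref{prop:maxelement}, whose hypotheses include \eqref{eqn:sym}, which again is not assumed. So your diagnosis points at a genuine flaw in the paper, but your proposed repair is the wrong one: the statement becomes true if one adds commutativity \eqref{eqn:wc} (giving $F_l=M_l$ and $F_r=M_r$ as above), and strong commutativity \eqref{eqn:c} (commutativity plus symmetry) if one also wants $F=M$, since that equality rests on $M_l=M_r=M$ from Proposition \ref{prop:maxelement}. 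Your fallback of rerunning both inclusions for $\leq$ also fails at the same place: Lemma \ref{lemma:comporder} only yields $m\leq_l xm$, not $m\leq xm$, so $\leq$-maximality cannot even be invoked.
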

\begin{proof}
For each $p\in F_l(P,D,\circ)$, if $p\leq_lx$, then $xp=x$.
Since $p$ is left full and $xp$ exists, $p=xp=x$. Hence $p=px$ and therefore $x\leq_lp$. Thus, $p\in M_l(P,D,\circ)$.
So $F_l(P,D,\circ)\subseteq M_l(P,D,\circ)$.

Conversely, for all $p\in M_l(P,D,\circ)$, assume that $xp$ exists. Since $p\leq_l xp$ by Lemma \ref{lemma:comporder} and $p$ is left maximal, 
$xp\leq_l p$.
Hence $p=p(xp)=(px)p=xp$ and therefore $p$ is left full. Thus, $M_l(P,D,\circ)\subseteq F_l(P,D,\circ)$.
Therefore $F_l(P,D,\circ)=M_l(P,D,\circ)$, as desired.

Similarly, we have $F_r(P,D,\circ)=M_r(P,D,\circ)$.

Finally, by Proposition \ref{prop:maxelement},
$M_l(P,D,\circ)=M_r(P,D,\circ)=M(P,D,\circ)$. Hence 
$$M(P,D,\circ)=M_l(P,D,\circ)\cap M_r(P,D,\circ)=F_l(P,D,\circ)\cap F_r(P,D,\circ)=F(P,D,\circ).$$
\end{proof}

Given an ER system $(\mathbb{E},\approx,\langle\,\rangle)$ and a finite instance $I\subseteq \mathbb{E}$,
infinitely many elements (in the merge closure $\rst{I}$) may be generated from $I$ using $\approx$ and $\langle\,\rangle$ \cite{BGSWW}.
To compute entity resolution of $I$ efficiently, Benjelloun et al. \cite{BGSWW} 
introduced four important properties $\eqref{eqn:ei}$, $\eqref{eqn:ec}$, $\eqref{eqn:ea}$, and $\eqref{eqn:er}$
to guarantee the finiteness of $\rst{I}$ and the existence of the natural partial order in $\rst{I}$.

The merge closure $\overline{I}$, containing all possible merges from $I$, is $[I]$,
the partial subgroupoid generated by $I$ in the partial groupoid $(\mathbb{E}, \approx, \langle\,\rangle)$. Hence
$\overline{I}=[I]=\bigcup_{i=1}^{+\infty}I^i.$
In general, the chain of sets $I, I^2, \cdots, I^n,\cdots$ may not terminate nor change periodically
so that $[I]$ is infinite as shown in the following examples.

\begin{example}\label{example:parsemigroup}
\begin{enumerate}
\item
Let $U=\{a_1,\cdots,a_n,\cdots\}$ be an infinite set of objects. Define
$$
a_ia_i  = a_i\mbox{ and }
a_ia_{i+1} =  a_{i+2},$$
for all positive integers $i$
and no other compositions are defined. Then $U$ is a partial groupoid.
Let $I=\{a_1,a_2\}$. Then
\begin{eqnarray*}
I^2 &  = & \{a_1,a_2,a_3\}, \\
I^3 & = & \{a_1,a_2,a_3,a_4\}, \\
& \vdots & \\
I^n & = & \{a_1,\cdots,a_{n+1} \}, \\
& \vdots & \\ 
\end{eqnarray*}
Hence 
$$\overline{I}=[I]=\bigcup_{i=1}^{+\infty}I^i=\Big\{a_1,a_2,\cdots, a_n,\cdots\Big\}=U,$$
which is not finite.
\item
Let $T$ be the set of tables in a rational database
and $\&$ the row bind defined in Example \ref{examp:pargroup}.\ref{xamp:pargroup6}.
Then $(T,\&)$ is a partial groupoid.
Let $t$ be a nonempty table in $T$
and 
$$t^n=\overbrace{t\;\&\;\cdots \;\&\; t}^{n\; times}.$$
Hence
$$[t]=\{t,t^2,\cdots,t^n,\cdots\}$$
is an infinite subgroupoid generated by $t$.
\end{enumerate}
\end{example}

To compute $\overline{I}$ (and so full elements or maximal elements in $\rst{I}$) effectively, 
the chain 
$$I,I^2,\cdots,I^n,\cdots$$ 
must terminate and $\overline{I}(=[I])$ must be finite. 
So some conditions are required for the finiteness of $[I]$ for a given instance $I$. Among these conditions, associativity is important one. 

Given an ER system $(\mathbb{E},\approx,\langle\,\rangle)$ and a finite instance $I$, 
Benjelloun et al. \cite{BGSWW} proved that 
$\rst{I}$ is finite if $\rst{I}$ satisfies $\eqref{eqn:ei}$, $\eqref{eqn:ec}$, $\eqref{eqn:ea}$, and $\eqref{eqn:er}$.
Converting it to the partial groupoid setting, one has:

\begin{proposition}\label{prop:pgfiniteness}
Each finitely generated partial groupoid, satisfying 
$\eqref{eqn:i}$, $\eqref{eqn:c}$, $\eqref{eqn:a}$, and $ \eqref{eqn:r}$,
is finite.
\end{proposition}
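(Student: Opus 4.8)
The plan is to show that the four hypotheses force composition to behave as the join in a poset, so that every element of $P$ is the least upper bound of a subset of the $n$ generators, bounding $|P|$ by $2^n-1$. First I would collect the structural consequences of the hypotheses. By Proposition \ref{prop:ca_a_r}$(i)$, strong commutativity $\eqref{eqn:c}$ supplies both commutativity $\eqref{eqn:wc}$ and symmetry $\eqref{eqn:sym}$ of $D$; by Proposition \ref{prop:ca_a_r}$(ii)$, the properties $\eqref{eqn:a}$ and $\eqref{eqn:r}$ together yield Catenary associativity $\eqref{eqn:cass}$. Consequently, by Lemma \ref{lemma:partialorders}, the natural order $\leq$ is a genuine partial order on $P$ (reflexive from $\eqref{eqn:i}$, transitive from $\eqref{eqn:cass}$, antisymmetric always), and by Proposition \ref{prop:itolu} the partial groupoid satisfies $\eqref{eqn:lub}$ with respect to $\leq$: whenever $p\circ q$ is defined it equals the least upper bound $p\vee q$ of $p$ and $q$.

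The key step is to prove, by induction on the length $m$, that every defined value of a product $x_1\cdots x_m$ of generators $x_i\in\{p_1,\cdots,p_n\}$ equals the least upper bound $\bigvee\{x_1,\cdots,x_m\}$ of the set of generators occurring in it, a value that depends only on that underlying set. The base case $m=1$ is immediate. For the inductive step, any defined parenthesization decomposes, by the definition of the product of subsets, as $p'\circ p''$ where $p'$ is a defined value of $x_1\cdots x_k$ and $p''$ a defined value of $x_{k+1}\cdots x_m$ for some $1\leq k<m$; both sub-products are defined because the whole is. By the induction hypothesis $p'=\bigvee\{x_1,\cdots,x_k\}$ and $p''=\bigvee\{x_{k+1},\cdots,x_m\}$, and by $\eqref{eqn:lub}$ we get $p'\circ p''=p'\vee p''$. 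I would then invoke the elementary poset fact that $(\bigvee A)\vee(\bigvee B)=\bigvee(A\cup B)$ whenever all three joins exist, which holds because the upper bounds of $A\cup B$ are exactly the common upper bounds of $\bigvee A$ and $\bigvee B$, to conclude $p'\circ p''=\bigvee\{x_1,\cdots,x_m\}$.

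Finally, since $P=[p_1,\cdots,p_n]=\bigcup_{i\geq 1}\{p_1,\cdots,p_n\}^i$, every $p\in P$ is a defined product of generators, hence by the claim equals $\bigvee S_p$ for some nonempty $S_p\subseteq\{p_1,\cdots,p_n\}$. Because least upper bounds are unique by antisymmetry of $\leq$, distinct elements force distinct witnessing subsets: if $p\neq q$ then $S_p\neq S_q$, since otherwise $p=\bigvee S_p=\bigvee S_q=q$. Thus $p\mapsto S_p$ embeds $P$ into the nonempty subsets of the generating set, whence $|P|\leq 2^n-1$ and $P$ is finite. The main obstacle I anticipate is the bookkeeping around partiality in the induction, namely verifying that every sub-product arising in a defined parenthesization is itself defined, so that the induction hypothesis and $\eqref{eqn:lub}$ apply, and that the join-collapsing argument legitimately identifies all defined parenthesizations of a given word with a single element. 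Once $\eqref{eqn:lub}$ is in hand these checks are routine, so the real content is front-loaded into the equivalences of Propositions \ref{prop:ca_a_r} and \ref{prop:itolu}.
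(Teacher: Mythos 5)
Your proof is correct, but it is not the paper's argument: the paper gives no self-contained proof of this proposition at all. It simply recalls that Benjelloun et al. \cite{BGSWW} proved finiteness of the merge closure $\rst{I}$ under \eqref{eqn:ei}, \eqref{eqn:ec}, \eqref{eqn:ea}, and \eqref{eqn:er}, and states Proposition~\ref{prop:pgfiniteness} as the translation of that result into the partial groupoid setting. What you have done is reconstruct, inside the paper's own machinery, the reason the result is true: \eqref{eqn:c} gives \eqref{eqn:wc} and \eqref{eqn:sym} (Proposition~\ref{prop:ca_a_r}$(i)$), \eqref{eqn:a} together with \eqref{eqn:r} gives \eqref{eqn:cass} (Proposition~\ref{prop:ca_a_r}$(ii)$), hence $\leq$ is a genuine partial order (Lemma~\ref{lemma:partialorders}) and composition computes least upper bounds (Proposition~\ref{prop:itolu}, whose hypotheses you correctly check are implied by the four assumed properties). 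Your induction on word length is sound: the paper's inductive definition of the product of subsets guarantees that any defined value of $x_1\cdots x_m$ factors as $y\circ z$ with $y$, $z$ defined values of the two shorter sub-products, and your poset identity $(\bigvee A)\vee(\bigvee B)=\bigvee(A\cup B)$ is justified exactly as you say, since the upper bounds of $A\cup B$ are precisely the common upper bounds of $\bigvee A$ and $\bigvee B$ (this uses transitivity, which you have). Consequently every element of $P$ is the join of the set of generators occurring in any word representing it, the map to nonempty subsets of $\{p_1,\cdots,p_n\}$ is injective by uniqueness of least upper bounds, and $|P|\leq 2^n-1$. Your route buys two things the paper's citation does not: a proof that lives entirely within the partial groupoid formalism rather than deferring to \cite{BGSWW}, and an explicit quantitative bound on $|P|$; the paper's approach buys only brevity.
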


The study of finiteness conditions for semigroups is to give some conditions that can imply the finiteness of the semigroup.
Among these conditions, being finitely generated is usually required.
The finiteness of a semigroup has been studied extensively, e.g., \cite{BL, cv91, cv99}. 

An element $s$ in a semigroup (groupoid or group) is called {\em periodic} if $[s]$, the subsemigroup (subgroupoid, subgroup)
generated by $s$, is finite.
 A semigroup (groupoid or group) is called {\em periodic} if all of its elements are periodic. 
{\em The Burnside problem}, posed by William Burnside in 1902 \cite{B}, asked if a finitely generated periodic group is finite.
Golod \cite{G} provided a counter-example to the Burnside Problem. That is, there exists an infinite, finitely generated, periodic group.

The Burnside problem was subsequently extended to semigroups.
The study of whether a finitely generated periodic semigroup is finite,  is called {\em Burnside problem for semigroups}.
Morse and Hedlund gave a negative answer to Burnside problem for semigroups \cite{MH, cv99}.

{\em Burnside problem for partial groupoids} asks when a finitely generated partial periodic groupoid 
$P=[p_1,\cdots,p_n]$ is finite.

If $n=1$, then the finiteness of $P =\cup_{i=1}^{+\infty}p_1^i=\{p_1,\cdots,p_1^k,\cdots\}$
is equivalent to the periodicity of $p_1$. To keep things simple, we require each generator $p_i$ is idempotent. 

Let $([p_1,\cdots,p_n],D,\circ)$ be a finitely generated partial groupoid satisfying $\eqref{eqn:sym}$, $\eqref{eqn:cass}$,
and $\eqref{eqn:nr}$. 
By Proposition \ref{prop:tocomm},
there are a finitely generated partial groupoid 
$([q_1,\cdots,q_n],E,\cdot)$ satisfying  $\eqref{eqn:cass}$, $\eqref{eqn:c}$, and $\eqref{eqn:nr}$ and a  surjective partial
groupoid homomorphism 
$$f: ([p_1,\cdots,p_n],D,\circ)\rw ([q_1,\cdots,q_n],E,\cdot).$$
Since $\eqref{eqn:cass}$, $\eqref{eqn:c}$, and $\eqref{eqn:nr}$ imply $\eqref{eqn:i}$, $\eqref{eqn:c}$, $\eqref{eqn:a}$,
and $\eqref{eqn:r}$,
$([q_1,\cdots,q_n],E,\cdot)$ is finite by Proposition \ref{prop:pgfiniteness}.

For each $p\in [p_1,\cdots,p_n]$, by Proposition \ref{prop:tosemi}
the equivalence class $\rst{p}$, defined in Section \ref{sect:commutativegroupoid},
is a semigroup with respect to $\circ$ and $x_1\cdots x_k=x_1x_k$ for all $x_1,\cdots,x_k\in \rst{p}$.
Since for all $x,y\in \rst{p}$, $xpy=xy$ and $pxyp=p$,
for each $x\in \rst{p}$ the length of $x$, e.g., the minimum $k$ such that
$$x=x_1\cdots x_k$$
with ``(" and ``)" being inserted binarily, where $x_i\in \{p_1,\cdots,p_n\}$, must be bounded.
Hence we have:

\begin{proposition}\label{pro:psfiniteness}
Each finitely generated partial groupoid $[p_1,\cdots,p_n]$, 
satisfying $\eqref{eqn:sym}$, $\eqref{eqn:cass}$,  and $\eqref{eqn:nr}$, is finite.
\end{proposition}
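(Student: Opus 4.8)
The plan is to transfer finiteness from the commutative quotient $\mathcal{Q}(P,D,\circ)=P/\!\!\sim_c$ back to $P$, one equivalence class at a time. First I would apply Proposition~\ref{prop:tocomm} together with its corollary to obtain a finitely generated commutative partial groupoid $(Q,E,\cdot)=([q_1,\ldots,q_n],E,\cdot)$ and the surjective homomorphism $f\colon P\rw Q$ sending $x\mapsto\rst{x}$. Since $P$ satisfies $\eqref{eqn:nr}$ and $\eqref{eqn:cass}$, the image $Q$ inherits $\eqref{eqn:i}$ (from $\eqref{eqn:nr}$), $\eqref{eqn:c}$, and, through Proposition~\ref{prop:ca_a_r}, both $\eqref{eqn:a}$ and $\eqref{eqn:r}$; hence Proposition~\ref{prop:pgfiniteness} makes $Q$ finite. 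As the fibres of $f$ are exactly the $\sim_c$-classes, Proposition~\ref{prop:tosemi}$(\ref{prop:tosemi_iii})$ exhibits $P=\bigsqcup_{x\in P}\rst{x}$ as a disjoint union of only finitely many classes, one for each element of $Q$. It therefore suffices to prove that every class $\rst{p}$ is finite.

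For that I would bound the length of the elements of a fixed class $\rst{p}$. By Proposition~\ref{prop:tosemi} the set $\rst{p}$ is a semigroup in which $x_1\cdots x_k=x_1x_k$, so it is a rectangular band and in particular $xpy=xy$ and $pxyp=p$ for all $x,y\in\rst{p}$. Writing a given $x\in\rst{p}$ in a shortest form $x=x_1\cdots x_l$ with $x_i\in\{p_1,\ldots,p_n\}$, I would track the chain of prefix images $\rst{x_1}\leq\rst{x_1x_2}\leq\cdots\leq\rst{x_1\cdots x_l}=\rst{p}$ in $Q$; this chain is nondecreasing in the natural order because compositions dominate their left factors by Lemma~\ref{lemma:comporder}, so it is strict at most $|Q|-1$ times and splits the word into at most $|Q|$ maximal blocks along which the prefix class is fixed. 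Along one block all prefixes lie in a single $\sim_c$-class, minimality forbids repeating a prefix value, and I would use $\eqref{eqn:i}$, $\eqref{eqn:cass}$, and Lemma~\ref{lemma:comp} to freeze the left-hand structure of these prefixes (concretely, $pxyp=p$ forces them to share a common left factor), leaving only the finitely many right-factor types coming from the generators to distinguish them. Bounding each block length this way bounds $l$ uniformly over $\rst{p}$, and since there are finitely many generators and finitely many parenthesizations of each bounded-length word, $\rst{p}$ is finite.

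Assembling the two steps, $P$ is a finite union of finite classes, hence finite. I expect the genuine difficulty to lie entirely in the block-length bound of the second paragraph: the pointwise identities $\eqref{eqn:i}$, $\eqref{eqn:sym}$, $\eqref{eqn:cass}$, and $\eqref{eqn:nr}$ are all satisfied by the infinite rectangular bands $L\times R$, which are infinite exactly when $L$ or $R$ is, so finite generation must be used in an essential way and cannot be replaced by the local relations alone. The part of the argument that uses it is precisely the reduction to the finitely many generator types of $Q$; morally this is the classical fact that a finitely generated band is finite, here adapted to the partial, noncommutative, Catenary-associative setting.
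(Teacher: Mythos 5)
Your outline is exactly the paper's outline: the same passage to the commutative quotient via Proposition~\ref{prop:tocomm}, the same appeal to Proposition~\ref{prop:pgfiniteness} for the finiteness of $Q$ (the paper likewise notes that $\eqref{eqn:cass}$, $\eqref{eqn:c}$, $\eqref{eqn:nr}$ imply $\eqref{eqn:i}$, $\eqref{eqn:c}$, $\eqref{eqn:a}$, $\eqref{eqn:r}$), the same decomposition of $P$ into the finitely many $\sim_c$-classes via Proposition~\ref{prop:tosemi}, and the same final step of bounding the generator-length of the elements of a single class. Where you differ is that you try to actually prove that length bound, whereas the paper disposes of it in one sentence (``the length of $x$ $\ldots$ must be bounded'') with no argument; your closing observation that $\eqref{eqn:i}$, $\eqref{eqn:sym}$, $\eqref{eqn:cass}$, $\eqref{eqn:nr}$ all hold in infinite rectangular bands, so that finite generation must be used essentially, is correct and applies verbatim to the paper's own one-line justification.

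However, your block argument does not close this gap, and its key sub-claim is false as stated. The parts that work: the prefix chain in $Q$ is monotone (Lemma~\ref{lemma:comporder}), so there are at most $|Q|$ blocks; minimality forbids repeated prefixes; and consecutive prefixes in a block share a left factor, since $u_i u_{i+1}=u_i(u_ix_{i+1})=(u_iu_i)x_{i+1}=u_{i+1}$ by $\eqref{eqn:i}$ and $\eqref{eqn:cass}$. But the claim that the surviving prefixes are then distinguished by ``finitely many right-factor types coming from the generators'' --- i.e.\ that the right type of $u_ig$ is controlled by $g$ alone --- already fails in the free band on three generators $a,b,c$, which satisfies all the hypotheses (it is total, associative, idempotent, hence satisfies $\eqref{eqn:sym}$, $\eqref{eqn:cass}$, $\eqref{eqn:nr}$, and here $\sim_c$-classes are exactly the content classes). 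There $u=abc$ and $u'=abcb$ lie in the same class and the same $\mathcal{R}$-class (common left data $ab$ with mark $c$), yet $ua=abca$ and $u'a=abcba$ lie in distinct $\mathcal{L}$-classes (final mark $b$ versus $c$); indeed $abc,\ abcb,\ abcba,\ abcbac,\dots$ is a single block of pairwise distinct prefixes, longer than the number of generators. The number of right types arising inside one block is bounded only by the number of $\mathcal{L}$-classes of the class itself, which is precisely the quantity you are trying to prove finite, so the argument as sketched is circular. In the total case the correct bound comes from the Green--Rees induction on the number of generators (right types of elements ending in $g$ correspond to elements of a band on fewer generators), and redoing that induction under mere Catenary associativity and partiality is the genuinely nontrivial missing content --- missing from your sketch, but equally missing from the paper's own proof.
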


In general, match and non-match labels for entity resolution is highly imbalanced, 
so the connected components in a finitely generated partial groupoid are not large.
In this paper, we do not focus on the finiteness conditions of a finitely generated partial groupoid,
which are of independent interest and are beyond the scope of the paper. 
We shall study this topic in another paper.

$\eqref{eqn:i}$, $\eqref{eqn:c}$, $\eqref{eqn:a}$, and $\eqref{eqn:r}$ are sufficient conditions for the
finiteness of a partial groupoid but not necessary as shown by the following example.
\begin{example}
Let $Q_2=\{a,b,c\}$ be the set of 3 distinct elements and define
\[
\begin{array}{ccc} 
a \star b  =  c, & b\star c =  b, &
c\star c  =  b,\\
\end{array}
\]
and no other compositions are defined. Then $(Q_2,\star)$ is a finite partial groupoid that 
does not satisfy any of $\eqref{eqn:i}$, $\eqref{eqn:c}$, $\eqref{eqn:a}$, and $\eqref{eqn:r}$.
\end{example}
\begin{proof}
\begin{description}
\item[$\eqref{eqn:i}$]
Since $c\star c=b$, $a\star a$ and  $b\star b$ are undefined, property $\eqref{eqn:i}$ dose not hold true.
\item[$\eqref{eqn:c}$]
As $a \star b =c$ and $b\star a$ is undefined, $(Q_2,\star)$ dose not satisfy property $\eqref{eqn:c}$.
\item[$\eqref{eqn:a}$]
Note that
$$(a\star b)\star c= c\star c =b
\mbox{ and } 
a\star (b\star c)=a\star b=c.$$
property $\eqref{eqn:a}$ dose not hold true.
\item[$\eqref{eqn:r}$]
Since $a\star b=c$, $a\sim b$. 
Note that $a\star b=c\nsim b$ as $c\star b$ is undefined.
Hence $(Q_2,\star)$ does not satisfy property $\eqref{eqn:r}$.
\end{description}
\end{proof}

To match the clients from multiple silos, one may want to identify the client records that have the maximal information,
in the merge closure.
However, given a database with certain operations, e.g., selection, sorting, join, 
we may want to find the smallest set of tables in the database,
which are irreducible generators for the database.

Once a finitely generated partial groupoid or semigroup, is finite and has a transitive order,
one can sort, select, and query over the partial groupoid using the results for transitive partial order in \cite{dkmrv}. 
Hence we consider entity resolution as
\begin{equation}\label{eqn:erpg}
\mbox{\em Sorting, selecting, and querying elements over a finitely generated partial groupoid}\tag{$ER_{pg}$}
\end{equation}
$\mbox{\em which is finite and partially ordered.}$

In real life, entities are not isolated but connected by their relationships,
e.g., customers are grouped by their social media relationships,
machine learning models are based on certain data sets related and machine learning models, on the other hand,
output data. When merging the entities matched, it is desirable that the relationships detected between entity objects are preserved.

An ER system can be modeled as a finitely generated partial groupoid $([p_1,\cdots,p_n],D,\circ)$ with certain properties that
guarantee: the finiteness of $[p_1,\cdots,p_n]$ and the existence of a natural partial order.
Assume that there is a set of initial relationships, represented by a directed graph $G_0$, a subgraph of the complete graph with the nodes from $[p_1,\cdots,p_n]$. The entity resolution with the relationships being preserved can then be modeled by
$([p_1,\cdots,p_n],(G_i),D,*)$, where $G_i$ is a sequence of the directed graph of the relationship, starting from $G_0$,
$*$ is the extension $\circ$, which includes merging elements in $[p_1,\cdots,p_n]$
and updating the relationship graph $G_{i-1}$ to obtain $G_i$,  
so that the previous relationships in $G_{i-1}$ are preserved.

\section{Conclusions}\label{chapter:conclusions}
An ER system $(\mathbb{E},\approx, \langle\,\rangle)$ is the same as a partial groupoid
$(P,D,\circ)$, where the binary operation $\circ$ is given by 
the merge function $\langle\,\rangle$ and the partiality $D$ of the operation $\circ$ is given by the match function $\approx$.
Given an instance $I$, the merge closure of $I$ turns out  to be the finitely generated partial subgroupoid $[I]$.

The properties: $\eqref{eqn:i}$, $\eqref{eqn:wc}$, $\eqref{eqn:a}$, 
$\eqref{eqn:r}$,  $\eqref{eqn:lub}$, and $\eqref{eqn:comp}$, match and merge functions share, and their relationships
on a partial groupoid, were studied. The natural partial order on a partial
groupoid was introduced when the partial groupoid satisfies $\eqref{eqn:i}$ and $\eqref{eqn:cass}$.
Given a partial order on a partial groupoid, 
$\eqref{eqn:lub}$ and $\eqref{eqn:comp}$ are equivalent to
$\eqref{eqn:i}$, $\eqref{eqn:wc}$, $\eqref{eqn:a}$, 
and $\eqref{eqn:r}$ and the partial order must be the natural one we defined
when the domain of the partial operation is reflexive.

The partiality of a partial groupoid was reduced using connected components and clique covers of its domain graph
and a noncommutative partial groupoid was mapped to a commutative one homomorphically, 
when it has the partial semigroup like structures.

In a finitely generated partial groupoid $(P,D,\circ)$ without any conditions required, 
the entity resolution $\eqref{eqn:er0}$ we concerned was defined to be the full elements in $P$.
If $(P,D,\circ)$ satisfies $\eqref{eqn:i}$ and $\eqref{eqn:cass}$,
then the entity resolution $\eqref{eqn:er1}$ was the maximal elements in $P$,
which are full elements and form the entity resolution $ER(P)$, defined in \cite{BGSWW}.
Furthermore, in the case, we considered entity resolution as $\eqref{eqn:erpg}$:
``sorting, selecting, and querying the elements in a finitely generated partial groupoid."



\end{document}